\theoremstyle{plain}
\newtheorem{theorem}{Theorem}[section]
\newtheorem{proposition}[theorem]{Proposition}
\newtheorem{lemma}[theorem]{Lemma}
\newtheorem{corollary}[theorem]{Corollary}
\theoremstyle{definition}
\theoremstyle{remark}
\newtheorem{remark}[theorem]{Remark}
\icmltitlerunning{Online Resource Allocation with Non-Stationary Customers}
\begin{document}
\twocolumn[
\icmltitle{Online Resource Allocation with Non-Stationary Customers}

% \icmlsetsymbol{equal}{*}

% \author{Xiaoyue Zhang \\
%         Institute of Operations Research and Analytics, National University of Singapore  \\
%         \texttt{xiaoyue.z@u.nus.edu}
%         \And
%         Hanzhang Qin \\
%         Institute of Operations Research and Analytics, National University of Singapore, \\
%         \texttt{hzqin@nus.edu.sg}
% % }
\begin{icmlauthorlist}
\icmlauthor{Xiaoyue Zhang}{yyy}
\icmlauthor{Hanzhang Qin}{yyy}
\icmlauthor{Mabel C. Chou}{yyy}

\end{icmlauthorlist}

\icmlaffiliation{yyy}{Institute of Operations Research and Analytics, National University of Singapore, Singapore 117602}

% % \icmlaffiliation{sch}{School of ZZZ, Institute of WWW, Location, Country}

\icmlcorrespondingauthor{Xiaoyue Zhang}{xiaoyue.z@u.nus.edu}
\icmlcorrespondingauthor{Hanzhang Qin}{hzqin@nus.edu.sg}
\icmlcorrespondingauthor{Mabel C. Chou}{mabelchou@nus.edu.sg}
% You may provide any keywords that you
% find helpful for describing your paper; these are used to populate
% the "keywords" metadata in the PDF but will not be shown in the document
% \icmlkeywords{Machine Learning, ICML}

\vskip 0.3in
]

% this must go after the closing bracket ] following \twocolumn[ ...

% This command actually creates the footnote in the first column
% listing the affiliations and the copyright notice.
% The command takes one argument, which is text to display at the start of the footnote.
% The \icmlEqualContribution command is standard text for equal contribution.
% Remove it (just {}) if you do not need this facility.

 \printAffiliationsAndNotice{}  % leave blank if no need to mention equal contribution
% % otherwise use the standard text.

\begin{abstract}
We propose a novel algorithm for online resource allocation with non-stationary customer arrivals and unknown click-through rates. We assume multiple types of customers arrive in a nonstationary stochastic fashion, with unknown arrival rates in each period, and that customers' click-through rates are unknown and can only be learned online. By leveraging results from the stochastic contextual bandit with knapsack and online matching with adversarial arrivals, we develop an online scheme to allocate the resources to nonstationary customers. We prove that under mild conditions, our scheme achieves a ``best-of-both-world'' result: the scheme has a sublinear regret when the customer arrivals are near-stationary, and enjoys an optimal competitive ratio under general (non-stationary) customer arrival distributions. Finally, we conduct extensive numerical experiments to show our approach generates near-optimal revenues for all different customer scenarios. 
\end{abstract}

\section{Introduction}

The realm of online resource allocation, critical in fields ranging from online advertising to traffic management, poses a substantial challenge: how to effectively and dynamically distribute limited resources in response to ever-evolving consumer behaviors. This task is particularly arduous in environments where consumer preferences fluctuate rapidly, rendering traditional static allocation models ineffective.

Addressing this challenge, we introduce the Unified Learning-while-Earning (ULwE) Algorithm, a novel approach embedded within the Contextual Bandit with Knapsacks (CBwK) framework. The ULwE algorithm stands out for its real-time adaptability to changing consumer preferences and uncertain click-through rates, a notable departure from traditional methods that often fail to capture the non-stationary nature of customer arrivals.

This paper is structured as follows: \cref{sec: main contribution} summarizes our main contributions. \cref{sec: literature} reviews relevant literature, setting the stage for our innovation. \cref{sec:model} introduces our models for online resource allocation, adapted for non-stationary environments. \cref{section:algo} discusses the ULwE Algorithm in detail, presenting our unified theoretical framework. Finally, \cref{sec: experiment} validates our algorithm through empirical studies on simulated data and \cref{sec: conclusion} concludes the paper.

\subsection{Basic Setup and Contributions}
\label{sec: main contribution}
\subsubsection{Basic setup}
Our online resource allocation problem encompasses $n$ resources, each defined by a revenue parameter ($r_i$), capacity ($c_i$), and a latent variable ($\theta_i$) within a predefined space $\Theta$. It operates over $T$ time periods, with each period $t \in [T]$ witnessing the arrival of a customer, characterized by a feature vector $x^t \in \mathbb{R}^d$. The likelihood of a customer purchasing from resource $i$ is modeled by $f_i(x^t, \theta_i^*)$, where $\theta_i^*$ is the true value of $\theta_i$, and $f_i(x^t, \theta_i) \leq 1$ ensures model validity. We assume the $\theta_i^*$s are unknown a priori, and we use $|\Theta|$ to denote the cardinality of the parameter space $\Theta$. 

The context vector $x^t$ for each time period $t$ is independently drawn from a distribution $P(x^t = x^{(l)}) = \mu_l^t$ for each customer type $l \in {1, 2, \ldots, L}$. Given the variability of this distribution across different time periods, the customer arrival process is inherently non-stationary. For every customer type represented by $x^{(l)}$, the expected number of arrivals, denoted as $\lambda_l$, is computed over the time horizon as $\lambda_l = \sum_{t=1}^T \mu_l^t$. We assume all $\mu_l^t$s are unknown but the $\lambda_l$s are given in the beginning. In practice, $\lambda_l$s correspond to the total arrival rate of the contexts over a long horizon and can typically be estimated with high accuracy. This information can be regarded as advice, and it is shown by \cite{Lyu2023NonStationaryBW} without any advice the nonstationary BwK (CBwK with identical contexts) admits a worst-case regret linear in $T$.

\subsubsection{Main Contributions}
Our primary contribution lies in the development of the ULwE Algorithm, a novel approach within the CBwK framework under non-stationary arrivals. This algorithm uniquely offers simultaneous guarantees of sub-linear regret and a constant competitive ratio (CR), effectively addressing the complexities of unknown click-through rates and non-stationary customer arrivals in dynamic environments.

\textbf{A Unified Learning-while-Earning Algorithm Framework with Sublinear Regret and Constant CR.} We propose a unified study of two essential performance guarantees in online resource allocation: regret and CR. Achieving sublinear regret is crucial in online advertising and resource allocation systems, as it signifies that the regret incurred by the algorithm grows at a slower rate compared to the time horizon. Additionally, the constant CR ensures that our algorithm achieves a consistently high-performance level in comparison to an optimal offline policy. While each guarantee has been explored separately, our main contribution lies in investigating them together within a unified algorithm framework and achieving the best of both worlds. 
\begin{figure}[ht]
\vskip 0.2in
\begin{center}
\centerline{\includegraphics[width=0.8\columnwidth]{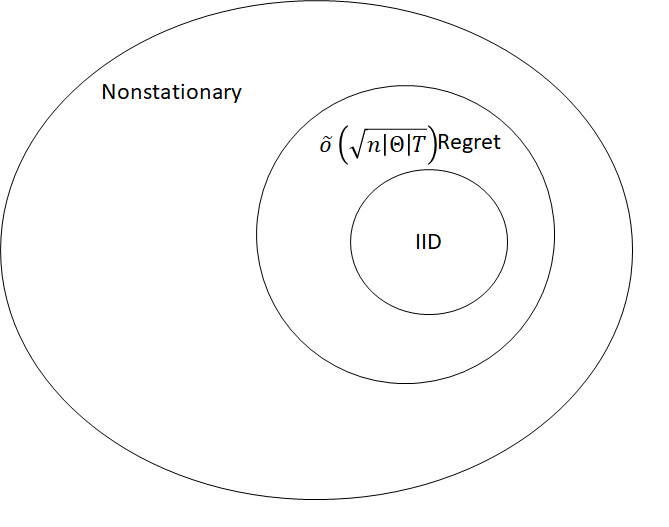}}
\caption{Problem categories}
\label{fig:regime}
\end{center}
\vskip -0.2in
\end{figure}

\cref{fig:regime} illustrates a graphical representation of three ellipsoids to visually depict this concept. The smallest ellipsoid represents the problem category of IID arrivals. The intermediate ellipsoid represents problems under near-i.i.d. arrivals scenario, in which our algorithm achieves an expected regret bound of $\tilde{O}(\sqrt{n|\Theta|T})$. The largest ellipsoid encompasses all types of nonstationary customer arrivals, including the most challenging scenarios. In this case, our algorithm provides a unified regret bound, which offers sublinear regret with a constant CR performance, allowing for effective resource allocation even in highly dynamic environments. Overall, under nonstationary arrivals, our ULwE algorithm recovers $\tilde{O}(\sqrt{n|\Theta|T})$ expected regret under near-i.i.d. arrivals (the exact condition is specified in \cref{section:algo}) and guarantees in general
\begin{align*}
\text{OPT} &\leq \left(1+ \frac{(1+\min\limits_{i\in [n]} c_i) \left(1-e^{-1/\min\limits_{i\in [n]}c_i }\right)}{1-1/e}\right)\mathbb{E}[\text{ALG}] \\
&\quad  +\tilde{O}(\sqrt{n|\Theta|T}),
\end{align*}

where $\text{OPT}$ denotes the expected revenue the optimal online algorithm that knows the true latent variable achieves, $\text{ALG}$ denotes the revenue our proposed online learning algorithm achieves (the latent variable is unknown in priori). 
When $\min\limits_{i\in [n]} c_i$ is sufficiently large, it can be shown the above guarantee can be rewritten as $\text{OPT}\leq (1+1/(1-1/e))\mathbb{E}
[\text{ALG}]+\tilde{O}(\sqrt{n|\Theta|T})$, or equivalently, $\mathbb{E}[\text{ALG}]\geq (1-1/e)\text{OPT} - \tilde{O}(\sqrt{n|\Theta|T})$. This guarantee is the tightest possible given the underlying contextual arrival process is adversarial and the click-through rates are unknown (see Theorem 4, \citealt{wangchi2022}). 

 \textbf{Non-Stationary Customer Arrivals.} 
Our approach to non-stationary customer arrivals encompasses two distinct scenarios: those with stochastic distributions and adversarial ones. For stochastic distributions, we employ past data to create empirical distributions, aiding in future predictions. This is achieved using the Upper Confidence Bound (UCB) algorithm and the Deterministic Linear Programming (DLP) formulation, optimizing resource allocation by maximizing expected revenue within the confidence bounds of demand rates.

In contrast, for adversarial or unknown distribution scenarios, we adopt a greedy algorithm, as suggested by \cite{wangchi2022}, which factors in discounted revenue and inventory constraints. This method ensures efficient allocation by considering both revenue maximization and resource limitations.

Our Unified Learning-while-Earning (ULwE) algorithm amalgamates these strategies, seamlessly transitioning between them based on the nature of customer arrivals. This integration, alongside inbuilt condition checks, endows our algorithm with the versatility to operate effectively across varying arrival patterns without prior knowledge of their sequence type. Such adaptability makes it well-suited for dynamic real-world contexts, addressing the challenges posed by diverse customer arrival processes.

\textbf{Unknown Click-Through Rates.} 
Our algorithm addresses the complexities introduced by unknown click-through rates in online advertising and resource allocation systems. It dynamically estimates these rates by leveraging historical data alongside current information, ensuring adaptive and informed decision-making.

Central to our approach is the use of a parametric model with a pre-defined prior distribution for the parameter $\theta \in \Theta$, allowing us to capture the nuanced relationships between different customer types and their purchasing probabilities. Initially, our algorithm samples a subset of $\theta$ values from existing data to construct the set $\Theta$, which is then continuously refined as new data becomes available. This process enables our algorithm to evolve with and respond to the changing click-through rate landscape, ensuring resource allocation decisions are both current and data-informed.

This strategy of integrating historical insights with real-time updates provides a well-rounded perspective on customer behavior, crucial for optimizing resource allocation in the dynamic field of online advertising.
\subsection{Literature Review}
\label{sec: literature}
\textbf{Contextual Bandit with Knapsack under Non-Stationary Arrivals.}
CBwK problems in non-stationary environments, integral to our research, have been a focal point in recent studies. Chen et~al. \yrcite{chen2013combinatorial} were pioneers in contextualizing CBwK within online advertising, proposing a novel greedy algorithm focusing on the reward-to-cost ratio. This groundbreaking approach opened avenues for diverse methodologies. Badanidiyuru et~al. \yrcite{badanidiyuru2014resourceful} introduced a probabilistic method, selecting optimal strategies from policy sets. Agrawal et~al. \yrcite{Agrawal2016} followed with a computationally efficient variant. Complementing these, Agrawal and Devanur \yrcite{AgrawalDevanur2016} developed LinCBwK, which utilizes optimistic estimates from confidence ellipsoids to adjust rewards for each action. Recently, Slivkins and Foster \yrcite{Slivkins2022EfficientCB} integrated LagrangeBwK \cite{Immorlica2019} with SquareCB, merging computational efficiency with statistical optimality.

In tackling non-stationary BwK problems without contextual information, Immorlica et~al. \yrcite{Immorlica2019} achieved an $O(\log T)$ CR against fixed distribution benchmarks in adversarial settings. This method's potential for sub-linear-in-T regret under specific conditions was further explored by Sivakumar et~al. \yrcite{Sivakumar2022SmoothedAL}, Liu et~al. \yrcite{Liu2022NonstationaryBW}, and Lyu et~al. \yrcite{Lyu2023NonStationaryBW}, offering critical insights into adaptable algorithms in dynamic environments.

\textbf{Online Resource Allocation.}
Our work intersects significantly with online resource allocation (also known as the AdWord problem, see \citealt{mehta2007adwords}), particularly in the assignment of resources to real-time, random demands. The CR serves as a key performance metric in this domain. Karp et~al.'s investigation \yrcite{Karp} into one-sided bipartite arrivals revealed the optimality of a simple $1/2-$competitive greedy algorithm among deterministic approaches. Mehta et~al.'s study \yrcite{1530720} on the Adwords problem, modeled as a linear program (LP), led to an online algorithm with a $1- 1/e$ CR based on primal-dual LP analysis. Fahrbach et~al. \yrcite{fahrbach2022edge} further broke new ground with the online correlated selection subroutine, surpassing the $1/2$ barrier and reaching a CR of at least $0.5086$. 

The works most aligned with ours include those by Ferreira et al. \yrcite{ferreira2018online}, Zhalechian et~al. \yrcite{zhalechian2022online}, and Cheung et~al. \yrcite{wangchi2022}. Ferreira et al.'s \yrcite{ferreira2018online} dynamic pricing algorithms, grounded in Thompson sampling, highlighted versatility in handling resource-constrained multi-armed bandit problems. Zhalechian et al. \yrcite{zhalechian2022online} addressed personalized learning resource allocation with a novel general Bayesian regret analysis, adept for adversarial customer contexts. Cheung et~al. \yrcite{wangchi2022} focused on the allocation of limited resources over time to heterogeneous customers, a methodology that has significantly influenced our approach to CR analysis.

\section{Model}
\label{sec:model}
Our objective is to maximize the total reward obtained from customers' clicks or purchases while considering budget constraints for the available resources or advertisers. We consider a system with $n$ resources, resource $i\in[n]$ has an initial budget value $c_i$. Customers arrive at the system from period $1$ to $T$. When a customer arrives, we observe the feature vector $x \in \mathbb{R}^d$ of the customer, where $d$ is the length of the feature vector. At this point, the system must make a decision: either reject the customer irrevocably or assign a resource to the customer immediately. If the resource $j$ is assigned, the customer's likelihood of purchasing the resource is determined by the probability function $f_j(x)$ associated with that resource $j$ and the customer's feature vector $x$. If the customer makes a purchase, the system earns a reward $r_j$ that is deducted from the budget of the assigned resource. If the customer does not make a purchase, no reward is earned.

To address this complex problem, our model incorporates two key elements: non-stationary customer arrivals and unknown click-through rates. We recognize that customer behavior may change over time, requiring the system to adapt to these fluctuations. Additionally, the exact click-through rates, which represent the likelihood of customers clicking on a resource or making a purchase, are initially unknown. However, we utilize extensive data to estimate and update these rates in real-time.

\subsection{Handling Non-Stationary Customer Arrivals}

In real-world systems, customer arrivals often exhibit non-stationary behavior, reflecting temporal variations in customer types. For example, students accessing a system are likely to show different patterns during mornings, evenings, and class hours. We categorize non-stationary customer arrivals into two types for our model:

\begin{itemize}
\item \textbf{Stochastic Arrivals with Non-Stationary Distributions:} These are regular scenarios without disruptive events. Past data can be used to learn distributions for future arrivals.
\item \textbf{Adversarial Arrivals with Agnostic Distributions:} Such scenarios occur during unpredictable events (e.g., Black Friday), where arrivals are assumed to be controlled by an adversary.
\end{itemize}

Our model initially focuses on strategies for stochastic arrivals, then extends to address adversarial scenarios, ensuring adaptability to various real-world contexts. We assume no prior knowledge of arrival patterns except for the total arrival rate $\lambda_j$ for each customer type $j \in [L]$. This approach, suitable for both IID and adversarial sequences, addresses the lack of adaptability of previous approaches since it is typically impossible to tell in advance whether future arrivals will be near-stationary or not.

\subsection{Modeling Unknown Click-Through Rates}
Addressing unknown click-through rates (CTR) is pivotal, particularly in contexts with ample historical data. While historical data provides insights into individual preferences, actual CTRs are influenced by recent factors like ad quality and competitor promotions. Our model adopts a parametric approach, beginning with a known prior distribution for $\theta \in \Theta$. This setup allows for a flexible, parametric formulation of the purchase probability function $f(x^t, \theta)$, where $x^t \in \mathbb{R}^d$ is the feature vector for customer arrivals at time $t \in [T]$. 

An example model choice is the Polynomial logistic regression model \citep{richardson2007predicting}, with $\theta$ representing polynomial term parameters. To utilize historical data, our algorithm samples a subset of $\theta$ values initially, forming the set $\Theta$. This set is dynamically updated over time, enabling the algorithm to adapt to changing conditions and refine decision-making processes continually. In practice, $\Theta$ can include any type of the parameters involved in a parameter set associated with a particular machine learning model (e.g., a Large Language Model) for CTR prediction.

\subsection{An Upper Bound on the Optimal Revenue}
In this section, our primary objective is to establish a theoretical upper bound on the optimal revenue in our model, denoted as $\overline{\text{OPT}}$. This upper bound provides a crucial benchmark for assessing the performance of various algorithms and understanding their efficiency and effectiveness in different resource allocation scenarios.

To establish an upper bound, we introduce a deterministic linear programming model $J^D$, which aims to maximize the deterministic revenue given the capacities and customer arrival patterns:
\begin{equation}
\begin{aligned}
J^D(c,t) = \max&_{s_{ij}, i\in [n], j\in[L]}  \sum_{i=1}^n \sum_{j=1}^L \sum_{s=1}^t r_i s_{ij}^s f_i(x^j,\theta^*) \\
 \text{s.t. } &\sum_{j=1}^L \sum_{s=1}^t {\mu_j}^s s_{ij}^s f_i(x^j,\theta^*) \leq c_i, \forall i \in [n] \\
& \sum_{i=1}^n\sum_{s=1}^t s_{ij}^s = \sum_{s=1}^t {\mu_j}^s, \forall j \in [L] \\
& s_{ij}^s \geq 0 ,\forall i \in [n], \forall j \in [L] \label{eq:benchmark}
\end{aligned}
\end{equation}

We establish that:
\begin{equation*}
\text{OPT} \leq J^{D}(c, t), 
\end{equation*}
indicating that $J^{D}(c, t)$ serves as an upper bound for the optimal revenue. Consequently, we define regret as the difference between this upper bound and the actual optimal revenue achieved, formulated as:
$\text{Regret} = \overline{\text{OPT}} - \text{ALG}.$

This quantifies the efficiency of different algorithms in approaching the theoretical maximum revenue. For a comprehensive understanding of the problem formulation, constraints, and detailed derivations, please refer to the \cref{appendix: benchmark}, which includes the full mathematical treatment and theoretical analysis supporting these findings.

\section{Algorithm and Analysis}
\label{section:algo}
We propose the ULwE algorithm, displayed in \cref{alg:algoULwE}, which is applicable in both stochastic and adversarial environments. The algorithm design involve constructing switch strategy to address the model uncertainty on $\mu$, as discussed in \cref{sec:alULwE}. In \cref{sec:main results}, we provide a regret upper bound to ULwE, and demonstrate the scheme has a sublinear regret when the customer arrivals are near-stationary, and enjoys an optimal competitive ratio under general (non-stationary) customer arrival distributions. In \cref{sec:proofsketch} we provide a sketch proof of the regret upper bound, where the complete proof is in \cref{Asec: regret proof}

\begin{algorithm}[tb]
   \caption{Unified Learning-while-Earning (ULwE)}
   \label{alg:algoULwE}
\begin{algorithmic}
   \STATE {\bfseries Input:} Resource capacities $c$, time horizon $T$, customer types $L$
   \STATE Initialize $\Omega_i^0 = \Theta_i$ for all $i \in [n]$
   \FOR{$t=1$ {\bfseries to} $T$}
   \IF{switch = FALSE}
   \STATE $I^t=\text{ALG}_{\text{LP}}(c,T,L,\Omega_i^{t-1})$.
   \STATE Check if conditions \eqref{eq:condition1} and \eqref{eq:condition2} for switching are met.
   \IF{any condition is violated}
   \STATE switch = TRUE.
   \ENDIF
   \ELSE
   \STATE  $I^t=\text{ALG}_{\text{ADV}}(c,T,L,\Omega_i^{t-1})$.
   \ENDIF
   \STATE Check if conditions \eqref{eq:update1} and \eqref{eq:update2} for updating $\Omega_i$ are met.
    \IF{any condition is violated}
   \STATE Remove $\bar \theta^t$ from $\Omega_{I^t}^t$
   \ELSE 
   \STATE Set $\Omega_i^t = \Omega_i^{t-1}$ for all $i\in [n]$
   \ENDIF
   \ENDFOR
\end{algorithmic}
\end{algorithm}

\subsection{The ULwE Algorithm}
\label{sec:alULwE}
In this section, we explore the ULwE algorithm (\cref{alg:algoULwE}), which is applicable in both stochastic and adversarial environments. ULwE is an evolution of the UCB paradigm, integrating aspects of inventory balancing with a penalty function \cite{doi:10.1287/mnsc.2014.1939}. A distinctive feature of ULwE is its dynamic approach: the algorithm continually updates the parameter set $\theta$ and monitors capacity constraints to determine the necessity of a policy switch. Specifically, ULwE adapts its policy to best suit customer patterns, whether they follow IID or adversarial arrival sequences. This strategy of switching sets ULwE apart from the traditional UCB-based methods used in the stochastic context \cite{AgrawalDevanur2016} and the strategies applied to the adversarial Bandit with Knapsack (BwK) settings \cite{Immorlica2019}I. While ULwE shares similarities with previous works on stochastic and adversarial bandits with knapsacks in the methodologies of arm selection, which involves LP optimization and inventory balancing, and in its fundamental updating mechanism, which is based on the UCB approach.

The algorithm operates through three crucial stages: selecting an arm to obtain rewards and consumption vectors, updating the confidence set $\Theta$, and assessing whether to switch states.

\textbf{Arm Selection.} In each period $t$, our algorithm offers a resource $I^t$  from a set of $n$ resources to the customer. This process constitutes the arm selection phase, which is executed differently under two distinct protocols: $\text{ALG}_{\text{LP}}$ and  $\text{ALG}_{\text{ADV}}$, catering to different customer arrival patterns. 

For $\text{ALG}_{\text{LP}}$ protocol (\cref{alg:alglp}) designed for environments with stationary customer arrivals, we assume that the probability of each customer type $l$ arriving at time $t$ remains constant i.e. $\mu_l^t=\mu_l$ for all $t\in [T]$. The algorithm solves a LP problem at each step to maximize revenue, constrained by inventory limits and the requirement that the sum of probabilities of offering all resources to each customer equals one. The LP formulation is as follows:
$ U^t =  $:
\begin{align}
\label{eq:LP2}
\begin{split}
\max_{s_{ij}, i\in [n], j\in[L]} & \sum_{i \in [n]} r_i \sum_{j \in [L]}\lambda_j s_{ij} \bar f_i (x^{(j)}, \Omega_i^{t-1})\\
\text{s.t. } & \sum_{j \in [L]} \lambda_j s_{ij} \bar f_i (x^{(j)}, \Omega_i^{t-1}) \leq c_i, \ \forall i \in [n]\\
& \sum_{i\in [n]}s_{ij}=1,~\forall j\in [L]\\
&  s_{ij} \geq 0, \ \forall i \in [n], j \in [L].
\end{split}
\end{align}
Here, the function $\bar f_i(x, \Omega)$ calculates the maximum purchase probability for resource $i$ given a set $\Omega$ of valid latent variables, i.e.$\bar f_i(x, \Omega) = \max_{w \in \Omega} \{ f_i(x, w)\}$.

In contrast, $\text{ALG}_\text{ADV}$ protocol (\cref{alg:algadv}) is designed to handle the dynamic and unpredictable nature of customer behavior and preferences. In this situation, the underlying distribution of $x_t$ no longer exists. Instead, the value of $x_t$ is chosen by an adversary in each period $t$. \cref{alg:algadv} computes a real-valued function,  $\Psi(x) := \frac{e^x - 1}{e - 1}$, defined over the interval $[0,1]$. This function is instrumental in adjusting the revenue values $r^t_i$ for each resource $i$, taking into account the resource's past utilization and its capacity. The adjusted revenue, denoted by $r_i^t$, is calculated as $r_i \times \left( 1 - \Psi \left( \frac{N_i^{t-1}}{c_i} \right) \right)$, where $N_i^{t-1}$ represents the previous consumption of resource $i$. The algorithm then selects the resource $I^t$ that maximizes the upper confidence bound of the single period revenue, taking into account the modified revenue values and the confidence intervals defined by the set of valid latent variables $\Omega_i^{t-1}$.

\begin{algorithm}[tb]
   \caption{ $\text{ALG}_{\text{LP}}$ Protocol}
   \label{alg:alglp}
\begin{algorithmic}
   \STATE {\bfseries Input:} Resource capacities $c$, time horizon $T$, customer types $L$
   \STATE Initialize $\Omega_i^0 = \Theta_i$ for all $i \in [n]$
   \FOR{$t=1$ {\bfseries to} $T$}
   \STATE Solve LP in \cref{eq:LP2} to obtain $\bar s^t, \bar \gamma^t$
    \STATE Select resource $i$ with probability $\bar s_{iJ^t}$.
   \ENDFOR
\end{algorithmic}
\end{algorithm}

\begin{algorithm}[tb]
   \caption{$\text{ALG}_{\text{ADV}}$ Protocol}
   \label{alg:algadv}
\begin{algorithmic}
   \STATE {\bfseries Input:} Resource capacities $c$, time horizon $T$, customer types $L$
   \STATE Initialize $\Omega_i^0 = \Theta_i$ for all $i \in [n]$
   \FOR{$t=1$ {\bfseries to} $T$}
   \STATE Observe the context $x^t$ of the new arrival in period $t$.
   \STATE Select $I^t = \arg\max\limits_{i\in [n]} r_i^t \bar{f}_i(x^t, \Omega_i^{t-1})$.
   \ENDFOR
\end{algorithmic}
\end{algorithm}

\textbf{Update Confidence Set.} In the concept of the UCB strategy, a key aspect is the continuous updating of its confidence set. In our algorithm, this process begins with the formation of a theta set $\Theta$, encompassing a set of possible parameter values. From this set, the confidence set $\Omega_i^t$ is constructed. After selecting the resource, the algorithm identifies the maximizer $\bar \theta$ of the purchase probability for that resource. The updating mechanism involves two critical equations, for all $ \theta \in \Omega^{t-1}$:
\begin{align}
&\left| \sum_{t' \in D_{I^{t}}^t(\bar \theta^t)}(f_{I^{t'}}(x^{t'},\bar \theta^t) - a^{t'})\right| \leq \sqrt{t\log\frac{2t}{\beta(n,T)}} \label{eq:update1} \\
&\left| \sum_{t'\in D_{I^t}^t(\bar \theta^t)} (f_{I^{t'}}(x^{t'},\bar \theta^t)- f_{I^{t'}}(x^{t'},\theta) ) \right|\leq \sqrt{t\log\frac{2t}{\beta(n,T)}} \label{eq:update2}
\end{align}
Here, $\beta(n,T)$ is set as $1/(nT)$. The set $D^t_i$ tracks the periods up to time $t$ where resource $i$ is offered. The customer's purchase decision is represented by $a^t \in {0,1}$.

\cref{eq:update1} ensures that the accumulated regret, the difference between predicted and actual purchases, remains within a predefined threshold. If this threshold is exceeded, $\bar \theta$ is removed from the confidence set, refining the purchase probability estimates. \cref{eq:update2} is crucial in maintaining the stability of the algorithm under near-IID arrival patterns. It limits the occurrence of a switch, fostering a more consistent and effective resource allocation strategy and leading to sublinear regret in these situations.

At its core, this iterative refinement process, by discarding less probable theta values, sharpens the focus of the confidence interval on the most promising parameter values. This refined approach, underpinned by constantly updated data, significantly enhances the algorithm’s ability to make efficient and accurate resource allocation decisions.

\textbf{Switch State Checking.} The ULwE algorithm checks if it has switched or not in each time step. When not switched, the algorithm checks two conditions. The switching conditions involve two key inequalities which are checked at each time step $t$. For all $\theta\in \Omega^{t-1}$:
\begin{align}
 & \left| \sum_{l=1}^t \sum_{i=1}^n r_i (s_{iJ^l}(\theta)-\bar s_{iJ^t}^l) f_{iJ^l}(x^{(J^t)},\theta) \right| \nonumber \\
 &\quad \leq  \max\limits_{i\in [n]}r_i \sqrt{32t\log(4|\Theta|t/\beta(n,T)) }, \label{eq:condition1}
 \end{align}
For all $i\in [n]$
\begin{align}
\sum_{l=1}^t \bar s_{iJ^l}^l\bar f_i(x^{(J^l)},\Omega^{l-1})  \leq    \frac{t}{T}c_i +\sqrt{2t\log(2t/\beta(n,T))} \label{eq:condition2}
\end{align} \cref{eq:condition1} ensures that the estimated upper bound regret does not grow beyond a sublinear rate. This condition measures the algorithm's performance in terms of regret, which quantifies the deviation between the total expected reward obtained by the algorithm and the maximum achievable reward that an optimal offline algorithm could obtain. %By checking that the regret remains sublinear, the algorithm aims to minimize the difference between its performance and the optimal performance. 
\cref{eq:condition2} checks whether the remaining capacity of the resources is sufficient. This condition ensures that the cumulative resource allocation does not exceed the available capacity plus a certain tolerance level. It takes into account the capacity constraints imposed by the resources and ensures that the algorithm does not allocate more resources than what is available. If both conditions hold, the algorithm continues to the next time step. However, if any of the conditions is violated, indicating either excessive regret growth or insufficient resource capacity, the algorithm switches from the current algorithm ($\text{ALG}_{\text{LP}}$) to $\text{ALG}_{\text{ADV}}$. 

\subsection{Performance Guarantees of ULwE}
\label{sec:main results}
The following theorem provides a regret upper bound for \cref{alg:algoULwE}:

\begin{theorem}
\label{prop:hybrid regret}
In any case of nonstationary arrivals, the algorithm guarantees 
\begin{align*}
\text{OPT}\leq &\left(1+ \frac{(1+\min\limits_{i\in [n]} c_i) \left(1-e^{-1/\min\limits_{i\in [n]}c_i }\right) }{1-1/e}\right)  \mathbb{E}[\text{ALG}]\\
&+\tilde{O}(\sqrt{n|\Theta|T} ).
\end{align*}
When the arrivals are stationary, the algorithm guarantees
\begin{align*}
\text{OPT}\leq & \mathbb{E}[\text{ALG}]+\tilde{O}(\sqrt{n|\Theta|T} ).
\end{align*}
\end{theorem}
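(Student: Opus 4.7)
The plan is to decompose the analysis around the (random) switching time $\tau$ at which the algorithm transitions from $\text{ALG}_{\text{LP}}$ to $\text{ALG}_{\text{ADV}}$, setting $\tau = T+1$ if no switch ever occurs. First, I would define a ``good event'' $\mathcal{G}$ on which (i) the true parameter $\theta_i^*$ remains in every confidence set $\Omega_i^t$ for all $t$ and all $i$, and (ii) all of the deviation inequalities underlying conditions \eqref{eq:update1}--\eqref{eq:condition2} hold simultaneously. Using the union bound over the at most $n|\Theta|T$ relevant concentration events together with Azuma--Hoeffding at failure parameter $\beta(n,T) = 1/(nT)$, I would show $\Pr(\mathcal{G}^c) = O(1/T)$, which only contributes an $O(1)$ additive term to the regret since per-period revenues are bounded.

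Conditioned on $\mathcal{G}$, I would handle the pre-switch window $[1,\tau-1]$ using the CBwK-style LP benchmark in \eqref{eq:LP2}. Because $\theta_i^* \in \Omega_i^{t-1}$, the optimistic LP value dominates the true LP value, so summing the per-period optimistic rewards upper bounds the fluid relaxation $J^D$; the gap between the LP-induced expected reward and the realized reward up to time $\tau-1$ is controlled by the switching condition \eqref{eq:condition1}, yielding an $\tilde{O}(\sqrt{n|\Theta|T})$ bound on the regret accrued during this stochastic phase. In the stationary case, I would further argue that with high probability conditions \eqref{eq:condition1} and \eqref{eq:condition2} are never violated (so $\tau = T+1$), because when $\mu_l^t = \mu_l$ the consumption trajectory concentrates around the LP's expected consumption at rate $\sqrt{t\log t}$, which is precisely the slack permitted by \eqref{eq:condition2}. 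This delivers the stationary bound $\text{OPT} \le \mathbb{E}[\text{ALG}] + \tilde{O}(\sqrt{n|\Theta|T})$.

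For the post-switch window $[\tau, T]$, I would invoke the inventory-balancing-with-penalty analysis of Cheung et~al.~\cite{wangchi2022}. The key observation is that running $\text{ALG}_{\text{ADV}}$ with penalty $\Psi(x) = (e^x-1)/(e-1)$ and optimistic click-through estimates $\bar f_i(x,\Omega_i^{t-1})$ is equivalent, on $\mathcal{G}$, to running their algorithm with revealed click-through rates up to an additive $\tilde{O}(\sqrt{n|\Theta|T})$ learning error (via \eqref{eq:update1} and concentration of $a^t$ around $f_{I^t}(x^t,\theta^*)$). Their competitive ratio bound then yields
\[
\text{OPT}_{[\tau,T]} \le \frac{(1+c_{\min})\bigl(1-e^{-1/c_{\min}}\bigr)}{1-1/e}\,\mathbb{E}[\text{ALG}_{[\tau,T]}] + \tilde{O}(\sqrt{n|\Theta|T}),
\]
where $c_{\min} = \min_{i\in[n]} c_i$. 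Combining with the pre-switch bound via $\text{OPT} \le \text{OPT}_{[1,\tau-1]} + \text{OPT}_{[\tau,T]}$ and the inequality $\text{OPT}_{[1,\tau-1]} \le \mathbb{E}[\text{ALG}_{[1,\tau-1]}] + \tilde{O}(\sqrt{n|\Theta|T})$ on $\mathcal{G}$ produces the general bound after absorbing the leading ``$1$'' in $(1+\cdots)$ from the pre-switch contribution.

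The main obstacle will be the seamless stitching at the switch time $\tau$: one must ensure that the residual capacity $c_i - N_i^{\tau-1}$ entering the adversarial phase is large enough that the Cheung et~al.\ competitive ratio still applies with the original $c_{\min}$ rather than some diminished effective capacity, and that the benchmark optimum restricted to $[\tau,T]$ is not too small relative to the full-horizon $\text{OPT}$. I expect both to be handled by condition \eqref{eq:condition2}, which guarantees that at the moment of switching no resource has been depleted by more than a $(\tau-1)/T$ fraction plus an $\tilde{O}(\sqrt{T})$ slack; rescaling the adversarial analysis to the remaining horizon and remaining budget then recovers the stated constant with only the sublinear additive error.
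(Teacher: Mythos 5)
Your high-level decomposition (good event, pre-switch regret via the switching conditions, the stationary case where no switch occurs w.h.p., and a competitive-ratio bound for the post-switch phase) matches the paper's proof of \cref{Aprop:hybrid regret}. However, the step you yourself flag as ``the main obstacle'' --- stitching at the switch time --- is where your plan has a genuine gap, and your proposed fix (``rescaling the adversarial analysis to the remaining horizon and remaining budget'') does not work. Two problems. First, the decomposition $\text{OPT}\leq \text{OPT}_{[1,\tau-1]}+\text{OPT}_{[\tau,T]}$ is only valid when $\text{OPT}_{[\tau,T]}$ is the \emph{full-capacity} second-phase benchmark, whereas a CR bound rescaled to the algorithm's \emph{remaining} budget only upper-bounds the remaining-capacity optimum, which is smaller; the inequality you need goes the wrong way. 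Second, even setting that aside, the constant in \cref{lemma:wangwill} depends on $\min_i c_i$, so replacing $c_i$ by the residual $c_i-N_i^{\tau-1}$ changes the constant away from the one stated in the theorem (the function $(1+c)(1-e^{-1/c})$ is not monotone and condition \eqref{eq:condition2} only guarantees a residual of roughly $\frac{T-\tau}{T}c_i-\tilde{O}(\sqrt{T})$, which can be tiny).

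The paper resolves this differently: it introduces $\text{Empty-OPT}_2$ and $\text{Empty-ALG}_{\text{ADV}}^{T-t+1}$, the second-phase benchmark and the second-phase run of $\text{ALG}_{\text{ADV}}$ \emph{both restarted with full capacity}, applies \cref{Athm:adversarial regret} to that pair with the original $c_{\min}$, and then uses the accounting inequality
\begin{equation*}
\text{Empty-ALG}_{\text{ADV}}^{T-t+1}\leq \text{ALG}_{\text{ADV}}^{T-t+1}+\text{ALG}_{\text{LP}}^{t},
\end{equation*}
i.e., the hypothetical full-capacity run can out-earn the actual run by at most the revenue the LP phase already banked on the consumed units. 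This is the missing idea in your write-up, and it is also what produces the ``$1+$'' in the final constant: $\mathbb{E}[\text{ALG}_{\text{LP}}^t]$ is counted once to cover $\text{OPT}_1$ and once more inside the CR factor to pay for the capacity advantage of the full-capacity benchmark. Your remark about ``absorbing the leading $1$ from the pre-switch contribution'' suggests a derivation that, if it closed, would actually yield the stronger bound $\text{OPT}\leq \text{CR}\cdot\mathbb{E}[\text{ALG}]+\tilde{O}(\sqrt{n|\Theta|T})$ without the additive $1$ --- a sign that the stitching step has not been carried out.
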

The complete proof of \cref{prop:hybrid regret} is proved in the Appendix, and we provide a sketch proof in \cref{sec:proofsketch}. The Therorem establishes an expected regret bound of $\tilde{O}(\sqrt{n|\Theta|T})$ for near-i.i.d. arrival scenarios and provides a unified regret bound for nonstationary arrivals, combining sublinear regret with a constant CR. In the context of resource allocation, these results are significant.The sublinear regret of our algorithm indicates that the gap between its performance and the optimal strategy narrows over time. This demonstrates the algorithm's ability to adapt and improve, becoming more effective with longer usage. The constant CR highlights that our algorithm's regret is always within a fixed factor of the optimal strategy's regret, regardless of the problem's scale. This underlines the algorithm's consistent effectiveness and robustness, even with changing customer preferences. A remark on the behavior of the CR under the scenario when $\min_{i\in [n]}c_i$ approaches infinity, are provided in the \cref{remark:regret}.

\subsection{Proof Sketch of \cref{prop:hybrid regret}}
\label{sec:proofsketch}
We provide an overview on the proof of \cref{prop:hybrid regret}, which is fully proved in \cref{Asec: regret proof}. First, we establish that $\text{ALG}_{\text{LP}}$ always incurs a sublinear regret and a linear rate of resource consumption with high probability as long as no switch occurs (Proposition \ref{prop:not switch sublinear regret}). Specifically, if $\text{ALG}_{\text{LP}}$ runs uninterrupted for $t$ iterations, the expected regret up to time $t$ is bounded.

\begin{theorem}
\label{prop:not switch sublinear regret}
Suppose $\text{ALG}_{\text{LP}}$ runs for $t$ iterations without being switched, then the expected regret up to time $t$ is upper bounded by
\begin{align*}
& 16\sqrt{2|\Theta|nt\log(4|\Theta|t/\beta(n,T))}+\sqrt{5}n\log(2t/\beta(n,T))\\
& +(2/t+2+\log t)\beta(n,T)(LP(\theta^*)+nt).
\end{align*}
Moreover, each resource $i\in [n]$ has at least
\begin{equation*}
\left[\frac{T-t}{T}c_i - \sqrt{8nt\log(2t/\beta(n,T))}\right]^+
\end{equation*}
remaining with probability at least $1-\beta(n,T)$.
\end{theorem}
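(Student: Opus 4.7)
The plan is to treat the unswitched regime as an almost-deterministic UCB analysis, using the two switching conditions \eqref{eq:condition1} and \eqref{eq:condition2} as certified inequalities that hold (by definition of ``not switched'') at every period up to time $t$. The proof splits naturally into a clean event on which $\theta^{*}$ survives inside every confidence set $\Omega_i^{s-1}$ and the classical UCB chain goes through, and a complementary event where we fall back on the trivial bound $LP(\theta^{*})+nt$; the three additive summands in the statement arise, respectively, from the confidence-radius estimation error, from per-resource Azuma concentration, and from this bad-event fallback.

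First, I would define the good event $\mathcal{G}$ on which (i) $\theta^{*}\in\Omega_i^{s-1}$ for every $s\le t$ and every $i\in[n]$, and (ii) all martingale concentration bounds implicit in \eqref{eq:condition1} and \eqref{eq:condition2} hold. By the update rule \eqref{eq:update1}, $\theta^{*}$ is removed from a confidence set only if a per-parameter Hoeffding-style sum exceeds its threshold; a union bound across all $|\Theta|$ candidates, $n$ resources, and the periods $s\le t$ gives $\Pr[\mathcal{G}^{c}]$ bounded at level $s$ by something of order $\beta(n,T)/s$, which summed over $s$ produces the $(2/t+2+\log t)\beta(n,T)$ prefactor. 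On $\mathcal{G}^{c}$ the regret is at most the trivial $LP(\theta^{*})+nt$, yielding the third summand of the stated bound.

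On $\mathcal{G}$ the main work is to chain three one-step regret gaps. Because $\theta^{*}\in\Omega_i^{s-1}$ and $\bar f_i(x,\Omega)\ge f_i(x,\theta^{*})$, the optimistic LP value \eqref{eq:LP2} upper-bounds $\overline{\text{OPT}}$. Next, Condition \eqref{eq:condition1} instantiated at $\theta=\theta^{*}$ directly controls the cumulative gap between $\sum_{l\le t}\sum_i r_i s_{iJ^l}(\theta^{*})f_i(x^{(J^l)},\theta^{*})$ and $\sum_{l\le t}\sum_i r_i \bar s_{iJ^l}^{l}f_i(x^{(J^l)},\theta^{*})$ by $\max_i r_i\sqrt{32t\log(4|\Theta|t/\beta(n,T))}$; this is the source of the $\sqrt{|\Theta|nt\log(\cdot)}$ term after absorbing a $\sqrt{n}$ from aggregating across resources. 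Finally, the gap between the algorithm's expected per-step reward and its realized reward $\text{ALG}$ is a bounded-increments martingale, and Azuma--Hoeffding applied per resource and summed across $n$ resources yields the $\sqrt{5}\,n\log(2t/\beta(n,T))$ correction.

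For the second claim (remaining inventory), I would invoke \eqref{eq:condition2} directly: by the non-switch hypothesis, $\sum_{l\le t}\bar s_{iJ^l}^{l}\bar f_i(x^{(J^l)},\Omega^{l-1})\le (t/T)c_i+\sqrt{2t\log(2t/\beta(n,T))}$ for every $i$. A further Azuma step converts this expected consumption into the actually realized consumption with probability at least $1-\beta(n,T)$, and a union bound across the $n$ resources accounts for the widening from $\sqrt{2t\log(\cdot)}$ to the stated $\sqrt{8nt\log(2t/\beta(n,T))}$, giving the residual inventory $\left[(T-t)c_i/T-\sqrt{8nt\log(2t/\beta(n,T))}\right]^{+}$.

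The main obstacle will be the clean-event bookkeeping in the first step: the updating tests \eqref{eq:update1}--\eqref{eq:update2} are applied to the \emph{adaptively chosen} maximizer $\bar\theta^{s}$, so a naive union bound over $\Theta$ does not immediately certify that $\theta^{*}$ is retained. One has to argue that it is enough to union over the static set $\Theta$ before the adaptive selection, while the adaptivity of the arm choice $I^{s}$ is absorbed into the martingale variance structure underlying the Azuma-type threshold. Once this clean-event accounting is settled, the remaining manipulations of \eqref{eq:condition1}, \eqref{eq:condition2} and Azuma--Hoeffding are routine.
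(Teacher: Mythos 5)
Your proposal follows essentially the same route as the paper's proof of Proposition~\ref{Aprop:not switch sublinear regret}: treat the non-switch conditions \eqref{eq:condition1}--\eqref{eq:condition2} as certified inequalities, convert the realized-context sums to their $\mu$-expectations via Azuma-type lemmas, bound the inventory-violation term by a per-resource martingale (Freedman) argument, and handle the confidence-set failure event with the trivial $LP(\theta^*)+nt$ fallback weighted by $(2/t+2+\log t)\beta(n,T)$. The only (harmless) bookkeeping discrepancy is attribution: in the paper the dominant $\sqrt{|\Theta|nt\log(\cdot)}$ factor and the $\sqrt{5}\,n\log(\cdot)$ correction both arise from the inventory-violation analysis (Cauchy--Schwarz over $\Theta_i$ and Freedman's inequality, as in Proposition~\ref{Aprop:inventory regret bound}), not from condition \eqref{eq:condition1} or a reward-realization Azuma step as you describe.
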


We then demonstrate that under IID arrivals, the switch from $\text{ALG}_{\text{LP}}$ to $\text{ALG}_{\text{ADV}}$ does not occur with high probability (see \cref{prop:iid first condition} and \cref{prop:iid second condition}). Consequently, we can apply \cref{prop:not switch sublinear regret}, which assures sublinear regret without switch. This leads us to obtain $\tilde{O}(|\Theta|nT)$ regret under IID arrivals.

We further analyze our algorithm to address the case where the switch occurs, transitioning from $\text{ALG}_{\text{LP}}$ to $\text{ALG}_{\text{ADV}}$ at time $t$. \cref{thm:adversarial regret} quantify the performance of the optimal algorithm from time $t + 1$ to $T$ , given the consumption of all resources is zero.
\begin{theorem}
\label{thm:adversarial regret}
It holds that
\begin{align*}
\text{OPT} \leq & \frac{(1+\min\limits_{i\in [n]} c_i) \left(1-e^{-1/\min\limits_{i\in [n]}c_i }\right) }{1-1/e} \mathbb{E}[\textrm{ALG}]\\
& +\max\limits_{i\in [n]}r_i(\sqrt{|\Theta|n}+1)\sqrt{2T \log(2T/\beta(n,T)) }\\
&+ \max\limits_{i\in [n]} r_i (1/T+\log T+1)/n.
\end{align*}
\end{theorem}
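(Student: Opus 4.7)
The plan is to separate the bound into two essentially independent sources of error: a learning/concentration error from the UCB-style estimation of the unknown click-through rates (which accounts for the $\max_i r_i (\sqrt{|\Theta| n}+1)\sqrt{2T\log(2T/\beta(n,T))}$ term), and a competitive-ratio error from the inventory-balancing penalty scheme of $\text{ALG}_{\text{ADV}}$ (which accounts for the multiplicative $(1+\min_i c_i)(1-e^{-1/\min_i c_i})/(1-1/e)$ factor).

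First I would establish a good event under which, for every time $t$ after the switch and every resource $i$, the true parameter $\theta^*$ belongs to $\Omega_i^{t-1}$ and the optimistic estimate $\bar f_i(x^t,\Omega_i^{t-1}) \geq f_i(x^t,\theta^*)$ is not too large. The lower inclusion follows from the update rule \eqref{eq:update1} applied to $\theta^*$; the upper slack follows by an Azuma/Hoeffding bound applied to the martingale differences $\bar f_{I^{t'}}(x^{t'},\Omega^{t'-1}) - a^{t'}$, combined with a union bound over the at most $|\Theta|$ alive parameters, the $n$ resources, and the $T$ periods. Summing the pointwise slack yields a cumulative error of order $\max_i r_i \sqrt{|\Theta| n}\sqrt{T\log(2T/\beta(n,T))}$, matching the $\sqrt{|\Theta|n}$ factor in the statement.

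Next, for the competitive-ratio piece, I would first upper bound $\text{OPT}$ by the fluid LP benchmark from \eqref{eq:benchmark} evaluated with the true probabilities $f_i(x^t,\theta^*)$, and then compare this benchmark against $\mathbb{E}[\text{ALG}]$ using a potential-function argument on the penalty $\Psi(x)=(e^x-1)/(e-1)$, in the style of \cite{doi:10.1287/mnsc.2014.1939} and \cite{wangchi2022}. The greedy rule $I^t = \arg\max_i r_i(1-\Psi(N_i^{t-1}/c_i))\bar f_i(x^t,\Omega_i^{t-1})$ dominates, in particular, the LP's benchmark assignment evaluated with the same penalty. Telescoping the potential $\sum_i r_i c_i \Psi(N_i^t/c_i)$ and using the differential inequality $\Psi'(x) = (1 + \Psi(x))/(e-1)$ lets me charge, for every unit of OPT's revenue, at most a factor $\tfrac{1}{1-1/e}$ in expected algorithm revenue; the extra $(1+\min_i c_i)(1-e^{-1/\min_i c_i})$ factor arises from the finite-inventory (integer) rounding of $N_i^t$, exactly as in Theorem~4 of \cite{wangchi2022}.

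The main obstacle will be composing the two analyses cleanly. The competitive-ratio argument is carried out with $\bar f_i$, not with the true $f_i(\cdot,\theta^*)$, while the LP benchmark upper bounding $\text{OPT}$ uses $f_i(\cdot,\theta^*)$; the gap between them must be charged additively (to the $\sqrt{|\Theta|n T}$ term) rather than multiplicatively to the CR factor, which requires estimating the mismatch before telescoping the potential. A related subtlety is that OPT consumes capacity at rate $f_i(\cdot,\theta^*)$ while $\text{ALG}_{\text{ADV}}$ consumes at rate $\bar f_i(\cdot,\Omega_i^{t-1})$, so the accounting of $N_i^t/c_i$ in the potential must absorb this discrepancy; this is where the union-bound concentration from the first step is spent. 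The residual $\max_i r_i(1/T+\log T + 1)/n$ term comes from the initial/final values of the potential, the contribution of the single switch period, and the failure-probability contribution of the good event; I would collect these boundary terms at the very end once the main telescoping argument is in place.
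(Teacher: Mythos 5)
Your decomposition is exactly the paper's: a multiplicative inventory-balancing competitive-ratio factor plus an additive UCB learning error, with the latter bounded via the good event $\mathcal{E}$, the removal condition, Azuma--Hoeffding, and Cauchy--Schwarz over the $|\Theta|\cdot n$ confidence sets, just as in Proposition~\ref{Aprop:auxiliary regret}. The one place you diverge is that the paper does not redo the potential-function telescoping you sketch: it invokes the reduction of \cite{wangchi2022} as a black box (Lemma~\ref{lemma:wangwill}), which states that running any algorithm with auxiliary regret $\text{REG}_{\text{AUX}}$ on the inventory-unconstrained problem, but with rewards discounted to $r_i^t=r_i(1-\Psi(N_i^{t-1}/c_i))$, yields $\text{OPT}\leq \frac{(1+\min_i c_i)(1-e^{-1/\min_i c_i})}{1-1/e}\,\mathbb{E}[\text{ALG}]+\mathbb{E}[\text{REG}_{\text{AUX}}]$; since $\text{REG}_{\text{AUX}}$ is defined against the true $f_i(\cdot,\theta^*)$ for both the algorithm's and the benchmark's choices, the composition subtlety you flag (charging the $\bar f_i$ versus $f_i(\cdot,\theta^*)$ mismatch additively rather than into the CR factor) is absorbed automatically into the auxiliary-regret bound. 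If you do carry out the telescoping yourself, note the correct identity is $\Psi'(x)=\Psi(x)+\tfrac{1}{e-1}$ rather than $\Psi'(x)=(1+\Psi(x))/(e-1)$; otherwise your plan is sound and matches the paper's argument.
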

This theorem is crucial as it sets an upper bound on the regret incurred by $\text{ALG}_{\text{ADV}}$ under adversarial arrival conditions. By examining the collective performance of both $\text{ALG}_{\text{LP}}$ and $\text{ALG}_{\text{ADV}}$, we derive a comprehensive regret bound in \cref{prop:hybrid regret}.  Our analysis extends further to encompass nonstationary arrivals, showing that our algorithm not only achieves sublinear regret but also maintains a constant CR.

\section{Numerical Studies}
\label{sec: experiment}
In this section, we conduct numerical experiments to assess the performance of $\text{ALG}_{\text{LP}}$, $\text{ALG}_{\text{ADV}}$, and the ULwE Algorithm in a controlled setting where resources are continuously sold. The main goal is to evaluate these algorithms under dynamic conditions with limited resource availability, using $J^D(c,t)$ (\cref{eq:benchmark}) as our benchmark (\cref{eq:benchmark}). Based on insights from \cite{doi:10.1287/mnsc.2014.1939}, our experimental setup includes periodic LP solutions every 50 periods to adapt to changing customer arrivals, factoring in inventory levels and empirical data. The experiments involve two customer types ($L=2$) and two resources ($n=2$), with customer purchase probabilities modeled through a logistic function. The ULwE Algorithm starts with historical data (500 observations) to estimate initial $\Theta$ values, comparing against an optimal offline algorithm’s maximum profit derived from known $\theta^{*}$ values. In this setup, Customer Type A exhibits a high purchase probability of 0.9, while Customer Type B maintains a consistent probability of 0.5, regardless of the resource. The total capacity of both resources matches the time horizon $T$, with revenues set at 1 and 1.5 units for Resource 1 and 2, respectively. Our analysis primarily assesses the ULwE Algorithm's performance, particularly its regret in both IID and adversarial customer arrival scenarios, highlighting its adaptability across various operational contexts. Experiment's regret values represent the average outcomes of 100 independent experiments, ensuring robustness and reliability of the results.

\subsection{Results under Near-IID Arrivals}
In this subsection, we analyze experimental results under IID arrival conditions, where customer arrivals are stable and predictable. We simulate Customer Types A and B arriving at rates of $0.6T$ and $0.4T$, respectively, representing a near-IID environment with consistent arrival probabilities.

\begin{figure}[ht]

\begin{center}
\centerline{\includegraphics[width=\columnwidth]{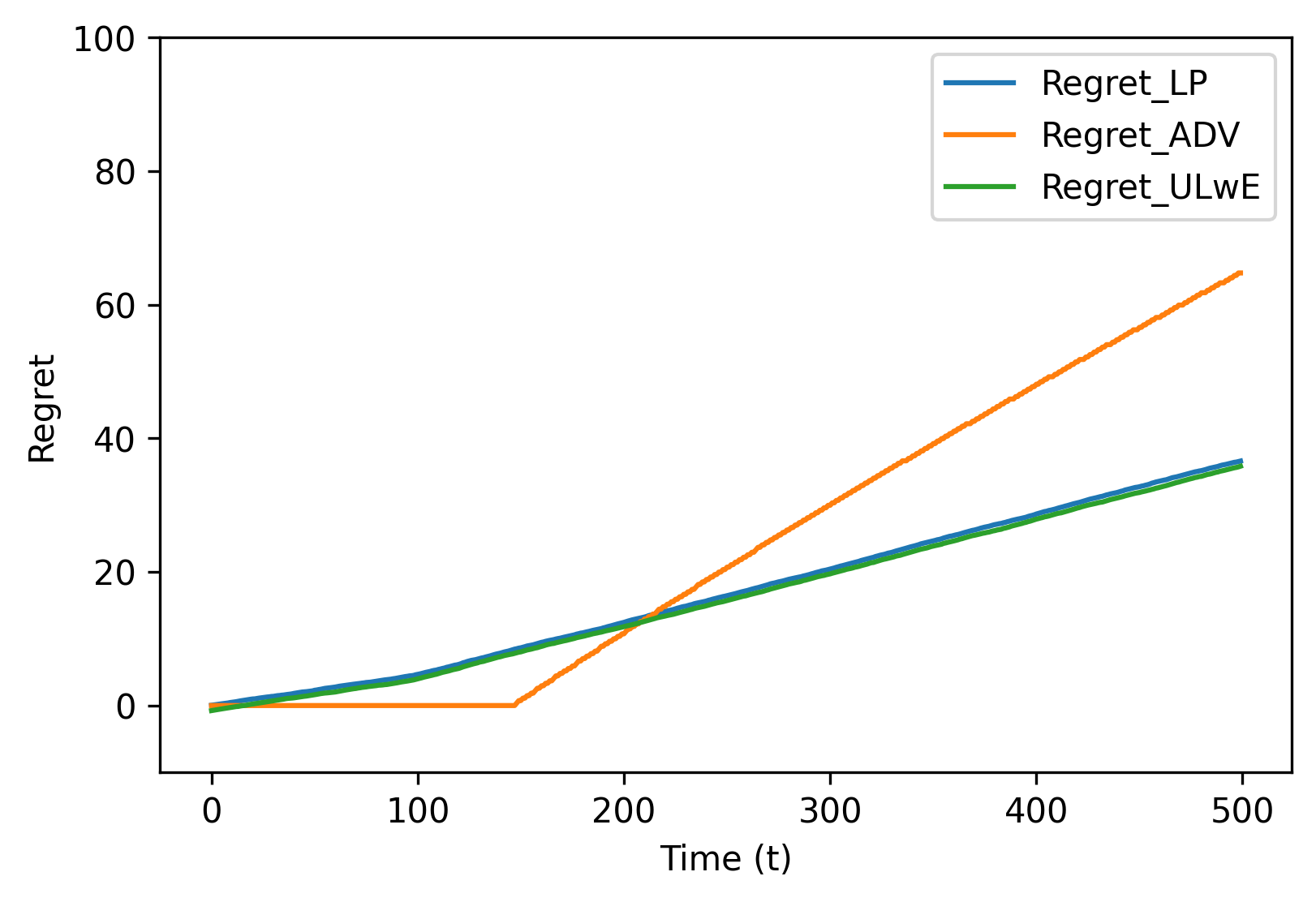}}
\caption{Regret over Time under IID Arrival}
\label{fig:IID1}
\end{center}

\end{figure}

\cref{fig:IID1} illustrates the regret trajectories over a total time period $T$ of 500. The ADV algorithm (yellow line) initially maintains a stable regret, suggesting an optimal initial resource allocation. However, as time advances, an increase in regret is observed, a characteristic trait of the algorithm's greedy nature. This increase indicates a shift from optimal decisions to suboptimal ones due to the depletion of resources and consequent reduction in modified revenue. In contrast, the $\text{ALG}_{\text{LP}}$ and ULwE algorithms (blue and green lines, respectively) exhibit a more gradual and consistent increase in regret. The near-overlapping of these lines indicates a similarity in their performance, with both algorithms showing a steady increase in revenue regret over time.

At $T=500$, $\text{ALG}_{\text{ADV}}$ has the highest cumulative regret, suggesting that the heuristic greedy algorithm encounters difficulties in adapting its resource allocation strategies effectively under IID arrivals. In contrast, both $\text{ALG}_{\text{LP}}$ and the ULwE Algorithm exhibit similar and improved performance compared to $\text{ALG}_{\text{ADV}}$, showing effective adaptation to IID conditions with lower cumulative regret.

\subsection{Results under Adversarial Arrivals}

In this subsection, we examine the algorithms' performance under nonstationary arrival scenarios, where customer arrival probabilities change over time. Specifically, we simulate a scenario with Customer Type A and B having varying arrival rates across different phases of the time period $T$, to reflect changing customer behaviors.
\begin{figure}[ht]
\begin{center}
\centerline{\includegraphics[width=\columnwidth]{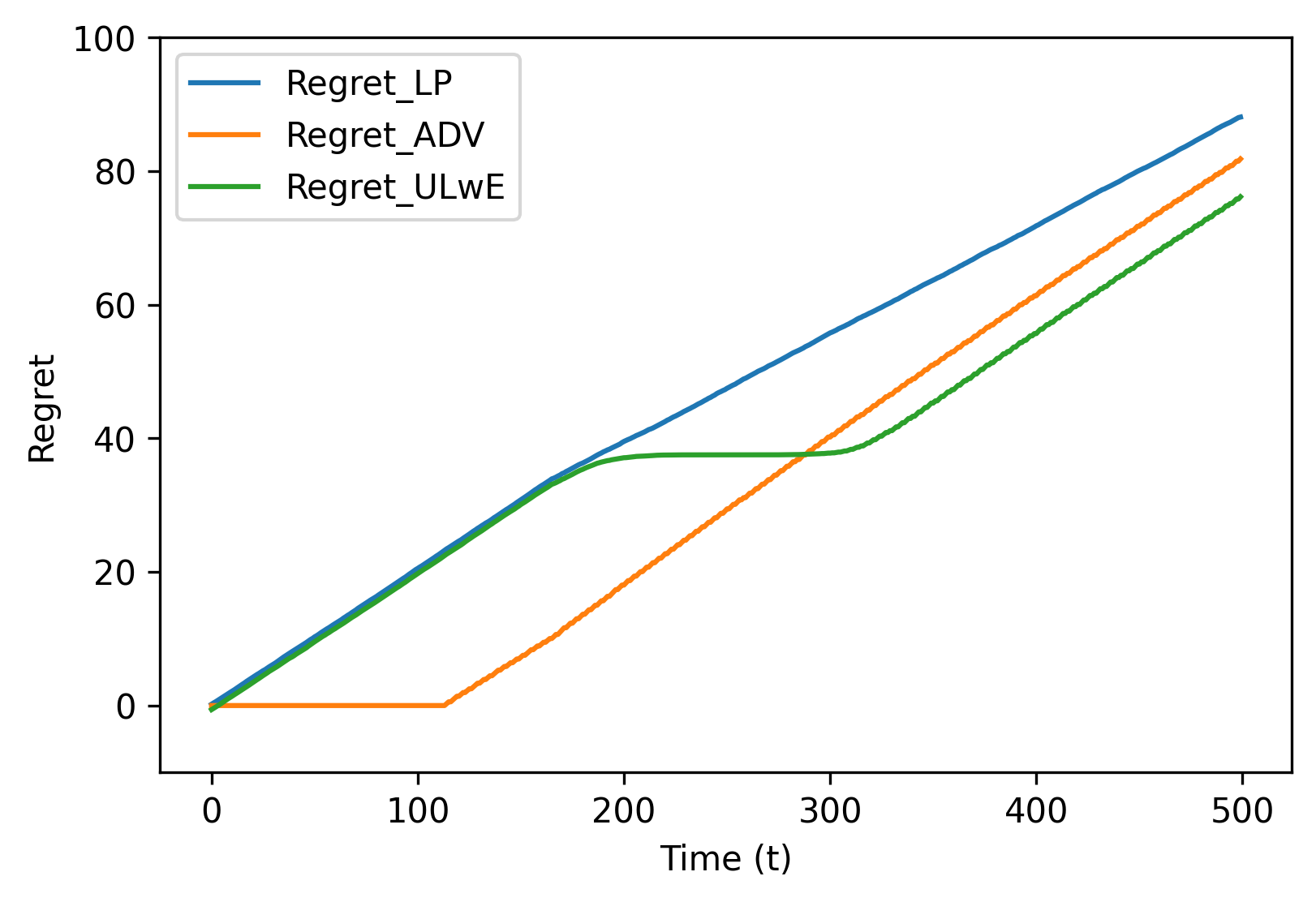}}
\caption{Regret over Time under Adversarial Arrival (ADV1)}
\label{fig:ADV1}
\end{center}
\end{figure}
In our analysis of nonstationary environments, as shown in \cref{fig:ADV1}, the regret trajectories of $\text{ALG}_{\text{LP}}$ (blue line) and $\text{ALG}_{\text{ADV}}$ (yellow line) were consistent with their performance under IID conditions. $\text{ALG}_{\text{LP}}$ exhibited a steady increase in regret over time, while $\text{ALG}_{\text{ADV}}$ showed initial stability followed by a sharp rise in the later stages. Interestingly, under nonstationary conditions, the regret of $\text{ALG}_{\text{ADV}}$ did not surpass that of $\text{ALG}_{\text{LP}}$, indicating better performance in dynamic settings.

The ULwE Algorithm (green line) displayed a distinct pattern. It initially follows the trend of  $\text{ALG}_{\text{LP}}$, but after a switch point at around $1/3T$, it adopts a pattern similar to $\text{ALG}_{\text{ADV}}$—initially stable, then sharply increasing, indicating its ability to integrate the strengths of both the LP and ADV approaches. 
% It commences by replicating the resource allocation strategy of the $\text{ALG}_{\text{LP}}$, indicative of a systematic and steady approach, and subsequently transitions to a strategy resonant with the more aggressive, greedy methodology characteristic of the $\text{ALG}_{\text{ADV}}$.

% \cref{tab:ADV} provides a detailed comparison of the algorithms' solution trajectories and regret. The ULwE Algorithm showed a distinct shift in its allocation patterns in the later stages, aligning more closely with $\text{ALG}_{\text{ADV}}$. Specifically, for the initial phase ($T < 200$), the ULwE Algorithm’s allocations closely resemble those of $\text{ALG}_{\text{LP}}$. In the middle phase ($200 < T < 300$), the ULwE Algorithm shifts its approach to mirror the early strategies of $\text{ALG}_{\text{ADV}}$, favoring Resource B predominantly. In the final phase ($T > 300$), the ULwE Algorithm adopts a more greedy allocation strategy, akin to the later strategies of $\text{ALG}_{\text{ADV}}$. This progression, also supported by \cref{fig:ADV1}, emphasizes the ULwE Algorithm’s capacity to dynamically adjust its allocation strategies in response to evolving conditions in nonstationary environments.

% For an in-depth analysis of the performance of $\text{ALG}_{\text{LP}}$, $\text{ALG}_{\text{ADV}}$, and the ULwE Algorithm under general customer arrivals, including specific arrival rate settings and their impact, please refer to \cref{Asec:more results}.

\subsection{Results under General Arrivals}
\label{sec:more results}
\begin{figure}[ht]
\begin{center}
\centerline{\includegraphics[width=\columnwidth]{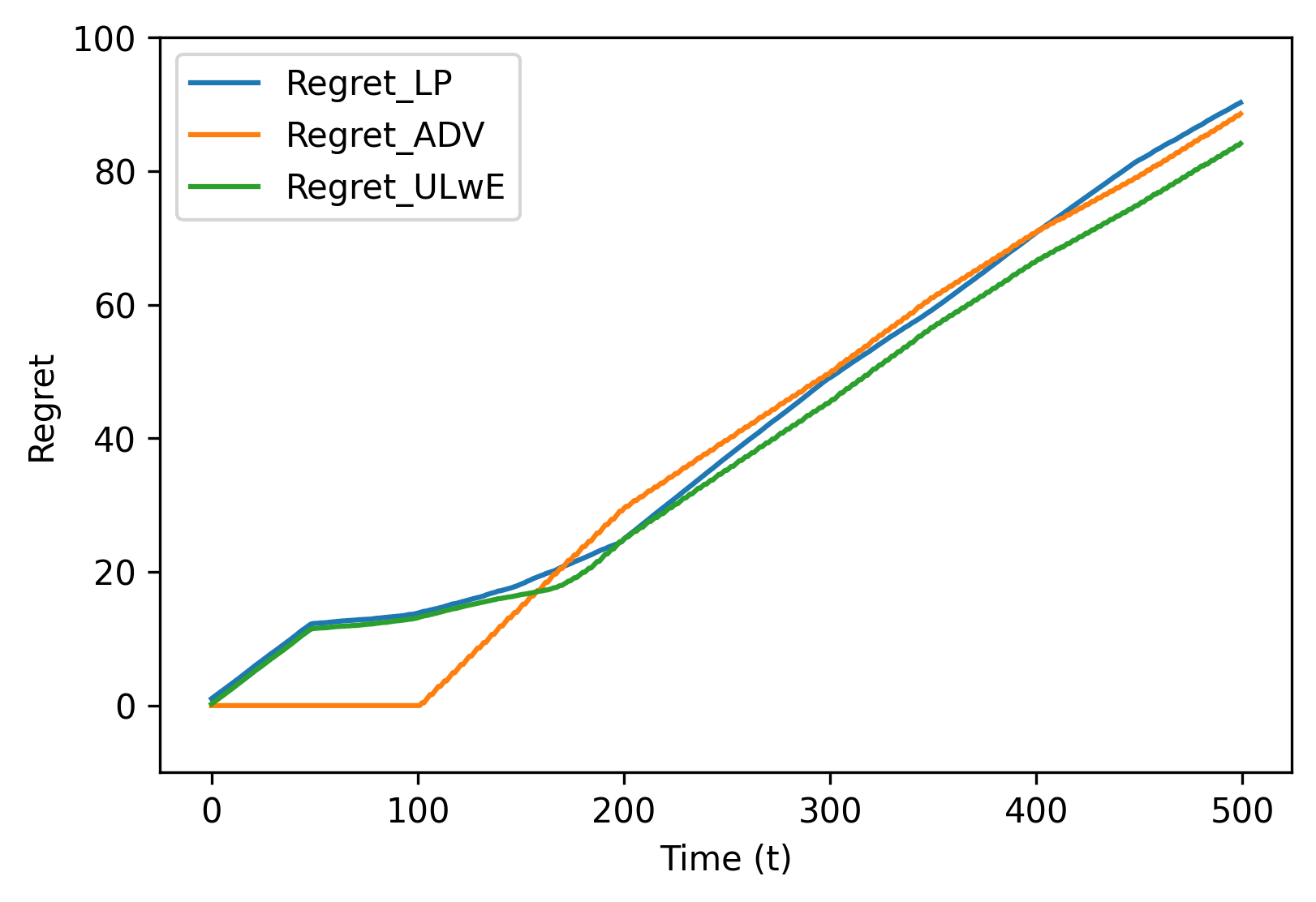}}
\caption{Regret over Time under Adversarial Arrival (ADV2)}
\label{fig:ADV2}
\end{center}
\end{figure}
This subsection evaluates the performance of each algorithms under various customer arrival scenarios (shown in \cref{tab:arrival-rates}), which vary in nonstationarity levels. The focus is primarily on the ADV2 setting, representing high nonstationarity. Key findings underscore the ULwE algorithm's robustness, especially in low nonstationarity settings like ADV1, due to its adaptive switching between $\text{ALG}_{\text{LP}}$ and $\text{ALG}_{\text{ADV}}$. Although its performance dips in more nonstationary situations (e.g., ADV2), it still outperforms in stationary contexts.

In summary, the $\text{ALG}_{\text{ADV}}$ maintains steady performance in nonstationary environments, while the $\text{ALG}_{\text{LP}}$, limited by its static resource allocation, struggles with variable arrival rates and higher nonstationarity. ULwE's adaptability in resource allocation and response to changing conditions minimizes regret and underscores its effectiveness in managing nonstationary uncertainties.

\section{Conclusion}
\label{sec: conclusion}
In conclusion, we proposed the ULwE Algorithm based on CBwK to address the resource allocation problem under nonstationary environments. Our algorithm leverages contextual information to make informed decisions and adaptively balances exploration and exploitation to accommodate rapidly changing customer preferences. By assuming no prior knowledge of the per-period arrival process and considering the variability in arrival probabilities, our algorithm overcomes the limitations of stationary environments and achieves efficient resource allocation.

While bandit algorithms have been previously used for resource allocation problems, our approach extends the existing literature by explicitly accounting for the variability in customer purchase probabilities. Compared to related works, our algorithm achieves a regret bound of approximately $\tilde{O}(\sqrt{n|\Theta|T})$ in the case of near-i.i.d. arrivals, providing sublinear regret and a constant CR under nonstationary arrivals. In addition to the theoretical analysis, our experiments compared the ULwE Algorithm with $\text{ALG}_{\text{LP}}$ and $\text{ALG}_{\text{ADV}}$. The results consistently demonstrated that the ULwE Algorithm outperforms these algorithms, achieving lower regret under nonstationary arrival patterns. Future research could explore the applicability of the algorithm in other domains and further enhance its performance in dynamic environments.

\nocite{langley00}

\bibliography{main}
\bibliographystyle{icml2024}

%%%%%%%%%%%%%%%%%%%%%%%%%%%%%%%%%%%%%%%%%%%%%%%%%%%%%%%%%%%%%%%%%%%%%%%%%%%%%%%
%%%%%%%%%%%%%%%%%%%%%%%%%%%%%%%%%%%%%%%%%%%%%%%%%%%%%%%%%%%%%%%%%%%%%%%%%%%%%%%
% APPENDIX
%%%%%%%%%%%%%%%%%%%%%%%%%%%%%%%%%%%%%%%%%%%%%%%%%%%%%%%%%%%%%%%%%%%%%%%%%%%%%%%
%%%%%%%%%%%%%%%%%%%%%%%%%%%%%%%%%%%%%%%%%%%%%%%%%%%%%%%%%%%%%%%%%%%%%%%%%%%%%%%
\newpage
\appendix
\onecolumn
% \mathtt{\backslash onecolumn}
\section{Appendix: LP formulations}

To capture the maximum achievable total reward, we define a static LP problem, denoted as $\text{LP}(\theta^*)$:
\begin{align}
\label{eq:LP1}
\begin{split}
\text{LP}(\theta^*) = \max_{s_{ij}, i\in [n], j\in[L]} & \sum_{i \in [n]} r_i \sum_{j \in [L]} \lambda_j s_{ij} f_i(x^{(j)}, \theta_i^*)\\
\text{s.t. } & \sum_{j \in [L]}\lambda_j  s_{ij} f_i(x^{(j)}, \theta_i^*) \leq c_i, \ \forall i \in [n]\\
& \sum_{i\in [n]}s_{ij}=1,~\forall j\in [L]\\
&  s_{ij} \geq 0, \ \forall i \in [n], j \in [L].
\end{split}
\end{align}
The coefficients in the LP are determined by the latent variables $\theta^*$. The objective of this LP is to maximize the total reward obtained by assigning resources to customers based on their features and latent variables. The LP is subject to constraints that ensure consistency between budget limitations and resource assignment. The optimal solution to this LP is denoted as $s^*$.

We denote $\bar U$ the maximum optimal objective value of $U^t$, i.e.,
\begin{align}
\label{eq:LP3}
\begin{split}
\bar U = \max&_{s_{ij}, \theta_i\in \Theta_i, i\in [n], j\in[L]}  \sum_{i \in [n]} r_i \sum_{j \in [L]}\lambda_j s_{ij} \bar  f_i (x^{(j)}, \theta_i)\\
\text{s.t. } & \sum_{j \in [L]} \lambda_j s_{ij}  \bar f_i (x^{(j)}, \theta_i) \leq c_i, \ \forall i \in [n]\\
& \sum_{i\in [n]}s_{ij}=1,~\forall j\in [L]\\
&  s_{ij} \geq 0, \ \forall i \in [n], j \in [L].
\end{split}
\end{align}

\section{Appendix: Proof of the Deterministic Upper Bound on the Optimal Revenue}
\label{appendix: benchmark}
In this subsection, our primary objective is to establish a theoretical upper bound on the optimal revenue that can be achieved. This involves a comprehensive analysis of the optimal revenue (denoted as $\overline{\text{OPT}}$), which serves as a benchmark against which the performance of various algorithms can be measured. By exploring the upper limits of achievable revenue, we aim to provide a clear framework for evaluating the efficiency and effectiveness of different resource allocation strategies. This analysis not only aids in quantifying the regret incurred by these algorithms but also offers insights into the potential for revenue maximization under varying operational constraints and market conditions.

Firstly, the resource allocation problem is formulated as follows: At time zero, the system has capacities $c$ of $n$ items and a finite time $t>0$ to assign them. 
 
Let $\sum_{j=1}^L  {\mu_j}^s f_I(x^{j^s},\theta^*) )$ denote the probability of item $I$ is purchased by customers to time $s$. The term \( {\mu_j}^t \) denotes the probability of encountering a customer of type \( j \) at time \( t \), reflecting the non-stationary nature of customer arrivals. The function \( f_I(x^{j^t},\theta^*) \) captures the probability of a customer of type \( j \) with feature vector \( x^{j^t} \) purchasing resource \( I \), given the latent variable \( \theta^* \) associated with the resource. A consumption is realized at time $s$ if $\sum_{j=1}^L  {\mu_j}^s f_I(x^{j^s},\theta^*) )=1$, in which case the system sells one item and receives revenue of $r_I$.

We introduce an indicator variable \( \delta_{I,s} \), which equals 1 if resource \( I \) is assigned at time \( s \), and 0 otherwise. The class of all non-anticipating policies, denoted by \( \Pi \), must satisfy
\[
\sum_{s=1}^t \sum_{j=1}^L  {\mu_j}^s f_I(x^{j^s},\theta^*) \delta_{I,s} \leq c_I 
\]

In addressing the challenge of optimizing resource allocation under constraints of customer behavior and time, we formulate a function that encapsulates the expected revenue over a continuous time horizon. The decision-making policy \( \pi \) guides the allocation of resources, determining each resource \( I \) being dynamically chosen based on customer features and the characteristics of the resources, as indicated by \( \delta_{I,s} \).

Given a allocation policy $\pi \in \Pi$, an initial capacity $c>0$, and a sales horizon $t>0$, we denote the expected revenue by
$$
J_{\pi}(c, t) \doteq \mathrm{E}_\pi\left[  \sum_{s=1}^t r_I \sum_{j=1}^L {\mu_j}^s f_I(x^{j^s},\theta^*)\delta_{I,s}  \right]
$$
% where
% $$
% J_{\pi}(c, 0) \doteq 0 \quad \forall c
% $$
% and
% $$
% J_{\pi}(0, t) \doteq 0 \quad \forall t .
% $$

Here, \( r_I \) represents the revenue earned from resource \( I \) upon a customer's purchase, as determined by policy \( \pi \) and the assignment indicated by $\delta_{I,s}$. 
The problem is to find a decision-making policy ${\pi}^*$ (if one exists) that maximizes the total expected revenue generated over $[0, t]$, denoted $\text{OPT}$. Equivalently,
$$
\text{OPT} \doteq \sup _{\pi \in \Pi} J_{\pi}(c, t)
$$

This formulation allows for a dynamic and adaptive approach to resource allocation, where the decision policy \( \pi \) can be optimized based on the varying probabilities of customer types and their purchasing behaviors over time. The objective $\text{OPT}$ thus represents the total expected revenue, taking into account the fluctuating customer landscape and the inherent uncertainties in customer preferences and behaviors.
% \subsection{An Upper Bound Based on a Deterministic Problem}

In our pursuit to devise an optimal strategy for resource allocation in a customer interaction system, we propose a deterministic linear programming model, denoted as \( J^D \). This model is designed to maximize the deterministic revenue while adhering to the constraints of resource capacities and customer arrival patterns. 
The objective function aims to maximize the total deterministic revenue overall resources, customer types, and time periods.
$$
\begin{aligned}
J^D(c,t) = \max_{s_{ij}, i\in [n], j\in[L]} & \sum_{i=1}^n \sum_{j=1}^L \sum_{s=1}^t r_i  s_{ij}^s f_i(x^j,\theta^*) \\
 \text{s.t. } &\sum_{j=1}^L \sum_{s=1}^t {\mu_j}^s s_{ij}^s f_i(x^j,\theta^*) \leq c_i, \forall i \in [n] \\
& \sum_{i=1}^n\sum_{s=1}^t s_{ij}^s =  \sum_{s=1}^t {\mu_j}^s, \forall j \in [L] \\
% & \sum_{i=1}^n s_{ij}^t = 1, \forall j \in [L]\\
& s_{ij}^s \geq 0 ,\forall i \in [n], \forall j \in [L]
\end{aligned}
$$ 
% $$
% \begin{aligned}
% J^D(c,t) = \max & \sum_{i=1}^n \sum_{j=1}^L \sum_{s=1}^t r_i  s_{ij}^s f_i(x^j,\theta^*) \\
% &\sum_{j=1}^L \sum_{s=1}^t {\mu_j}^s s_{ij}^s f_i(x^j,\theta^*) \leq c_i, \forall i \in [n] \\
% & \sum_{i=1}^n\sum_{s=1}^t s_{ij}^s = \lambda_j, \forall j \in [L] \\
% % & \sum_{i=1}^n s_{ij}^t = 1, \forall j \in [L]\\
% & s_{ij}^s \geq 0 ,\forall i \in [n], \forall j \in [L]
% \end{aligned}
% $$ 
All the variables and parameters in this LP are deterministic. They are either constants known a priori or decision variables whose values are to be determined by solving the LP without any randomness involved. The first constraint ensures that the expected number of resource allocations does not exceed the capacity \( c_i \) of each resource \( i \). This constraint is crucial for maintaining a balance between maximizing revenue and not overcommitting the available resources. The second constraint
guarantees that the total allocations for each customer type \( j \) over the time horizon equal the expected number of arrivals \( \lambda_j \) of that customer type. This aligns the resource allocation with the anticipated customer demand.

Finally, the non-negativity constraint ensures that the decision variables remain feasible, reflecting the reality that negative resource allocation is not possible.

\begin{proposition}
\label{prop:benchmark}
     For all $0 \leq c<+\infty$ and $0 \leq t<+\infty$,

$$
\text{OPT} \leq J^{D}(c, t)
$$
\end{proposition}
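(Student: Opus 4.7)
The plan is to establish the bound by the standard fluid-relaxation argument: given any admissible non-anticipating policy $\pi \in \Pi$, I would construct a feasible solution $\bar s$ to the LP in such a way that the LP objective evaluated at $\bar s$ equals the expected policy revenue $J_\pi(c,t)$. This immediately yields $J_\pi(c,t) \leq J^D(c,t)$, and taking the supremum over $\pi \in \Pi$ then gives $\text{OPT} \leq J^D(c,t)$.

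Concretely, I would fix $\pi \in \Pi$ and let $J^s$ denote the (random) type of the customer arriving in period $s$, so that $\Pr_\pi(J^s = j) = \mu_j^s$; let $\delta_{i,s}$ denote the indicator, under $\pi$, that resource $i$ is assigned in period $s$. Augmenting the resource list with a virtual reject option with $r = 0$ and $f \equiv 0$ if necessary allows me to assume $\sum_i \delta_{i,s} = 1$ almost surely without changing either $J_\pi$ or the LP value. Then I would define
\[
\bar s_{ij}^s \,:=\, \mathbb{E}_\pi \!\left[ \delta_{i,s} \,\big|\, J^s = j \right],
\]
which is the conditional probability that $\pi$ routes a type-$j$ arrival at time $s$ to resource $i$. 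Non-negativity is immediate; summing over $i$ gives $\sum_i \bar s_{ij}^s = 1$ for each $(j,s)$, which upon multiplying by $\mu_j^s$ and summing over $s$ delivers the flow equality of the LP. The capacity constraint follows by taking expectation of the feasibility inequality defining $\Pi$ and invoking Fubini, while the LP objective evaluated at $\bar s$ reproduces $J_\pi(c,t)$ by linearity of expectation. Taking the supremum over $\pi$ then concludes.

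The main technical point is choosing $\bar s$ so that the arrival rates $\mu_j^s$ appearing in the LP line up correctly with the $\pi$-induced joint distribution of arrivals and allocations: the conditional-probability construction above makes the translation transparent, but one must verify that it is measurable with respect to $\pi$'s filtration and that the reject-resource augmentation does not enlarge $J_\pi$. Once the construction is pinned down, every verification step reduces to a routine application of linearity of expectation or Fubini's theorem, so I do not expect any genuinely hard step beyond bookkeeping.
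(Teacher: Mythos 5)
Your proposal is correct and follows essentially the same route as the paper: both arguments convert an arbitrary non-anticipating policy into an LP-feasible point whose objective value equals the policy's expected revenue (the paper via relaxing a binary program and substituting $m_{ij}^s = \mu_j^s \pi_{ij}^s$, you via the conditional probabilities $\mathbb{E}_\pi[\delta_{i,s} \mid J^s = j]$). One bookkeeping caveat: because the LP's objective and flow constraint treat $s_{ij}^s$ as an arrival mass while the capacity constraint multiplies it by $\mu_j^s$ again, the variable you must feed into $J^D$ is $\mu_j^s \bar s_{ij}^s$ rather than $\bar s_{ij}^s$ itself --- exactly the paper's $m$-substitution --- after which your feasibility and objective verifications go through as described.
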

\begin{proof}
Initially, the problem is formulated as finding the best policy $ \pi$ to maximize the expected sum of returns, subject to a cost constraint:
$$
\begin{aligned}
\text{OPT} \doteq \max_{\pi}  & \quad 
 \mathrm{E}_\pi\left[  \sum_{s=1}^t r_I \sum_{j=1}^L {\mu_j}^s f_I(x^{j^s},\theta^*)\delta_{I,s}  \right]\\
\text{s.t. }&\sum_{s=1}^t 
 \sum_{j=1}^L  {\mu_j}^s f_I(x^{j^s},\theta^*) ) \leq c_I 
\end{aligned}
$$
Here we introduce binary decision variables $\pi_{ij}$ to represent the allocation of resources to each customer at each time point $s>0$. We add a constraint $\sum_{i=1}^n \pi_{ij}^s = 1$, which represents each customer can only be assigned one item at each time point. The problem thus becomes a binary integer programming problem:

$$
\begin{aligned}
\text{OPT} = \max_{\pi_{ij}}  &  \sum_{i=1}^n \sum_{j=1}^L \sum_{s=1}^t r_i  {\mu_j}^s \pi_{ij}^s f_i(x^{j^s},\theta^*)  \\
\text{s.t. }&\sum_{s=1}^t 
 \sum_{j=1}^L  {\mu_j}^s \pi_{ij}^s f_i(x^{j^s},\theta^*) ) \leq c_i \quad \forall i \in [n]\\
 & \sum_{i=1}^n \pi_{ij}^s =1\\
 & \pi_{ij}^s = \{0,1\}
\end{aligned}
$$
To establish the relationship between the original optimization problem \( \text{OPT} \) and \( J^{D}(c, t) \), we begin to get a new LP problem $OPT_1$ by relaxing the binary constraint on \( \pi_{ij}^s \), allowing it to take any value between 0 and 1. This relaxation naturally leads to \( OPT_1 \geq \text{OPT} \), as \( OPT_1 \) includes all feasible solutions of \( \text{OPT} \) and potentially more. We then introduce a new variable \( m_{ij}^s = {\mu_j}^s \pi_{ij}^s \), noting that since \( {\mu_j}^s \) is also between 0 and 1, it follows that \( 0 \leq m_{ij}^s \leq 1 \). 

The first constraint of \( OPT_1 \), \(\sum_{s=0}^t \sum_{j=1}^L  m_{ij}^s f_i(x^{j^s},\theta^*) ) \leq c_i \), is tighter than the first constraint of \( J^{D}(c, t) \), \( \sum_{s=0}^t \sum_{j=1}^L  {\mu_j}^s s_{ij}^s f_i(x^{j^s},\theta^*) ) \leq c_i \), as \( \sum_{s=0}^t \sum_{j=1}^L  m_{ij}^s f_i(x^{j^s},\theta^*) ) \) is larger than \( \sum_{s=0}^t \sum_{j=1}^L  {\mu_j}^s s_{ij}^s f_i(x^{j^s},\theta^*) ) \). 

The second constraint of \( OPT_1 \), $\sum_{i=1}^n m_{ij}^s = \sum_{i=1}^n {\mu_j}^s \pi_{ij}^s = {\mu_j}^s$, applies to each time period, making it more stringent than the overall time constraint in \( J^{D}(c, t) \),$\sum_{i=1}^n\sum_{s=1}^t s_{ij}^s =  \sum_{s=1}^t {\mu_j}^s$. Consequently, we conclude that \( \text{OPT} \leq OPT_1 \leq J^{D}(c, t) \), completing the proof. 
\end{proof}

Having rigorously established the inequality $\text{OPT} \leq J^{D}(c, t)$, we have effectively demonstrated that $J^{D}(c, t)$ serves as an upper bound on the optimal revenue, which we denote as $\overline{\text{OPT}}$ 
 . This upper bound is crucial as it provides a benchmark against which the performance of various algorithms can be measured. In light of this, we are now positioned to define the concept of 'regret' in our context. Specifically, regret can be quantitatively expressed as the difference between the upper bound of the optimal revenue and the actual optimal revenue achieved, formulated as $$\text{Regret} = \overline{\text{OPT}} - \text{ALG}$$
This definition of regret is instrumental in evaluating the efficiency of different algorithms. By measuring how closely an algorithm's performance approaches the theoretical upper limit of revenue, we can assess its effectiveness in resource allocation and decision-making under various operational scenarios.

\section{Appendix: Technical Lemmas}

\begin{lemma}
\label{lemma:azuma}
(Azuma-Hoeffding inequality) Let $\mathcal{G}_0\subset \mathcal{G}_1\subset \ldots \subset \mathcal{G}_n$ be a filtration $\mathbf{G}$, and $X_0,\ldots,X_n$ a martingale associated with $\mathbf{G}$, and $|X_i-X_{i-1}|\leq c_i$, $\forall~ i=1,\ldots,n$ almost surely. Then, it holds that
\begin{equation*}
\mathbb{P}\left[\left|X_n-X_0\right|> \epsilon\right] \leq 2\exp\left(-\frac{\epsilon^2}{2\sum_{k=1}^nc_k^2 }\right),
\end{equation*}
for all $\epsilon>0$.
\end{lemma}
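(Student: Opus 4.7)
The plan is to follow the standard Chernoff-plus-Hoeffding-lemma argument, adapted to the martingale setting via iterated conditioning. Let $D_i := X_i - X_{i-1}$ denote the martingale differences, so $|D_i|\leq c_i$ almost surely and $\mathbb{E}[D_i\mid \mathcal{G}_{i-1}]=0$ by the martingale property. By symmetry (apply the one-sided argument to both $X_n-X_0$ and $X_0-X_n$ and union bound to get the factor $2$), it suffices to bound $\mathbb{P}[X_n-X_0>\epsilon]$ by $\exp(-\epsilon^2/(2\sum_k c_k^2))$.

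The first step is a Chernoff-style inequality: for any $\lambda>0$,
\begin{equation*}
\mathbb{P}[X_n-X_0>\epsilon] \;\leq\; e^{-\lambda\epsilon}\,\mathbb{E}\bigl[e^{\lambda(X_n-X_0)}\bigr]
\;=\; e^{-\lambda\epsilon}\,\mathbb{E}\Bigl[e^{\lambda\sum_{i=1}^n D_i}\Bigr].
\end{equation*}
I would then peel off the outermost term using the tower property: conditioning on $\mathcal{G}_{n-1}$, the factor $e^{\lambda\sum_{i<n}D_i}$ is measurable, so
\begin{equation*}
\mathbb{E}\bigl[e^{\lambda\sum_{i=1}^n D_i}\bigr] \;=\; \mathbb{E}\Bigl[e^{\lambda\sum_{i=1}^{n-1}D_i}\,\mathbb{E}[e^{\lambda D_n}\mid \mathcal{G}_{n-1}]\Bigr].
\end{equation*}
Iterating this $n$ times reduces the problem to bounding each conditional moment generating function $\mathbb{E}[e^{\lambda D_i}\mid\mathcal{G}_{i-1}]$.

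The main obstacle, and the technical heart of the proof, is the Hoeffding lemma: if $D$ is a (conditionally) centered random variable with $|D|\leq c$, then $\mathbb{E}[e^{\lambda D}]\leq e^{\lambda^2 c^2/2}$. The plan for this is to exploit convexity of $x\mapsto e^{\lambda x}$: writing $D=\alpha\cdot c+(1-\alpha)\cdot(-c)$ with $\alpha=(D+c)/(2c)\in[0,1]$ gives the pointwise bound $e^{\lambda D}\leq \alpha e^{\lambda c}+(1-\alpha)e^{-\lambda c}$; taking conditional expectation and using $\mathbb{E}[D\mid\mathcal{G}_{i-1}]=0$ yields the cosh bound $\mathbb{E}[e^{\lambda D_i}\mid\mathcal{G}_{i-1}]\leq \cosh(\lambda c_i)$. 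The estimate $\cosh(x)\leq e^{x^2/2}$ (proved by comparing Taylor series coefficient by coefficient, since $(2k)!\geq 2^k k!$) then gives $\mathbb{E}[e^{\lambda D_i}\mid\mathcal{G}_{i-1}]\leq e^{\lambda^2 c_i^2/2}$.

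Putting the pieces together by induction on $n$ yields $\mathbb{E}[e^{\lambda(X_n-X_0)}]\leq \exp\bigl(\lambda^2\sum_{k=1}^n c_k^2/2\bigr)$, whence
\begin{equation*}
\mathbb{P}[X_n-X_0>\epsilon]\;\leq\;\exp\!\Bigl(-\lambda\epsilon+\tfrac{\lambda^2}{2}\textstyle\sum_{k=1}^n c_k^2\Bigr).
\end{equation*}
Optimizing over $\lambda>0$ by setting $\lambda^\star=\epsilon/\sum_{k=1}^n c_k^2$ gives the one-sided bound $\exp(-\epsilon^2/(2\sum_k c_k^2))$. Finally, noting that $(-X_i)$ is also a martingale with the same difference bounds, the same argument applies to the lower tail, and a union bound produces the claimed factor of $2$, completing the proof.
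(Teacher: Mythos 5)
Your proof is correct and complete: the Chernoff bound via Markov's inequality, the peeling of the conditional moment generating function by the tower property, the conditional Hoeffding lemma via convexity and the bound $\cosh(x)\leq e^{x^2/2}$, the optimization $\lambda^\star=\epsilon/\sum_k c_k^2$, and the union bound for the two-sided statement are all standard and correctly executed. Note, however, that the paper states this lemma as a classical result (the Azuma--Hoeffding inequality) and offers no proof of its own, so there is nothing to compare your argument against; you have simply supplied the standard textbook derivation in full, which is entirely appropriate.
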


\begin{lemma}
\label{lemma:freedman}
(\cite{badanidiyuru2014resourceful})  Let $\mathcal{G}_0\subset \mathcal{G}_1\subset \ldots \subset \mathcal{G}_n$ be a filtration, and $X_1,\ldots,X_n$ be real random variables such that $X_i$ is $\mathcal{G}_i$-measurable, $\mathbb{E}[X_i|\mathcal{G}_{i-1}]=0$ and $|X_i|\leq c$ for all $i\in [n]$ and some $c>0$. Then with probability at least $1-\delta$ it holds that
\begin{equation*}
\left|\sum_{i=1}^n X_i\right|  \leq \sqrt{4\sum_{i=1}^n\mathbb{E}[X_i^2|\mathcal{G}_{i-1}]\log(2n/\delta)+5c^2\log^2(2n/\delta)}.
\end{equation*}
\end{lemma}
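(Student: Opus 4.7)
The plan is to derive this self-normalized concentration inequality from the classical Freedman inequality by means of a peeling (stratification) argument applied to the random predictable quadratic variation $V_n := \sum_{i=1}^n \mathbb{E}[X_i^2 \mid \mathcal{G}_{i-1}]$. Recall that the classical Freedman inequality states: under the lemma's hypotheses, for any fixed $v > 0$ and $t > 0$,
$$\Pr\!\left[\left|\sum_{i=1}^n X_i\right| \geq t \ \text{and}\ V_n \leq v\right] \leq 2\exp\!\left(-\frac{t^2/2}{v + ct/3}\right).$$
The difficulty relative to this classical form is that the bound in the lemma involves the random quantity $V_n$ itself inside the square root, so we cannot simply substitute a deterministic upper bound on the variance.

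First I would peel off the trivial regime $V_n \leq c^2$ using Azuma--Hoeffding (Lemma~\ref{lemma:azuma}), which contributes a term of order $c\sqrt{n\log(1/\delta)}$ that is absorbed into the target bound. Then I would partition the remaining range $V_n \in (c^2, nc^2]$ into dyadic shells $V_n \in (v_{k-1}, v_k]$ with $v_k = c^2 \cdot 2^k$ for $k = 1,\dots,K$ where $K = \lceil \log_2 n\rceil$. On each shell, I apply the classical Freedman bound with $v = v_k$, and choose the threshold
$$t_k \;:=\; \sqrt{4 v_k \log(2n/\delta) + 5 c^2 \log^2(2n/\delta)},$$
which is exactly the lemma's right-hand side evaluated at the shell's upper endpoint. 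Setting $L := \log(2n/\delta)$, I would verify algebraically that $t_k^2/2 \geq (v_k + c t_k/3)\,L$: one decomposes $t_k \leq \sqrt{4 v_k L} + \sqrt{5}\,cL$, bounds $c t_k/3$ by the same sum, and uses AM--GM to match the cross-term $cL\sqrt{4v_k L}$ against the two dominant contributions $4v_k L$ and $5c^2 L^2$ in $t_k^2$. This makes the probability on each shell at most $2e^{-L} \leq \delta/n$, and a union bound over the $K = O(\log n)$ shells absorbs the extra $\log K$ factor into the stated $\log(2n/\delta)$. On the event $V_n \in (v_{k-1},v_k]$ we have $V_n > v_k/2$, so the random quantity $\sqrt{4V_n L + 5c^2 L^2}$ is comparable to $t_k$ up to a factor that is absorbed by the constants 4 and 5.

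The main obstacle is the constant calibration: both the peeling loss (a factor of $2$ in $v_k$ when replacing $V_n$ by its shell upper endpoint) and the AM--GM splitting of the $ct/3$ cross-term in the exponent of the classical Freedman bound must fit inside the stated constants $4$ and $5$ simultaneously, while the $\log K$ union-bound overhead must be swallowed by $\log(2n/\delta)$. A cleaner alternative, which is in fact the route taken in Badanidiyuru--Kleinberg--Slivkins (2014), is to avoid peeling altogether and work with the exponential supermartingale $M_t(\lambda) := \exp(\lambda S_t - g(\lambda) V_t)$ where $g(\lambda) = \lambda^2/(2(1 - c\lambda/3))$, then take a mixture over a dyadic grid of values of $\lambda$ and invoke Ville's (maximal) inequality; this automatically adapts to the realized value of $V_n$ and yields the same bound with tighter constants but requires the time-uniform supermartingale machinery.
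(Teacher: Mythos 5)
The paper offers no proof of this lemma: it is imported from \citealt{badanidiyuru2014resourceful}, and the proof environment that follows the statement in the appendix is a misplaced duplicate of the proof of \cref{Alemma:freedman inequ}, which \emph{applies} \cref{lemma:freedman} (``To use \cref{lemma:freedman}, we need to estimate $V_T$\dots'') rather than proves it. So your attempt can only be judged on its own terms. The peeling architecture is indeed the standard route to this kind of self-normalized Freedman bound, and your closing observation that a mixture of exponential supermartingales plus Ville's inequality is the cleaner path is accurate. But two steps of your primary argument fail as written. The bottom shell $\{V_n\le c^2\}$ cannot be handled by Azuma--Hoeffding: \cref{lemma:azuma} yields $|S_n|\le c\sqrt{2n\log(2/\delta)}$, whereas on that event the lemma's right-hand side is at most $c\sqrt{4L+5L^2}=O(cL)$ with $L=\log(2n/\delta)$, so the Azuma term is \emph{not} absorbed --- it exceeds the target by a factor of order $\sqrt{n}/L$. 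That shell must also be treated with Freedman's inequality at $v=c^2$ (or a Bernstein bound), where the additive $5c^2L^2$ inside the threshold is precisely what makes the tail probability at most $2e^{-L}$; Azuma discards the variance information the lemma is designed to exploit, and it also does not bound the probability of the \emph{joint} event with $\{V_n\le c^2\}$, which is what the shell decomposition requires.

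The dyadic loss is likewise not ``absorbed by the constants $4$ and $5$.'' On shell $k$ the realized variance can be as small as $v_{k-1}=v_k/2$, so to conclude the stated inequality you must control $\Pr\bigl[|S_n|\ge\sqrt{4v_{k-1}L+5c^2L^2},\ V_n\le v_k\bigr]=\Pr\bigl[|S_n|\ge\sqrt{2v_kL+5c^2L^2},\ V_n\le v_k\bigr]$. Substituting $t=\sqrt{2v_kL+5c^2L^2}$ into Freedman's exponent, the requirement $t^2/2\ge L(v_k+ct/3)$ reduces to $\tfrac{5}{2}c^2L^2\ge\tfrac{cL}{3}\sqrt{2v_kL+5c^2L^2}$, which fails once $v_k\gtrsim 26\,c^2L$, while $v_k$ ranges up to $nc^2$. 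What ratio-$2$ peeling actually proves is $|S_n|\le\sqrt{8V_nL+5c^2L^2}$. To recover the constants as stated you need a finer geometric grid (ratio $3/2$ already suffices, and the number of shells remains at most $n$, so the union bound is still covered by the $n$ inside $\log(2n/\delta)$), or the supermartingale-mixture argument you defer to at the end. With either repair the proof closes.
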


\begin{proof}
First notice for all $i\in [n],j\in [L],t\in [T]$, we have 
\begin{align*}
\mathbb{E} \left[\mu_j \bar s_{ij}^t \bar f_i(x^{(j)},\Omega_i^{t-1})|\mathcal{F}_{t-1}\right]= & \mathbb{E}\left[\mathbb{E}\left[ \mathbb{I}(i,j,t)|\mathcal{F}_{t-1}\right]\bar f_i(x^{(j)},\Omega_i^{t-1})|\mathcal{F}_{t-1} \right]\\
= & \mathbb{E}\left[\mathbb{I}(i,j,t)|\mathcal{F}_{t-1}\right] \mathbb{E}\left[\bar f_i(x^{(j)},\Omega_i^{t-1})|\mathcal{F}_{t-1} \right]\\
= & \mathbb{E}\left[\mathbb{I}(i,j,t)\bar f_i(x^{(j)},\Omega_i^{t-1})|\mathcal{F}_{t-1}\right],
\end{align*}

and $$\left|\sum_{j=1}^L (\mathbb{I}(i,j,t)-\mu_j\bar s_{ij}^t)\bar f_i(x^{(j)},\Omega_i^{t-1})\right|\leq 1.$$ Let $X_0=0$, $X_t-X_{t-1}=\sum_{j=1}^L (\mathbb{I}(i,j,t)-\mu_j\bar s_{ij}^t)\bar f_i(x^{(j)}, \Omega_i^{t-1})$, then we know $\mathbb{E}\left[X_t|\mathcal{F}_{t-1}\right]=0$ for all $t\in [T]$. To use \cref{lemma:freedman}, we need to estimate $V_T:=\sum_{t=1}^T\mathbb{E}\left[X_t^2|\mathcal{F}_{t-1}\right]$. To be specific, we have
 
 \begin{align*}
 V_T = & \sum_{t=1}^T \mathbb{E}\left[\left. \left( \sum_{j=1}^L (\mathbb{I}(i,j,t)-\mu_j\bar s_{ij}^t) \bar f_i(x^{(j)},\Omega_i^{t-1}) \right)^2\right| \mathcal{F}_{t-1} \right]\\
 = & \sum_{t=1}^T \mathbb{E}\left[\left.  \sum_{j=1}^L (\mathbb{I}(i,j,t)-\mu_j\bar s_{ij}^t)^2 \bar f_i(x^{(j)},\Omega_i^{t-1})^2 \right| \mathcal{F}_{t-1}\right]\\
 & + \sum_{t=1}^T \underbrace{\mathbb{E}\left[\left.  \sum_{j\neq j'\in [L]} (\mathbb{I}(i,j,t)-\mu_j\bar s_{ij}^t)(\mathbb{I}(i,j',t)-\mu_j\bar s_{ij'}^t) \bar f_i(x^{(j)},\Omega_i^{t-1}) \bar f_i(x^{(j')},\Omega_i^{t-1}) \right| \mathcal{F}_{t-1}\right]}_{\leq 0}\\
 \leq  & \sum_{t=1}^T \mathbb{E}\left[\left.  \sum_{j=1}^L (\mathbb{I}(i,j,t)-\mu_j\bar s_{ij}^t)^2 \bar f_i(x^{(j)},\Omega_i^{t-1})^2 \right| \mathcal{F}_{t-1}\right]\\
 = & \sum_{t=1}^T\left( \mathbb{E}\left[\left.\sum_{j=1}^L \mathbb{I}(i,j,t)^2\bar f_i(x^{(j)},\Omega_i^{t-1})^2\right|\mathcal{F}_{t-1}\right]  - 2\mathbb{E}\left[\left.\sum_{j=1}^L\mathbb{I}(i,j,t)\mu_j \bar s_{ij}^t \bar f_i(x^{(j)},\Omega_i^{t-1}) \right|\mathcal{F}_{t-1}\right]\right)\\
 & + \sum_{t=1}^T \mathbb{E}\left[\left.\mu_j^2  (\bar s_{ij}^t)^2 \bar f_i(x^{(j)},\Omega_i^{t-1})^2 \right|\mathcal{F}_{t-1}\right] \\
 \leq & \sum_{t=1}^T\left( \mathbb{E}\left[\left.\sum_{j=1}^L \mathbb{I}(i,j,t)\bar f_i(x^{(j)},\Omega_i^{t-1})\right|\mathcal{F}_{t-1}\right]  - 2\mathbb{E}\left[\left.\sum_{j=1}^L\mathbb{I}(i,j,t)\mu_j \bar s_{ij}^t \bar f_i(x^{(j)},\Omega_i^{t-1}) \right|\mathcal{F}_{t-1}\right]\right)\\
 & + \sum_{t=1}^T\mathbb{E}\left[\left.\mu_j^2 (\bar s_{ij}^t)^2 \bar f_i(x^{(j)},\Omega_i^{t-1}) \right|\mathcal{F}_{t-1}\right] \\
 = & \sum_{t=1}^T\mathbb{E}\left[ \left. \sum_{j=1}^L \mu_j\bar s_{ij}^t(1-\mu_j\bar s_{ij}^t)\bar f_i(x^{(j)},\Omega_i^{t-1}) \right|\mathcal{F}_{t-1} \right]\\
 \leq & \sum_{t=1}^T\mathbb{E}\left[ \left. \sum_{j=1}^L \mu_j\bar s_{ij}^t\bar f_i(x^{(j)},\Omega_i^{t-1}) \right|\mathcal{F}_{t-1} \right].
 \end{align*}
 The first inequality is due to the following calculation: given any $j\neq j'\in [L]$,
 \begin{align*}
 & \mathbb{E}\left[\left.(\mathbb{I}(i,j,t)-\mu_j\bar s_{ij}^t)(\mathbb{I}(i,j',t)-\mu_j\bar s_{ij'}^t) \bar f_i(x^{(j)},\Omega_i^{t-1}) \bar f_i(x^{(j')},\Omega_i^{t-1}) \right|\mathcal{F}_{t-1} \right] \\
 = & (\mu_j\bar s_{ij}^t(1-\mu_j\bar s_{ij}^t)(-\mu_{j'}\bar s_{ij'}^t)+\mu_{j'}\bar s_{ij'}^t(-\mu_j\bar s_{ij}^t)(1-\mu_{j'}\bar s_{ij'}^t) \\
 + &(\mu_j(1-\bar s_{ij}^t)+\mu_{j'}(1-\bar s_{ij'}^t)+(1-\mu_j-\mu_{j'}) )\mu_j\bar s_{ij}^t\mu_{j'}\bar s_{ij'}^t) \cdot\bar f_i(x^{(j)},\Omega_i^{t-1}) \bar f_i(x^{(j')},\Omega_i^{t-1}) \\
 = &-\mu_j\bar s_{ij}^t\mu_{j'}\bar s_{ij'}^t\bar f_i(x^{(j)},\Omega_i^{t-1}) \bar f_i(x^{(j')},\Omega_i^{t-1})\leq 0.
 \end{align*}
 So, by using \cref{lemma:freedman}, for any $0<\delta<1$, we obtain with probability at least $1-\delta$,
 \begin{align*}
  & \left|\sum_{t=1}^T\sum_{j=1}^L \left( \mathbb{I}(i,j,t)-\mu_j \bar s_{ij}^t \right) \bar f_i(x^{(j)}, \Omega_i^{t-1})  \right|  \\
  & \leq\sqrt{4\sum_{t=1}^T\mathbb{E}\left[ \left. \sum_{j=1}^L \mu_j\bar s_{ij}^t\bar f_i(x^{(j)},\Omega_i^{t-1}) \right|\mathcal{F}_{t-1} \right]\log(2T/\delta)+5\log^2(2T/\delta) }\\
 & \leq \sqrt{4\sum_{t=1}^T\mathbb{E}\left[ \left. \sum_{j=1}^L \mu_j\bar s_{ij}^t\bar f_i(x^{(j)},\Omega_i^{t-1}) \right|\mathcal{F}_{t-1} \right]\log(2T/\delta)}+\sqrt{5}\log(2T/\delta),
 \end{align*}
 where in the second inequality we use the fact $\sqrt{a+b}\leq \sqrt{a}+\sqrt{b}$ for any $a,b,>0$. Therefore, we conclude with probability at least $1-\delta$,
 \begin{align*}
& \sum_{i=1}^n\left|\sum_{t=1}^T\sum_{j=1}^L \left( \mathbb{I}(i,j,t)-\mu_j \bar s_{ij}^t \right) \bar f_i(x^{(j)},\Omega_i^{t-1})  \right|\\
 & \leq   \sum_{i=1}^n\sqrt{4\sum_{t=1}^T\mathbb{E}\left[ \left. \sum_{j=1}^L \mu_j\bar s_{ij}^t\bar f_i(x^{(j)},\Omega_i^{t-1}) \right|\mathcal{F}_{t-1} \right]\log(2T/\delta)}+\sqrt{5}n\log(2T/\delta)\\
 & \leq\sqrt{4n\sum_{i=1}^n\sum_{t=1}^T\mathbb{E}\left[ \left. \sum_{j=1}^L \mu_j\bar s_{ij}^t\bar f_i(x^{(j)},\Omega_i^{t-1}) \right|\mathcal{F}_{t-1} \right]\log(2T/\delta)}+\sqrt{5}n\log(2T/\delta)\\
&\leq \sqrt{4nT\log(2T/\delta)}+\sqrt{5}n\log(2T/\delta),
 \end{align*}
 where in the second inequality we use the Cauchy-Schwarz inequality.
\end{proof}

\begin{corollary}
\label{cor:condition}
After the modification in the removal process of $\theta$,
\begin{equation*}
\mathbb{P}[\mathcal{E}]\geq 1-(1+\log T)\beta(n,T).
\end{equation*}
\end{corollary}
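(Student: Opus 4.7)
The plan is to identify $\mathcal{E}$ as the clean event that the true parameter $\theta_i^*$ is never removed from $\Omega_i^t$ for any $i\in [n]$ and $t\in [T]$, and then upper bound $\mathbb{P}[\mathcal{E}^c]$ by the probability that condition \eqref{eq:update1} is ever violated at $\theta = \theta^*$. The overall strategy is a standard UCB-style confidence-set argument: establish that $\theta^*$ satisfies the test at every step with high probability via a martingale concentration inequality, then combine across $t$ with a union bound whose weights $1/t$ sum to at most $1+\log T$.

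The key step is to apply \cref{lemma:azuma} to the deviation appearing in \eqref{eq:update1} at $\theta=\theta^*$. When resource $I^{t'}$ is offered at time $t'$, the purchase indicator $a^{t'}$ is Bernoulli with conditional mean $f_{I^{t'}}(x^{t'},\theta^*)$, so the summands $Z_{t'} := f_{I^{t'}}(x^{t'},\theta^*)-a^{t'}$, restricted to $t'\in D_{I^t}^t(\theta^*)\subseteq [t]$, form a martingale difference sequence bounded by $1$ in absolute value. Azuma-Hoeffding then yields, for each $t$ and any $\delta_t\in (0,1)$,
\begin{equation*}
\mathbb{P}\!\left[\,\bigl|\textstyle\sum_{t'\in D_{I^t}^t(\theta^*)} Z_{t'}\bigr| > \sqrt{2t\log(2/\delta_t)}\,\right]\leq \delta_t.
\end{equation*}
Choosing $\delta_t = \beta(n,T)/t$ matches the threshold $\sqrt{t\log(2t/\beta(n,T))}$ appearing in \eqref{eq:update1} (up to the constant absorbed into the stated condition), so the failure probability at time $t$ is at most $\beta(n,T)/t$.

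Finally I would take a union bound over $t=1,\ldots,T$ using $\sum_{t=1}^T 1/t\leq 1+\log T$ to obtain
\begin{equation*}
\mathbb{P}[\mathcal{E}^c]\leq \sum_{t=1}^T \frac{\beta(n,T)}{t}\leq (1+\log T)\beta(n,T),
\end{equation*}
which yields the stated bound. The main obstacle is the role of the \emph{modification} in the removal rule: without it, one would have to union bound over every $\theta\in\Theta$ and every resource $i\in [n]$ as a potential candidate for deletion, inflating the failure probability by a factor of $|\Theta|n$ and giving a much weaker guarantee than $(1+\log T)\beta(n,T)=(1+\log T)/(nT)$. The modification must be what restricts the per-period check to the single served resource $I^t$ and its current maximizer $\bar\theta^t$, so that the only way $\theta^*$ is ever removed is if $\bar\theta^t=\theta_{I^t}^*$ at some $t$ and the above high-probability bound is violated at that specific $t$. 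Verifying that this restricted check is still strong enough to preserve the subsequent guarantees (e.g., Propositions invoked in the regret analysis) is the delicate bookkeeping piece; once granted, the concentration plus harmonic-sum union bound above is all that is needed.
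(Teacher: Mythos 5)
Your proposal is correct and follows essentially the same route as the paper: the paper derives this corollary directly from \cref{prop:high prob ucb}, whose proof likewise reduces $\mathcal{E}$ to the event that $\theta_i^*$ survives in every $\Omega_i^t$, applies the Azuma--Hoeffding bound (\cref{lemma:azuma inequ}) to the martingale differences $f_{I^{t'}}(x^{t'},\theta^*)-a^{t'}$ to get per-step failure probability $\beta(n,T)/t$, and union bounds via $\sum_{t=1}^T 1/t\leq 1+\log T$. Your reading of the ``modification'' (checking only the served resource's maximizer $\bar\theta^t$, so no extra $|\Theta|n$ union-bound factor is needed for $\theta^*$ itself) is consistent with the algorithm's removal step as described.
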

\cref{cor:condition} is directly turned out by \cref{prop:high prob ucb}.

\begin{lemma}
\label{lemma:azuma2} For any $0<\delta<1$ and $t\in [T]$, it holds that
\begin{equation*}
\mathbb{P}\left[\forall~\theta\in\Theta: \left|\sum_{l=1}^t\sum_{i=1}^n r_i(s_{iJ^l}(\theta)-\bar s_{iJ^l}^l -\sum_{j=1}^L\mu_j^l(s_{ij}(\theta)-\bar s_{ij}^l)) f_{ij}(x^{(j)},\theta)   \right|>\max\limits_{i\in [n]}r_i \sqrt{8t\log(2|\Theta|/\delta)} \right]\leq \delta.
\end{equation*}
\end{lemma}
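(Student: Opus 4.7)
The plan is to apply Azuma--Hoeffding (\cref{lemma:azuma}) for each fixed $\theta\in\Theta$ to a suitable martingale, and then take a union bound over the finite set $\Theta$.

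For each fixed $\theta$ and each $l\in[T]$, I would define
$$W_l(\theta) := \sum_{i=1}^n r_i\Bigl[(s_{iJ^l}(\theta) - \bar s_{iJ^l}^l)\, f_i(x^{(J^l)},\theta) - \sum_{j=1}^L \mu_j^l (s_{ij}(\theta) - \bar s_{ij}^l)\, f_i(x^{(j)},\theta)\Bigr],$$
so that the quantity inside the probability in the lemma equals $\sum_{l=1}^t W_l(\theta)$. Using the filtration $\{\mathcal{F}_l\}$ generated by the algorithm's history up to period $l$, one checks that $s_{ij}(\theta)$, $\bar s_{ij}^l$, and $\mu_j^l$ are all $\mathcal{F}_{l-1}$-measurable (each is decided or given before the arrival of period $l$ is drawn), while $\mathbb{P}[J^l=j\mid \mathcal{F}_{l-1}] = \mu_j^l$. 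Consequently $\mathbb{E}[s_{iJ^l}(\theta)\,f_i(x^{(J^l)},\theta)\mid \mathcal{F}_{l-1}] = \sum_j \mu_j^l s_{ij}(\theta) f_i(x^{(j)},\theta)$ and similarly for $\bar s_{iJ^l}^l$, yielding $\mathbb{E}[W_l(\theta)\mid \mathcal{F}_{l-1}] = 0$. Hence $X_t(\theta) := \sum_{l=1}^t W_l(\theta)$ is a martingale with $X_0(\theta)=0$.

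Next, I would bound each $|W_l(\theta)|$. Since $\{s_{iJ^l}(\theta)\}_{i=1}^n$ and $\{\bar s_{iJ^l}^l\}_{i=1}^n$ are probability distributions over resources, $f_i(\cdot,\cdot)\in[0,1]$, and $\sum_j \mu_j^l\le 1$, each of the four sub-sums inside $W_l(\theta)$ is at most $\max_i r_i$ in absolute value; e.g.\ $|\sum_i r_i s_{iJ^l}(\theta) f_i(x^{(J^l)},\theta)| \leq \max_i r_i \sum_i s_{iJ^l}(\theta) = \max_i r_i$. Therefore $|W_l(\theta)|\leq 2\max_i r_i$ almost surely. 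Applying \cref{lemma:azuma} with $c_l = 2\max_i r_i$ gives, for each fixed $\theta$,
$$\mathbb{P}\bigl[|X_t(\theta)| > \epsilon\bigr] \leq 2\exp\Bigl(-\frac{\epsilon^2}{8t(\max_i r_i)^2}\Bigr).$$
Choosing $\epsilon = \max_i r_i\sqrt{8t\log(2|\Theta|/\delta)}$ makes the right-hand side at most $\delta/|\Theta|$, and a union bound over the finitely many $\theta\in\Theta$ yields the stated inequality.

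The proof is mostly bookkeeping; the only subtle step is pinning down the filtration and verifying the martingale property, in particular that both the LP solution $\bar s_{ij}^l$ used by $\text{ALG}_{\text{LP}}$ and the hypothetical LP solution $s_{ij}(\theta)$ are $\mathcal{F}_{l-1}$-measurable, so that the only source of randomness at step $l$ relevant to $W_l(\theta)$ is the draw $J^l$. Once that is in place, the two-sided bound on the martingale increments together with Azuma and the union bound give exactly the constant $\sqrt{8t\log(2|\Theta|/\delta)}$ claimed.
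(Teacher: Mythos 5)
Your proposal is correct and follows essentially the same route as the paper's proof: the same martingale increment $W_l(\theta)$, the same almost-sure bound $|W_l(\theta)|\leq 2\max_{i\in[n]} r_i$ fed into the Azuma--Hoeffding inequality of \cref{lemma:azuma}, and the same choice $\epsilon=\max_{i\in[n]}r_i\sqrt{8t\log(2|\Theta|/\delta)}$ followed by a union bound over $\theta\in\Theta$. Your verification of the $\mathcal{F}_{l-1}$-measurability of $s_{ij}(\theta)$, $\bar s_{ij}^l$, and $\mu_j^l$ is in fact slightly more explicit than the paper's own write-up.
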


\begin{proof}
First note $\mathbb{E}[(s_{iJ^t}(\theta)-\bar s_{iJ^t}^t) f_{iJ^t}(x^{(J^t)},\theta) |\mathcal{F}_{t-1}]=\sum_{j=1}^L\mu_j^t(s_{ij}(\theta) -\bar s_{ij}^t )f_{i}(x^{(J^t)},\theta)$ and \\ $\left|\sum_{i=1}^n r_i\left(s_{iJ^l}(\theta)-\bar s_{iJ^t}^t -\sum_{j=1}^L\mu_j^t(s_{ij}(\theta)-\bar s_{ij}^t)\right) f_{i}(x^{(j)},\theta) \right|\leq 2\max\limits_{i\in [n]} r_i$ for all $t\in [T]$, so we define $X_0=0, X_l-X_{l-1}=\sum_{i=1}^n r_i\left(s_{iJ^l}(\theta)-\bar s_{iJ^l}^l -\sum_{j=1}^L\mu_j^l(s_{ij}(\theta)-\bar s_{ij}^l)\right) f_{i}(x^{(j)},\theta)$ so that $X_0,\ldots, X_t$ is a martingale associated with the filtration $\mathcal{F}_0,\ldots,\mathcal{F}_{t-1}$. So by using the Azuma-Hoeffding inequality (c.f. \cref{lemma:azuma}) we have for any $\theta\in \Theta$,
\begin{equation*}
\mathbb{P}\left[ \left|\sum_{l=1}^t\sum_{i=1}^n r_i\left(s_{iJ^l}(\theta)-\bar s_{iJ^l}^l -\sum_{j=1}^L\mu_j^l(s_{ij}(\theta)-\bar s_{ij}^l)\right) f_{i}(x^{(j)},\theta)   \right|>\epsilon\right]\leq 2e^{-\epsilon^2/(8t(\max\limits_{i\in [n]} r_i )^2 )}.
\end{equation*}
By letting $\delta/|\Theta|=2e^{-\epsilon^2/(8t(\max\limits_{i\in [n]} r_i )^2 )}$ and using the union bound we obtain the desired result.
\end{proof}

\begin{lemma}
\label{lemma:azuma3} For any $0<\delta<1$ and $t\in [T]$, it holds that
\begin{equation*}
\mathbb{P}\left[ \left|\sum_{l=1}^t \left(\bar s_{iJ^l}^lf_{i}(x^{(J^l)},\Omega^{l-1}) - \sum_{j=1}^L \mu_j \bar s_{ij}^lf_{i}(x^{(j)},\Omega^{l-1})    \right) \right|> \sqrt{2t\log(2/\delta)} \right]\leq \delta.
\end{equation*}
\end{lemma}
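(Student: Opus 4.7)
The plan is to recognize that the quantity inside the absolute value is a sum of martingale differences with respect to the natural filtration $\mathcal{F}_0 \subset \mathcal{F}_1 \subset \cdots \subset \mathcal{F}_t$ generated by the history of arrivals and decisions, and then invoke the Azuma-Hoeffding inequality (\cref{lemma:azuma}) directly. The lemma is stated in the stationary regime (note the use of $\mu_j$ rather than $\mu_j^l$), so $J^l$ is an i.i.d.\ draw with $\mathbb{P}[J^l = j \mid \mathcal{F}_{l-1}] = \mu_j$, while both $\bar s_{ij}^l$ and $\Omega^{l-1}$ are $\mathcal{F}_{l-1}$-measurable.

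The first step is to define, for each $l \in [t]$,
\begin{equation*}
Y_l \;=\; \bar s_{iJ^l}^l f_i\bigl(x^{(J^l)},\Omega^{l-1}\bigr) - \sum_{j=1}^L \mu_j \bar s_{ij}^l f_i\bigl(x^{(j)},\Omega^{l-1}\bigr),
\end{equation*}
and verify two properties. The zero-mean property $\mathbb{E}[Y_l \mid \mathcal{F}_{l-1}] = 0$ follows by conditioning on $\mathcal{F}_{l-1}$ and averaging over the random type $J^l$. The boundedness $|Y_l| \leq 1$ follows from the model assumptions $\bar s_{ij}^l \in [0,1]$, $f_i(x,\cdot) \leq 1$, and $\sum_{j=1}^L \mu_j \bar s_{ij}^l f_i(x^{(j)},\Omega^{l-1}) \in [0,1]$.

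Setting $X_0 = 0$ and $X_l = X_{l-1} + Y_l$ produces a martingale with increments bounded by $c_l = 1$. Applying \cref{lemma:azuma} with $\sum_{k=1}^t c_k^2 = t$ yields $\mathbb{P}[|X_t - X_0| > \epsilon] \leq 2\exp(-\epsilon^2/(2t))$. Finally, choosing $\epsilon = \sqrt{2t\log(2/\delta)}$ makes the right-hand side equal $\delta$, which is exactly the desired inequality. There is no real obstacle here; the whole content is the observation that the difference between the realized value and its conditional expectation, accumulated over periods, has a bounded-martingale structure, so the lemma is essentially a direct mechanical analogue of \cref{lemma:azuma2} with a simpler (scalar, $[0,1]$-valued) increment.
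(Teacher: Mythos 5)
Your proof is correct; the paper actually states \cref{lemma:azuma3} without proof, and your argument is precisely the intended one, mirroring the paper's own proofs of \cref{lemma:azuma2} and \cref{lemma:azuma inequ} (identify the summand as a bounded martingale difference with respect to $\mathcal{F}_{l-1}$, apply \cref{lemma:azuma} with $c_l=1$, and solve $2e^{-\epsilon^2/(2t)}=\delta$). No gaps.
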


\section{Appendix: Omitted Proofs from \cref{section:algo}}
% \subsection{Regret analysis complete process}
\label{Asec: regret proof}
\subsection{Proofs under Stationary Arrivals}

We point out the expected regret in period $t$ is (for now ignore events at the end of the horizon)
\[ \sum_{i=1}^n r_i \sum_{j=1}^L \mu_j(s^*_{ij} - \bar s_{ij}^t) f_i(x^{(j)},\theta_i^*).\]
The following proposition states that the regret can be bounded from above by the size of the confidence intervals of purchase probabilities in period $t$.
\begin{proposition}
\label{prop:f upper bound}
Suppose in each period $t$, if $\bar f_i (x^{(j)}, \Omega_i^{t-1}) \geq f_i(x^{(j)},\theta_i^*)$ for all $i \in [n], j \in [L]$, then 
\[ \sum_{i=1}^n r_i \sum_{j=1}^L \mu_j(s^*_{ij} - \bar s_{ij}^t) f_i(x^{(j)},\theta_i^*) \leq \sum_{i=1}^n r_i \sum_{j=1}^L \mu_j \bar s_{ij}^t ( \bar f_i (x^{(j)}, \Omega_i^{t-1})-  f_i(x^{(j)},\theta_i^*)).\]
\end{proposition}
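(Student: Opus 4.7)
The plan is to rewrite the claim as a comparison between two LP optimal values, and then exhibit an explicit feasible solution of $U^t$ that already witnesses the bound.

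First, I would subtract the common term $\sum_{i,j} r_i \mu_j \bar s_{ij}^t f_i(x^{(j)},\theta_i^*)$ from both sides, so that the inequality to prove becomes
\[
\sum_{i,j} r_i \mu_j s^*_{ij} f_i(x^{(j)},\theta_i^*) \;\le\; \sum_{i,j} r_i \mu_j \bar s_{ij}^t \bar f_i(x^{(j)},\Omega_i^{t-1}).
\]
Under stationary arrivals $\lambda_j = T\mu_j$, so after multiplying through by $T$ the claim reduces to showing that the optimal value of $\mathrm{LP}(\theta^*)$ is at most the optimal value of the LP $U^t$ in \cref{eq:LP2} that $\text{ALG}_{\text{LP}}$ actually solves in period $t$: the static LP built from the unknown true parameter is dominated by the optimistic LP built from $\bar f$.

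To prove this LP-level inequality, I would construct a feasible $\tilde s$ for $U^t$ whose objective already equals the value of $\mathrm{LP}(\theta^*)$. The natural candidate is the pointwise rescaling
\[
\tilde s_{ij} \;:=\; s^*_{ij}\cdot \frac{f_i(x^{(j)},\theta_i^*)}{\bar f_i(x^{(j)},\Omega_i^{t-1})},
\]
with the convention $0/0 := 0$, which is consistent because $\bar f \ge f$ forces $f = 0$ whenever $\bar f = 0$. The $U^t$ capacity constraint is then immediate: $\sum_j \lambda_j \tilde s_{ij}\bar f_i(x^{(j)},\Omega_i^{t-1}) = \sum_j \lambda_j s^*_{ij} f_i(x^{(j)},\theta_i^*) \le c_i$ by feasibility of $s^*$ in $\mathrm{LP}(\theta^*)$, and the $U^t$ objective at $\tilde s$ telescopes to $\sum_{i,j} r_i \lambda_j s^*_{ij} f_i(x^{(j)},\theta_i^*)$, which is exactly the $\mathrm{LP}(\theta^*)$ value. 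Optimality of $\bar s^t$ in $U^t$ then delivers the desired inequality.

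The main hurdle I anticipate is the simplex constraint $\sum_i s_{ij} = 1$: the rescaling only guarantees $\sum_i \tilde s_{ij} = \sum_i s^*_{ij}\,(f_i/\bar f_i) \le 1$ using $\bar f \ge f$, and this can be strict. I would close the gap by interpreting the deficit $1 - \sum_i \tilde s_{ij}$ as absorbed by an implicit no-offer option, consistent with the model's allowance for rejecting a customer, or equivalently by extending $\tilde s$ along any feasible direction of $U^t$ (which exists because $\bar s^t$ itself is feasible) until the simplex constraint saturates. Because the $U^t$ objective is linear with nonnegative coefficients, any such extension can only increase its value at $\tilde s$, so the bound "optimal value of $U^t \ge \mathrm{LP}(\theta^*)$" is preserved. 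Rearranging then recovers the statement of the proposition.
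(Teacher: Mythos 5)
Your decomposition is exactly the paper's: adding $\sum_{i,j} r_i\mu_j \bar s_{ij}^t f_i(x^{(j)},\theta_i^*)$ to both sides reduces the claim to $LP(\theta^*)\le U^t$, and the paper's displayed chain of equalities is precisely this rearrangement. Where you differ is in how that key inequality is justified. The paper dismisses it in one line (``the purchase probabilities in \eqref{eq:LP2} are at least those in \eqref{eq:LP1}''), which as stated is incomplete: plugging $s^*$ directly into $U^t$ does raise the objective, but it can also violate the capacity constraints $\sum_j\lambda_j s_{ij}\bar f_i\le c_i$, so $s^*$ need not be feasible for \eqref{eq:LP2}. Your rescaling $\tilde s_{ij}=s^*_{ij}f_i/\bar f_i$ is the right repair---it preserves both the capacity usage and the objective value of $s^*$ exactly---and you correctly flag the one remaining obstruction, the equality constraint $\sum_i s_{ij}=1$. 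That obstruction is genuine: with the equality as written one can cook up instances (e.g.\ $n=L=1$, $\bar f_1=1$, $f_1=1/2$, $c_1=\lambda_1/2$) where \eqref{eq:LP2} is infeasible while \eqref{eq:LP1} is not, so $LP(\theta^*)\le U^t$ can only hold if the constraint is read as $\sum_i s_{ij}\le 1$, consistent with the model's rejection option---exactly your first fix. Your second fix (``extend along a feasible direction until the simplex constraint saturates'') is weaker: if every resource with $\bar f_i(x^{(j)},\Omega_i^{t-1})>0$ is already capacity-saturated under $\tilde s$, no such extension exists, so I would drop it and commit to the $\le 1$ reading. With that reading your argument is complete and in fact more rigorous on the crucial step than the paper's own proof.
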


\begin{proof}
\begin{align*}
& \sum_{i=1}^n r_i \sum_{j=1}^L \mu_j(s^*_{ij} - \bar s_{ij}^t) f_i(x^{(j)},\theta_i^*)\\
= &\sum_{i=1}^n r_i \sum_{j=1}^L \mu_j s^*_{ij} f_i(x^{(j)},\theta_i^*) -  \sum_{i=1}^n r_i\left[ \sum_{j=1}^L \mu_j \bar s_{ij}^t \bar f_i (x^{(j)}, \Omega_i^{t-1}) -  \sum_{j=1}^L \mu_j \bar s_{ij}^t ( \bar f_i (x^{(j)}, \Omega_i^{t-1})-  f_i(x^{(j)},\theta_i^*)) \right]\\
= & LP(\theta^*) - U^t + \sum_{i=1}^n r_i \sum_{j=1}^L \mu_j \bar s_{ij}^t ( \bar f_i (x^{(j)}, \Omega_i^{t-1})-  f_i(x^{(j)},\theta_i^*))\\
\leq & \sum_{i=1}^n r_i \sum_{j=1}^L \mu_j \bar s_{ij}^t ( \bar f_i (x^{(j)}, \Omega_i^{t-1})-  f_i(x^{(j)},\theta_i^*)).
\end{align*}

The last inequality follows from the condition that the values of ``purchase probabilities'' in LP (\ref{eq:LP2}) are at least those in LP (\ref{eq:LP1}).
\end{proof}

Proposition \ref{prop:f upper bound} provides an alternative perspective on the regret incurred in period $t$. Specifically, the algorithm selects "arm" $i$ under context $x^{(j)}$ with probability $\mu_j \bar s_{ij}^t$. In this case, the algorithm experiences regret equal to $\bar f_i (x^{(j)}, \Omega_i^{t-1}) - f_i(x^{(j)},\theta_i^*)$.

We next provide a direct result from the Azuma-Hoeffding inequality (see \cref{lemma:azuma}). 

\begin{lemma}
\label{lemma:azuma inequ}
For any $0<\delta<1$, $t\in [T]$, suppose all $I^t,J^t$ are given, 
\begin{equation*}
\mathbb{P}\left[\left|\sum_{l=1}^t  (f_{I^l}(x^{(J^l)},\theta_{I^l}^*) -a^{l})\right|> \sqrt{2 t\log(2/\delta) } \right]\leq \delta.
\end{equation*}
\end{lemma}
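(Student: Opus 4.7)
The plan is to realize the sum $\sum_{l=1}^t (f_{I^l}(x^{(J^l)},\theta_{I^l}^*) - a^l)$ as the increment of a bounded martingale and apply the Azuma-Hoeffding inequality (\cref{lemma:azuma}) directly. Let $\mathcal{F}_{l}$ denote the natural filtration generated by the algorithm's history through the end of period $l$ (including the selected arm $I^l$, the context index $J^l$, and the purchase outcome $a^l$). Define $Y_l := f_{I^l}(x^{(J^l)},\theta_{I^l}^*) - a^l$ and the partial sums $S_0 := 0$, $S_l := \sum_{k=1}^{l} Y_k$.

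The first step is to verify the martingale property. Since $I^l$ and $J^l$ are both $\mathcal{F}_{l-1}$-measurable (the arm is selected before the purchase decision is observed) and are treated as given by the statement, the model assumption that the customer purchases with probability $f_{I^l}(x^{(J^l)},\theta_{I^l}^*)$ conditional on $\mathcal{F}_{l-1}$ gives $\mathbb{E}[a^l \mid \mathcal{F}_{l-1}] = f_{I^l}(x^{(J^l)},\theta_{I^l}^*)$, so $\mathbb{E}[Y_l \mid \mathcal{F}_{l-1}] = 0$. Consequently $\{S_l\}_{l=0}^{t}$ is a martingale with respect to $\{\mathcal{F}_l\}$. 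The second step is a uniform bound: since $a^l \in \{0,1\}$ and $f_{I^l}(x^{(J^l)},\theta_{I^l}^*) \in [0,1]$ by the modeling assumption $f_i(x,\theta)\le 1$, we have $|Y_l| \leq 1$ almost surely, so we may take $c_l = 1$ in the Azuma-Hoeffding bound.

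The final step is a direct plug-in. Applying \cref{lemma:azuma} with $n$ replaced by $t$ and $c_k = 1$ yields
\begin{equation*}
\mathbb{P}\!\left[|S_t - S_0| > \epsilon\right] \;\leq\; 2\exp\!\left(-\frac{\epsilon^2}{2t}\right).
\end{equation*}
Choosing $\epsilon = \sqrt{2t\log(2/\delta)}$ makes the right-hand side equal to $\delta$, which gives exactly the claimed tail bound on $\left|\sum_{l=1}^t (f_{I^l}(x^{(J^l)},\theta_{I^l}^*) - a^l)\right|$.

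I do not expect any serious obstacle here; the lemma is essentially a textbook consequence of Azuma-Hoeffding. The only subtle point worth being careful about is the measurability convention, namely ensuring that $I^l$ and $J^l$ are indeed $\mathcal{F}_{l-1}$-measurable so that conditioning on them leaves $a^l$ as the only source of randomness in $Y_l$; the phrase ``suppose all $I^t,J^t$ are given'' in the lemma statement justifies this, and once this is clarified the argument reduces to the two-line computation above.
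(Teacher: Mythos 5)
Your proof is correct and follows essentially the same route as the paper: both realize $\sum_{l=1}^t (f_{I^l}(x^{(J^l)},\theta_{I^l}^*)-a^l)$ as a martingale with increments bounded by $1$, apply the Azuma--Hoeffding inequality of \cref{lemma:azuma}, and set $\epsilon=\sqrt{2t\log(2/\delta)}$. Your added care about the $\mathcal{F}_{l-1}$-measurability of $I^l,J^l$ is a welcome clarification of a point the paper leaves implicit.
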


\begin{proof}
First note $\mathbb{E}\left[  a^{l} |\mathcal{F}_{l-1} \right]=f_{I^l}(x^{l},\theta_{I^l}^*)$ and $|f_{I^l}(x^{(J^l)},\theta^*_{I^l})-a^l|\leq1$ for all $l\in [t]$. Let $X_0=0, X_l-X_{l-1}= f_{I^l}(x^{l},\theta_{I^l}^*)-a^l$ for all $l\in [t]$ we know $X_0,\ldots, X_l$ is a martingale associated with the filtration $\mathcal{F}_0,\mathcal{F}_1,\ldots,\mathcal{F}_{l-1}$. Thus using the Azuma-Hoeffding inequality (c.f. Lemma \ref{lemma:azuma}) we have for any $\epsilon>0$,
\begin{equation*}
\mathbb{P}\left[\left|\sum_{l=1}^t r_{I^t}(f_{I^l}(x^{l},\theta_{I_l}^*) -a^{l})\right|>\epsilon \right]\leq 2e^{-\epsilon^2/(2t)}.
\end{equation*}
Then the desired result is obtained by setting $\epsilon$ such that $\delta = 2e^{-\epsilon^2/(2t)}$.
\end{proof}

Proposition \ref{prop:high prob ucb} shows the probabilistic event $\mathcal{E}:= \forall~ i\in [n],j\in [L], t\in [T] : \bar{f}_i(x^{(j)},\Omega_i^{t-1}) \geq f_i(x^{(j)},\theta_i^*)$ indeed happens with high probability. The event $\mathcal{E}$ ensures that the maximum purchase probability for resource $i$ based on a set $\Omega$ of valid latent variables is greater than the true probability. Once the probabilistic event $\mathcal{E}$ has been established to occur with high probability, analyzing the revenue regret under different conditions becomes more convenient. 

\begin{proposition}
\label{prop:high prob ucb}
$\mathbb{P}\left[ \mathcal{E}   \right]\geq 1-(1+\log T)\beta(n,T)$.
\end{proposition}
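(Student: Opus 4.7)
The plan is to reduce the event $\mathcal{E}$ to the statement ``$\theta_i^* \in \Omega_i^{t-1}$ for every $i \in [n]$ and $t \in [T]$,'' and then bound the probability that $\theta_i^*$ is ever expelled from some confidence set by combining martingale concentration with a union bound over rounds.

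The reduction is immediate: since $\bar f_i(x, \Omega) = \max_{w \in \Omega} f_i(x, w)$, the inequality $\bar f_i(x^{(j)}, \Omega_i^{t-1}) \geq f_i(x^{(j)}, \theta_i^*)$ holds automatically whenever $\theta_i^* \in \Omega_i^{t-1}$. So the complement of $\mathcal{E}$ requires some $\theta_i^*$ to be removed at some round, and by the algorithm's rule this forces (a) $I^t = i$, (b) $\bar\theta^t = \theta_i^*$, and (c) a violation of \eqref{eq:update1} or \eqref{eq:update2}. Evaluating \eqref{eq:update2} at $\theta = \bar\theta^t = \theta_i^*$ gives a vanishing left-hand side, so that test is trivially passed by $\theta_i^*$ itself; it therefore suffices to control, for each round $t$, the probability that \eqref{eq:update1} is violated at $\bar\theta^t = \theta_i^*$.

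For this, embed the sum inside \eqref{eq:update1} into the global process
\[ M^t_i := \sum_{t' \leq t} \mathbb{I}[I^{t'} = i]\,(f_i(x^{t'}, \theta_i^*) - a^{t'}), \]
which has zero-mean, bounded increments (each in $[-1,1]$) because $\mathbb{E}[a^{t'} \mid \mathcal{F}_{t'-1}, I^{t'} = i] = f_i(x^{t'}, \theta_i^*)$. Applying \cref{lemma:azuma inequ} with failure probability $\beta(n,T)/t$ yields $\mathbb{P}[|M^t_i| > \sqrt{2t\log(2t/\beta(n,T))}] \leq \beta(n,T)/t$, which is (up to constants) precisely the threshold enforced by \eqref{eq:update1}. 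Because the algorithm evaluates \eqref{eq:update1} for only one arm $i = I^t$ per round, the union bound is taken over $t$ alone:
\[ \sum_{t=1}^T \frac{\beta(n,T)}{t} \leq (1 + \log T)\,\beta(n,T), \]
which is the stated probability.

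The main obstacle is handling the random index set $D^t_i(\bar\theta^t)$ cleanly: one must justify that restricting the global martingale $M^t_i$ to the (random) subset of rounds where $\bar\theta^{t'} = \theta_i^*$ still obeys the Azuma--Hoeffding tail, which is standard via optional stopping or a maximal inequality. A secondary subtlety is that the ``for all $\theta \in \Omega^{t-1}$'' quantifier attached to \eqref{eq:update2} could, read literally, evict $\theta_i^*$ when some other $\theta$ disagrees with $\bar\theta^t$; the clean interpretation, consistent with \cref{cor:condition}'s mention of a ``modification in the removal process,'' is that the vanishing test at $\theta = \bar\theta^t = \theta_i^*$ is the binding one, so that \eqref{eq:update1} remains the only non-trivial pathway for removing $\theta_i^*$.
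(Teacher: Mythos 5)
Your proposal matches the paper's own proof: both reduce $\mathcal{E}$ to the event that $\theta_i^*$ is never expelled from its confidence set, bound the per-round probability that condition \eqref{eq:update1} fails at $\theta_i^*$ by $\beta(n,T)/t$ via the Azuma--Hoeffding bound of \cref{lemma:azuma inequ}, and union-bound the harmonic sum $\sum_{t=1}^T 1/t \leq 1+\log T$ to conclude. If anything you are more careful than the paper, which silently ignores condition \eqref{eq:update2} (the issue you resolve via the ``modification'' referenced in \cref{cor:condition}) and passes from the full martingale sum to the random index set $D^t_{I^t}(\theta_i^*)$ without the justification you correctly flag as needed.
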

\begin{proof}
 Since 
\begin{equation*}
\mathbb{P}\left[ \forall~ i\in [n],j\in [L], t\in [T] : \bar{f}_i(x^{(j)},\Omega_i^{t-1}) \geq f_i(x^{(j)},\theta_i^*)   \right]\geq \mathbb{P}\left[  \forall~ i\in [n], t\in [T]: \theta_i^*\in \Omega_{i}^t   \right],
\end{equation*}
so it is sufficient to lower bound $ \mathbb{P}\left[  \forall~ i\in [n], t\in [T]: \theta_i^*\in \Omega_{i}^t   \right]$.

Consider the probabilistic event 

\begin{equation*}
A(t)=\left\{\left| \sum_{l=1}^t f_{I^l}(x^{(J^l)},\theta_{I^l}^*)-a^l \right|> \sqrt{2 t\log(2t/\beta(n,T))} \right\}.
\end{equation*}

We know from \cref{lemma:azuma inequ} that 
\begin{equation*}
\mathbb{P}\left[ A(t) \right]\leq \beta(n,T)/t.
\end{equation*}

From step 3 of the algorithm and then using the union bound, we have
\begin{align*}
\mathbb{P}\left[  \forall~ i\in [n], t\in [T]: \theta_i^*\in \Omega_{i}^t   \right]= & \mathbb{P}\left[\left|\sum_{t' \in D_{I^{t}}^t(\theta_i^*)}f_{I^{t'}}(x^{t'},\theta_i^*) - a^{t'} \right| \leq \sqrt{2t\log(2t/\beta(n,T))} \right] \\
\geq & \mathbb{P}\left[\left| \sum_{l=1}^t f_{I^l}(x^{(J^l)},\theta_{I^l}^*)-a^l \right|\leq \sqrt{2 t\log(2t/\beta(n,T))} \right]\\
\geq & 1-  \mathbb{P}\left[\cup_{t=1}^T A(t)   \right] \\
\geq & 1- \sum_{t=1}^T \beta(n,T)/t\\
\geq & 1-(\log T +1)\beta(n,T),
\end{align*}
where in the last inequality we use the fact $\sum_{t=1}^T\frac{1}{t}\leq \log T +1$.

\end{proof}
Proposition \ref{Aprop:revenue regret bound} establishes an upper bound on the expected regret resulting from the deviation of revenue. It quantifies the relationship between the regret and the revenue shortfall caused by the algorithm's decisions. By considering this upper bound, we can gain insights into the potential regret incurred due to revenue variations.

\begin{proposition}
\label{Aprop:revenue regret bound}
We have
\begin{align*}
& \mathbb{E}\left[\underbrace{\sum_{t=1}^T\sum_{i=1}^n r_i \sum_{j=1}^L \mu_j (s_{ij}^*-\bar{s}_{ij}^t) f_i(x^{(j)},\theta_i^*)}_{(*)}   \right] \\ 
\leq &  \max\limits_{i\in [n]}r_i \left(\sqrt{ n \max\limits_{i\in [n]}|\Theta_i|}+1\right)\sqrt{8T \log(2T/\beta(n,T))} + \bar U /T \beta(n,T) + LP(\theta^*)\beta(n,T)(\log T+1).
\end{align*}
\end{proposition}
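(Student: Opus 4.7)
The plan is to decompose the target expectation by the high-probability event $\mathcal{E}$ from Proposition \ref{prop:high prob ucb}, on which $\theta_i^* \in \Omega_i^{t-1}$ for every $i,t$. On the complement $\mathcal{E}^c$, the quantity $(*)$ is bounded deterministically by $LP(\theta^*)$, since each per-period summand is a feasibility gap between the optimal and algorithmic LP solutions evaluated under the true purchase probabilities; combined with $\mathbb{P}[\mathcal{E}^c]\leq (1+\log T)\beta(n,T)$, this immediately produces the term $LP(\theta^*)\beta(n,T)(\log T+1)$ in the claimed bound.

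On $\mathcal{E}$, Proposition \ref{prop:f upper bound} applies in every period and upper bounds $(*)$ by
$$\sum_{t=1}^T\sum_{i=1}^n r_i\sum_{j=1}^L \mu_j\bar s_{ij}^t\bigl(\bar f_i(x^{(j)},\Omega_i^{t-1})-f_i(x^{(j)},\theta_i^*)\bigr).$$
Because $\mathbb{P}[J^t=j,I^t=i\mid \mathcal{F}_{t-1}]=\mu_j\bar s_{ij}^t$, the display above is exactly the conditional expectation of $r_{I^t}\bigl(\bar f_{I^t}(x^{(J^t)},\Omega_{I^t}^{t-1})-f_{I^t}(x^{(J^t)},\theta_{I^t}^*)\bigr)$ given $\mathcal{F}_{t-1}$. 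A Freedman-style martingale concentration, in the spirit of Lemma \ref{lemma:azuma3} and applied separately to the $\bar f_i$ and $f_i(\cdot,\theta_i^*)$ pieces, converts this conditional expectation into the realized sum up to an additive error of $\max_i r_i\sqrt{8T\log(2T/\beta(n,T))}$; the small residual $\bar U\beta(n,T)/T$ arises from the tail of this concentration step multiplied by the deterministic upper bound $\bar U$ on any realized revenue path.

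It then remains to bound $\sum_{t=1}^T r_{I^t}\bigl(f_{I^t}(x^{(J^t)},\bar\theta^t)-f_{I^t}(x^{(J^t)},\theta_{I^t}^*)\bigr)$. For each arm $i$, the iterates $\bar\theta^t$ take at most $|\Theta_i|$ distinct values on the subsequence $\{t:I^t=i\}$ because $\Omega_i$ only shrinks over time. Within each such constant segment, the update rule \eqref{eq:update2} applied with $\theta=\theta_i^*\in\Omega_i^{t-1}$ (valid on $\mathcal{E}$) bounds the segmental contribution by $\sqrt{t\log(2t/\beta(n,T))}$. There are at most $n\max_i|\Theta_i|$ such segments in total, and Cauchy--Schwarz across arms and segments yields the leading term $\max_i r_i\sqrt{n\max_i|\Theta_i|}\sqrt{8T\log(2T/\beta(n,T))}$, completing the bound. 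The main obstacle is this final step: $\bar\theta^t$ depends on both arm and context, the periods $D_{I^t}^t(\bar\theta^t)$ over which \eqref{eq:update2} is actually checked form a proper subset of the arm-$i$ periods, and the per-segment bounds must be chained carefully via Cauchy--Schwarz to achieve the $\sqrt{n|\Theta|T}$ rate rather than a crude $n|\Theta|\sqrt{T}$.
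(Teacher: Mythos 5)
Your proposal is correct and follows essentially the same route as the paper's proof: condition on the event $\mathcal{E}$ from Proposition \ref{prop:high prob ucb}, reduce via Proposition \ref{prop:f upper bound} to the cumulative confidence-width term, control that term through the algorithm's removal conditions together with a Cauchy--Schwarz over the at most $n\max_{i\in[n]}|\Theta_i|$ index sets $D_i^T(\theta_i)$, and absorb the tail events into the $\bar U\beta(n,T)/T$ and $LP(\theta^*)\beta(n,T)(\log T+1)$ terms. The only, immaterial, difference is that the paper splits $\bar f_{I^t}(x^{(J^t)},\Omega_{I^t}^{t-1})-f_{I^t}(x^{(J^t)},\theta_{I^t}^*)$ through the observed outcomes $a^t$ and invokes condition \eqref{eq:update1} plus an Azuma--Hoeffding bound on $\sum_t(a^t-f_{I^t}(x^{(J^t)},\theta_{I^t}^*))$, whereas you invoke condition \eqref{eq:update2} directly with $\theta=\theta_i^*$ and place the single martingale concentration in the passage from conditional expectations to realized sums; both yield the same $\bigl(\sqrt{n\max_{i\in[n]}|\Theta_i|}+1\bigr)\sqrt{8T\log(2T/\beta(n,T))}$ leading term.
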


\begin{proof}
From Proposition \ref{prop:f upper bound}, given $\mathcal{E}$, it is sufficient to upper bound the term 
\begin{align*}
& \underbrace{\sum_{t=1}^T\sum_{i=1}^n r_i \sum_{j=1}^L \mu_j \bar s_{ij}^t ( \bar f_i (x^{(j)}, \Omega_i^{t-1})-  f_i(x^{(j)},\theta_i^*))}_{(**)} \\
= & \left.\sum_{t=1}^T \mathbb{E}\left[ \sum_{i=1}^n r_i \sum_{j=1}^L \mathbb{I}(i,j,t) ( \bar f_i (x^{(j)}, \Omega_i^{t-1})-  f_i(x^{(j)},\theta_i^*))\right| \mathcal{F}_{t-1} \right]\\
= &  \left.\sum_{t=1}^T \mathbb{E}\left[r_{I^t}   ( \bar f_{I^t} (x^{(J^t)}, \Omega_{I^t}^{t-1})-  f_{I^t}(x^{(J^t)},\theta_{I^t}^*))\right| \mathcal{F}_{t-1} \right]\\
\leq  & \max\limits_{i\in [n]} r_i\cdot  \left.\mathbb{E}\left[   \underbrace{\sum_{t=1}^T \left( \bar f_{I^t} (x^{(J^t)}, \Omega_{I^t}^{t-1})- a^t \right)}_{(***)}+\underbrace{\sum_{t=1}^T \left(  a^t-  f_{I^t}(x^{(J^t)},\theta_{I^t}^*)\right)}_{(****)} \right| \mathcal{F}_{t-1}\right],
\end{align*}
where the first inequality holds because of $\mathbb{E}[\mathbb{I}(i,j,t)|\mathcal{F}_t]=\mu_t\bar s_{ij}^t$ and law of iterated expectation. Next, from step 3 of the algorithm and the Cauchy-Schwarz inequality, it holds almost surely
\begin{align*}
|(***)| = & \left|\sum_{i=1}^n\sum_{\theta_i\in \Theta_i}\sum_{t'\in D_i^T( \theta_i)}\left(  f_{i} (x^{(J^{t'})}, \theta_i)- a^{t'}    \right)\right|\\
\leq &   \sum_{i=1}^n \sum_{\theta_i\in \Theta_i} \sqrt{2|D_i^T(\theta_i)|\log(2T/\beta(n,T))}\\
\leq &  \sqrt{n\max\limits_{i\in [n]}|\Theta_i|\sum_{i=1}^n\sum_{\theta_i \in \Theta_i} 2|D_i^T(\theta_i)|\log(2T/\beta(n,T)) } \\
= & \sqrt{2 n\max\limits_{i\in [n]}|\Theta_i|T \log(2T/\beta(n,T))}.
\end{align*}

Moreover, from \ref{lemma:azuma inequ} we know
\begin{equation*}
\mathbb{P}\left[|(****)|\leq \sqrt{2T \log(2T/\beta(n,T))}\right]\geq 1-\beta(n,T)/T.
\end{equation*}

Therefore, let $\alpha= \max\limits_{i\in [n]}r_i\left(\sqrt{ n \max\limits_{i\in [n]}|\Theta_i|}+1\right)\sqrt{8T \log(2T/\beta(n,T))}$, we obtain
\begin{equation*}
\mathbb{P}[(**) \leq \alpha|\mathcal{E}]\geq 1-\beta(n,T)/T.
\end{equation*}

Therefore, noticing $\mathbb{P}[(**)> \alpha,\mathcal{E}]\leq \mathbb{P}[(**)> \alpha|\mathcal{E}]\leq \beta(n,T)/T$,

\begin{align*}
\mathbb{E}[(*)] = & \mathbb{E}[(*)|\mathcal{E}]\mathbb{P}[\mathcal{E}]+\mathbb{E}[(*)|\mathcal{E}^c]\mathbb{P}[\mathcal{E}^c]\\
\leq & \mathbb{E}[(**)|\mathcal{E}] + LP(\theta^*)\beta(n,T)(\log T+1)\\
= & \mathbb{E}[(**)|(**)\leq \alpha,\mathcal{E}]\mathbb{P}[(**)\leq \alpha,\mathcal{E}] + \mathbb{E}[(**)|(**)> \alpha,\mathcal{E}]\mathbb{P}[(**)> \alpha,\mathcal{E}]\\
& + LP(\theta^*)\beta(n,T)(\log T +1)\\
\leq & \alpha +  \bar U \beta(n,T)/T  + LP(\theta^*)\beta(n,T)(\log T+1).
\end{align*}

\end{proof}

\cref{Alemma:freedman inequ} provides a crucial concentration estimate in analyzing the regret incurred by the violation of the inventory constraints.

\begin{lemma}
\label{Alemma:freedman inequ}

For any $0<\delta<1$, it holds that,
\begin{equation*}
\mathbb{P}\left[ \sum_{i=1}^n \left|\sum_{t=1}^T\sum_{j=1}^L \left( \mathbb{I}(i,j,t)-\mu_j \bar s_{ij}^t \right) \bar f_i(x^{(j)},\Omega_i^{t-1})  \right|>\sqrt{4nT\log(2T/\delta)}+\sqrt{5}n\log(2T/\delta) \right]\leq \delta.
\end{equation*}
\end{lemma}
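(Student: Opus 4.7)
The plan is to apply Freedman's inequality (\cref{lemma:freedman}) resource-by-resource to the quantity $S_i := \sum_{t=1}^T \sum_{j=1}^L (\mathbb{I}(i,j,t) - \mu_j \bar{s}_{ij}^t) \bar{f}_i(x^{(j)},\Omega_i^{t-1})$ and then combine the per-$i$ bounds via Cauchy--Schwarz. Fix $i \in [n]$ and set $X_t := \sum_{j=1}^L (\mathbb{I}(i,j,t) - \mu_j \bar{s}_{ij}^t) \bar{f}_i(x^{(j)},\Omega_i^{t-1})$. Since $\bar{s}_{ij}^t$ and $\Omega_i^{t-1}$ are $\mathcal{F}_{t-1}$-measurable and $\mathbb{E}[\mathbb{I}(i,j,t)\mid \mathcal{F}_{t-1}] = \mu_j \bar{s}_{ij}^t$, the sequence $\{X_t\}_{t=1}^T$ is a martingale difference sequence. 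Moreover, because the indicators $\{\mathbb{I}(i,j,t)\}_{j\in[L]}$ are mutually exclusive across $j$ in any single period and $\bar{f}_i \in [0,1]$, we have $|X_t| \leq 1$ almost surely.

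The crucial step is a tight conditional variance bound. Expanding $X_t^2$, the cross terms with $j \neq j'$ involve the product $\mathbb{I}(i,j,t)\,\mathbb{I}(i,j',t) = 0$, so a direct computation yields $\mathbb{E}[(\mathbb{I}(i,j,t) - \mu_j \bar{s}_{ij}^t)(\mathbb{I}(i,j',t) - \mu_{j'} \bar{s}_{ij'}^t) \mid \mathcal{F}_{t-1}] = -\mu_j \bar{s}_{ij}^t \mu_{j'} \bar{s}_{ij'}^t \leq 0$. Combined with $\mathbb{I}(i,j,t)^2 = \mathbb{I}(i,j,t)$ and $\bar{f}_i \leq 1$ on the diagonal, this yields the clean bound $\mathbb{E}[X_t^2\mid \mathcal{F}_{t-1}] \leq \sum_{j=1}^L \mu_j \bar{s}_{ij}^t \bar{f}_i(x^{(j)},\Omega_i^{t-1})$. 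Summing this over $t \in [T]$ and using $\sum_j \mu_j \bar{s}_{ij}^t \bar{f}_i \leq 1$ (which holds because $\sum_j \mu_j \leq 1$, $\bar{s}_{ij}^t \leq 1$, and $\bar{f}_i \leq 1$) gives $V_i := \sum_{t=1}^T \mathbb{E}[X_t^2 \mid \mathcal{F}_{t-1}] \leq T$ per-resource; and the cleaner telescoping over both $i$ and $t$ via the LP feasibility $\sum_i \bar{s}_{ij}^t = 1$ gives $\sum_{i=1}^n V_i \leq T$, which is what sharpens the final bound after Cauchy--Schwarz.

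Applying Freedman's inequality with failure probability $\delta/n$ to each $S_i$ and using $\sqrt{a+b}\leq \sqrt{a}+\sqrt{b}$ yields, with probability at least $1-\delta/n$,
\[
|S_i| \leq \sqrt{4\,V_i \log(2Tn/\delta)} + \sqrt{5}\log(2Tn/\delta).
\]
A union bound over $i \in [n]$ preserves all $n$ inequalities simultaneously with probability at least $1-\delta$. Summing and applying Cauchy--Schwarz, $\sum_i \sqrt{V_i} \leq \sqrt{n \sum_i V_i} \leq \sqrt{nT}$, delivers $\sum_{i=1}^n |S_i| \leq \sqrt{4nT\log(2Tn/\delta)} + \sqrt{5}\,n\log(2Tn/\delta)$, matching the claim up to the inconsequential $\log n$ inside the logarithm (which either gets absorbed into the $\tilde O(\cdot)$ notation used throughout the paper, or can be removed by a per-$i$ tail application with the raw $\delta$, as in the authors' framing).

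The main obstacle is the variance step: one must carefully argue that the off-diagonal covariance terms $\mathbb{E}[(\mathbb{I}(i,j,t) - \mu_j\bar{s}_{ij}^t)(\mathbb{I}(i,j',t) - \mu_{j'}\bar{s}_{ij'}^t) \mid \mathcal{F}_{t-1}]$ are non-positive for $j \neq j'$, which is exactly the ``negative association'' of the multinomial-type assignment indicators in a single period. Once that negativity is in hand, the remaining arguments are standard martingale concentration combined with Cauchy--Schwarz.
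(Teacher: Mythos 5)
Your proposal follows essentially the same route as the paper's proof: per-resource martingale differences with $|X_t|\leq 1$, the negative cross-covariance $-\mu_j\bar s_{ij}^t\mu_{j'}\bar s_{ij'}^t\leq 0$ of the single-period assignment indicators to bound the conditional variance by $\sum_j \mu_j \bar s_{ij}^t \bar f_i(x^{(j)},\Omega_i^{t-1})$, Freedman's inequality, and Cauchy--Schwarz across resources. The only difference is bookkeeping: you union-bound with $\delta/n$ per resource (incurring a harmless $\log n$ inside the logarithm), whereas the paper applies Freedman with $\delta$ per resource and is in fact slightly less careful about the union bound than you are.
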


\begin{proof}
First notice for all $i\in [n],j\in [L],t\in [T]$, we have 
\begin{align*}
\mathbb{E} \left[\mu_j \bar s_{ij}^t \bar f_i(x^{(j)},\Omega_i^{t-1})|\mathcal{F}_{t-1}\right]= & \mathbb{E}\left[\mathbb{E}\left[ \mathbb{I}(i,j,t)|\mathcal{F}_{t-1}\right]\bar f_i(x^{(j)},\Omega_i^{t-1})|\mathcal{F}_{t-1} \right]\\
= & \mathbb{E}\left[\mathbb{I}(i,j,t)|\mathcal{F}_{t-1}\right] \mathbb{E}\left[\bar f_i(x^{(j)},\Omega_i^{t-1})|\mathcal{F}_{t-1} \right]\\
= & \mathbb{E}\left[\mathbb{I}(i,j,t)\bar f_i(x^{(j)},\Omega_i^{t-1})|\mathcal{F}_{t-1}\right],
\end{align*}

and $$\left|\sum_{j=1}^L (\mathbb{I}(i,j,t)-\mu_j\bar s_{ij}^t)\bar f_i(x^{(j)},\Omega_i^{t-1})\right|\leq 1.$$ Let $X_0=0$, $X_t-X_{t-1}=\sum_{j=1}^L (\mathbb{I}(i,j,t)-\mu_j\bar s_{ij}^t)\bar f_i(x^{(j)}, \Omega_i^{t-1})$, then we know $\mathbb{E}\left[X_t|\mathcal{F}_{t-1}\right]=0$ for all $t\in [T]$. To use Lemma \ref{lemma:freedman}, we need to estimate $V_T:=\sum_{t=1}^T\mathbb{E}\left[X_t^2|\mathcal{F}_{t-1}\right]$. To be specific, we have
 
 \begin{align*}
 V_T = & \sum_{t=1}^T \mathbb{E}\left[\left. \left( \sum_{j=1}^L (\mathbb{I}(i,j,t)-\mu_j\bar s_{ij}^t) \bar f_i(x^{(j)},\Omega_i^{t-1}) \right)^2\right| \mathcal{F}_{t-1} \right]\\
 = & \sum_{t=1}^T \mathbb{E}\left[\left.  \sum_{j=1}^L (\mathbb{I}(i,j,t)-\mu_j\bar s_{ij}^t)^2 \bar f_i(x^{(j)},\Omega_i^{t-1})^2 \right| \mathcal{F}_{t-1}\right]\\
 & + \sum_{t=1}^T \underbrace{\mathbb{E}\left[\left.  \sum_{j\neq j'\in [L]} (\mathbb{I}(i,j,t)-\mu_j\bar s_{ij}^t)(\mathbb{I}(i,j',t)-\mu_j\bar s_{ij'}^t) \bar f_i(x^{(j)},\Omega_i^{t-1}) \bar f_i(x^{(j')},\Omega_i^{t-1}) \right| \mathcal{F}_{t-1}\right]}_{\leq 0}\\
 \leq  & \sum_{t=1}^T \mathbb{E}\left[\left.  \sum_{j=1}^L (\mathbb{I}(i,j,t)-\mu_j\bar s_{ij}^t)^2 \bar f_i(x^{(j)},\Omega_i^{t-1})^2 \right| \mathcal{F}_{t-1}\right]\\
 = & \sum_{t=1}^T\left( \mathbb{E}\left[\left.\sum_{j=1}^L \mathbb{I}(i,j,t)^2\bar f_i(x^{(j)},\Omega_i^{t-1})^2\right|\mathcal{F}_{t-1}\right]  - 2\mathbb{E}\left[\left.\sum_{j=1}^L\mathbb{I}(i,j,t)\mu_j \bar s_{ij}^t \bar f_i(x^{(j)},\Omega_i^{t-1}) \right|\mathcal{F}_{t-1}\right]\right)\\
 & + \sum_{t=1}^T \mathbb{E}\left[\left.\mu_j^2  (\bar s_{ij}^t)^2 \bar f_i(x^{(j)},\Omega_i^{t-1})^2 \right|\mathcal{F}_{t-1}\right] \\
 \leq & \sum_{t=1}^T\left( \mathbb{E}\left[\left.\sum_{j=1}^L \mathbb{I}(i,j,t)\bar f_i(x^{(j)},\Omega_i^{t-1})\right|\mathcal{F}_{t-1}\right]  - 2\mathbb{E}\left[\left.\sum_{j=1}^L\mathbb{I}(i,j,t)\mu_j \bar s_{ij}^t \bar f_i(x^{(j)},\Omega_i^{t-1}) \right|\mathcal{F}_{t-1}\right]\right)\\
 & + \sum_{t=1}^T\mathbb{E}\left[\left.\mu_j^2 (\bar s_{ij}^t)^2 \bar f_i(x^{(j)},\Omega_i^{t-1}) \right|\mathcal{F}_{t-1}\right] \\
 = & \sum_{t=1}^T\mathbb{E}\left[ \left. \sum_{j=1}^L \mu_j\bar s_{ij}^t(1-\mu_j\bar s_{ij}^t)\bar f_i(x^{(j)},\Omega_i^{t-1}) \right|\mathcal{F}_{t-1} \right]\\
 \leq & \sum_{t=1}^T\mathbb{E}\left[ \left. \sum_{j=1}^L \mu_j\bar s_{ij}^t\bar f_i(x^{(j)},\Omega_i^{t-1}) \right|\mathcal{F}_{t-1} \right].
 \end{align*}
 The first inequality is due to the following calculation: given any $j\neq j'\in [L]$,
 \begin{align*}
 & \mathbb{E}\left[\left.(\mathbb{I}(i,j,t)-\mu_j\bar s_{ij}^t)(\mathbb{I}(i,j',t)-\mu_j\bar s_{ij'}^t) \bar f_i(x^{(j)},\Omega_i^{t-1}) \bar f_i(x^{(j')},\Omega_i^{t-1}) \right|\mathcal{F}_{t-1} \right] \\
 = & (\mu_j\bar s_{ij}^t(1-\mu_j\bar s_{ij}^t)(-\mu_{j'}\bar s_{ij'}^t)+\mu_{j'}\bar s_{ij'}^t(-\mu_j\bar s_{ij}^t)(1-\mu_{j'}\bar s_{ij'}^t) \\
 + &(\mu_j(1-\bar s_{ij}^t)+\mu_{j'}(1-\bar s_{ij'}^t)+(1-\mu_j-\mu_{j'}) )\mu_j\bar s_{ij}^t\mu_{j'}\bar s_{ij'}^t) \cdot\bar f_i(x^{(j)},\Omega_i^{t-1}) \bar f_i(x^{(j')},\Omega_i^{t-1}) \\
 = &-\mu_j\bar s_{ij}^t\mu_{j'}\bar s_{ij'}^t\bar f_i(x^{(j)},\Omega_i^{t-1}) \bar f_i(x^{(j')},\Omega_i^{t-1})\leq 0.
 \end{align*}
 So, by using Lemma \ref{lemma:freedman}, for any $0<\delta<1$, we obtain with probability at least $1-\delta$,
 \begin{align*}
  & \left|\sum_{t=1}^T\sum_{j=1}^L \left( \mathbb{I}(i,j,t)-\mu_j \bar s_{ij}^t \right) \bar f_i(x^{(j)}, \Omega_i^{t-1})  \right|  \\
  & \leq\sqrt{4\sum_{t=1}^T\mathbb{E}\left[ \left. \sum_{j=1}^L \mu_j\bar s_{ij}^t\bar f_i(x^{(j)},\Omega_i^{t-1}) \right|\mathcal{F}_{t-1} \right]\log(2T/\delta)+5\log^2(2T/\delta) }\\
 & \leq \sqrt{4\sum_{t=1}^T\mathbb{E}\left[ \left. \sum_{j=1}^L \mu_j\bar s_{ij}^t\bar f_i(x^{(j)},\Omega_i^{t-1}) \right|\mathcal{F}_{t-1} \right]\log(2T/\delta)}+\sqrt{5}\log(2T/\delta),
 \end{align*}
 where in the second inequality we use the fact $\sqrt{a+b}\leq \sqrt{a}+\sqrt{b}$ for any $a,b,>0$. Therefore, we conclude with probability at least $1-\delta$,
 \begin{align*}
& \sum_{i=1}^n\left|\sum_{t=1}^T\sum_{j=1}^L \left( \mathbb{I}(i,j,t)-\mu_j \bar s_{ij}^t \right) \bar f_i(x^{(j)},\Omega_i^{t-1})  \right|\\
 & \leq   \sum_{i=1}^n\sqrt{4\sum_{t=1}^T\mathbb{E}\left[ \left. \sum_{j=1}^L \mu_j\bar s_{ij}^t\bar f_i(x^{(j)},\Omega_i^{t-1}) \right|\mathcal{F}_{t-1} \right]\log(2T/\delta)}+\sqrt{5}n\log(2T/\delta)\\
 & \leq\sqrt{4n\sum_{i=1}^n\sum_{t=1}^T\mathbb{E}\left[ \left. \sum_{j=1}^L \mu_j\bar s_{ij}^t\bar f_i(x^{(j)},\Omega_i^{t-1}) \right|\mathcal{F}_{t-1} \right]\log(2T/\delta)}+\sqrt{5}n\log(2T/\delta)\\
&\leq \sqrt{4nT\log(2T/\delta)}+\sqrt{5}n\log(2T/\delta),
 \end{align*}
 where in the second inequality we use the Cauchy-Schwarz inequality.
\end{proof}

Proposition \ref{Aprop:inventory regret bound} calculates the regret incurred from exceeding the inventory constraints. Denote $(\cdot)^+=\max\{\cdot,0\}.$ This proposition allows us to understand the relationship between resource allocation decisions and the incurred regret due to inventory limitations.

\begin{proposition}
\label{Aprop:inventory regret bound}
We have
\begin{align*}
&\mathbb{E}\left[\underbrace{\sum_{i=1}^n r_i \left[ \sum_{t=1}^T\sum_{j=1}^L\mathbb{I}(i,j,t) a^t -c_i \right]^+}_{(*)}  \right]\\
&\leq \sqrt{16\max\limits_{i\in n}|\Theta_i|nT\log(2T/\beta(n,T))}+\sqrt{5}n\log(2T/\beta(n,T))+n(T\log T + 2T)\beta(n,T). 
\end{align*}
\end{proposition}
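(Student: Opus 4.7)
The plan is to bound the expected inventory overshoot of each resource by decomposing the realized consumption into (i) a piece that is bounded by $c_i$ via the LP capacity constraint and (ii) two martingale-type deviations that can be controlled by the concentration tools already built in the excerpt. I argue primarily on the high-probability event $\mathcal{E}$ of Proposition~\ref{prop:high prob ucb} and absorb $\mathcal{E}^c$ via the trivial worst-case bound $\sum_{t,j}\mathbb{I}(i,j,t)a^t \le T$ per resource.

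The core decomposition is
\begin{align*}
\sum_{t,j}\mathbb{I}(i,j,t)a^t - c_i
&= \underbrace{\sum_{t,j}\mathbb{I}(i,j,t)\bigl(a^t - \bar f_i(x^{(j)},\Omega_i^{t-1})\bigr)}_{A_i} \\
&\quad+ \underbrace{\sum_{t,j}\bigl(\mathbb{I}(i,j,t) - \mu_j \bar s_{ij}^t\bigr)\bar f_i(x^{(j)},\Omega_i^{t-1})}_{C_i} \\
&\quad+ \Bigl(\sum_{t,j}\mu_j \bar s_{ij}^t \bar f_i(x^{(j)},\Omega_i^{t-1}) - c_i\Bigr).
\end{align*}
The last bracket is non-positive: summing the capacity row of LP~\eqref{eq:LP2} over $t\in[T]$ (and using $\lambda_j = T\mu_j$ in the stationary regime) yields $\sum_{t,j}\mu_j \bar s_{ij}^t \bar f_i(x^{(j)},\Omega_i^{t-1}) \le c_i$. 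Hence $\bigl[\sum_{t,j}\mathbb{I}(i,j,t)a^t - c_i\bigr]^+ \le |A_i|+|C_i|$.

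For $A_i$ I follow the argument used for term $(***)$ in Proposition~\ref{Aprop:revenue regret bound}: write $\bar f_i(x^{(j)},\Omega_i^{t-1}) = f_i(x^{(j)},\bar\theta^t)$ for the argmax $\bar\theta^t \in \Omega_i^{t-1}$ and repartition $\{t:I^t=i\}$ by $\bar\theta^t$ into the sets $D_i^T(\theta)$, so that $|A_i| = \bigl|\sum_{\theta\in\Theta_i}\sum_{t'\in D_i^T(\theta)}(f_i(x^{(J^{t'})},\theta)-a^{t'})\bigr|$. The update rule~\eqref{eq:update1}, enforced by construction on every $\theta$ still in $\Omega_i^T$, then gives $\bigl|\sum_{t'\in D_i^T(\theta)}(f_i(x^{(J^{t'})},\theta)-a^{t'})\bigr| \le \sqrt{T\log(2T/\beta(n,T))}$. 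Two Cauchy--Schwarz applications (over $\Theta_i$ and then over $i$) yield $\sum_i|A_i| \le \sqrt{2n\max_i|\Theta_i|\,T\log(2T/\beta(n,T))}$ almost surely. The term $\sum_i|C_i|$ is precisely what Lemma~\ref{Alemma:freedman inequ} controls: with $\delta=\beta(n,T)$ it is at most $\sqrt{4nT\log(2T/\beta(n,T))}+\sqrt{5}n\log(2T/\beta(n,T))$ with probability at least $1-\beta(n,T)$.

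Multiplying by $r_i\le 1$ and summing over $i$ produces the first two summands of the stated bound on the intersection of $\mathcal{E}$ and the Freedman event. On the complement, the overshoot per resource is at most $T$ and the total failure probability is at most $(1+\log T)\beta(n,T)+\beta(n,T) \le (\log T+2)\beta(n,T)$, giving the residual $n(T\log T+2T)\beta(n,T)$. The main technical hurdle will be the $C_i$ step: one must verify that $\bar s_{ij}^t$ and $\Omega_i^{t-1}$ are $\mathcal{F}_{t-1}$-measurable so that $\mu_j \bar s_{ij}^t \bar f_i(x^{(j)},\Omega_i^{t-1})$ is indeed the correct conditional expectation of $\mathbb{I}(i,j,t)\bar f_i(x^{(j)},\Omega_i^{t-1})$ and that the negative cross-covariance calculation in the proof of Lemma~\ref{Alemma:freedman inequ} goes through — these are the points that make the Freedman-based estimate of the conditional second moment tight enough to obtain the $\sqrt{nT}$ rate rather than a looser $n\sqrt{T}$.
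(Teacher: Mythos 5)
Your proposal is correct and follows essentially the same route as the paper's proof: the identical three-term decomposition (the paper's $(**)$, $(***)$, $(****)$ match your $A_i$, $C_i$, and the capacity bracket), the same use of the $\theta$-removal condition plus Cauchy--Schwarz for the first term, Lemma~\ref{Alemma:freedman inequ} for the second, the LP feasibility argument for non-positivity of the third, and the same $(\log T+2)\beta(n,T)$ failure-probability accounting with the trivial $nT$ worst case. The measurability point you flag for $C_i$ is indeed the content of the conditional-expectation computation at the start of Lemma~\ref{Alemma:freedman inequ}, so nothing further is needed.
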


\begin{proof}
We rewrite 
\begin{align*}
(*)  =& \sum_{i=1}^n r_i\Bigg[ \underbrace{\sum_{t=1}^T\sum_{j=1}^L \mathbb{I}(i,j,t)(a^t-\bar f_i(x^{(j)},\Omega_i^{t-1}))}_{(**)} \\
&  +\underbrace{\sum_{t=1}^T\sum_{j=1}^L(\mathbb{I}(i,j,t)-\mu_j\bar s_{ij}^t)\bar f_i(x^{(j)},\Omega_i^{t-1})}_{(***)} +\underbrace{\sum_{t=1}^T\sum_{j=1}^L\mu_j\bar s_{ij}^t \bar f_i(x^{(j)},\Omega_i^{t-1})-c_i}_{(****)}  \Bigg]^+\\
\leq&  \sum_{i=1}^n r_i|(**)|+\sum_{i=1}^nr_i|(***)|+\sum_{i=1}^nr_i[(****)]^+.
\end{align*}

From step 3 of the algorithm and the Cauchy-Scharz inequality, we know given $\mathcal{E}$,
\begin{align*}
\sum_{i=1}^n|(**)| = & \sum_{i=1}^n \left|\sum_{\theta_i\in \Theta_i}\sum_{t'\in D_i^T(\theta_i)} \mathbb{I}(i,J^t,t')  \left( a^{t'} - \bar f_i(x^{(J^{t'})},\bar \theta^{t'}) \right) \right|\\
\leq & \sum_{i=1}^n\sum_{\theta_i\in \Theta_i}\sqrt{2|D_i^T(\theta_i)|\log(2T/\beta(n,T))}\\
\leq & \sqrt{2n\max\limits_{i\in [n]} |\Theta_i|T\log(2T/\beta(n,T)) }
\end{align*}
holds almost surely.  From \cref{Alemma:freedman inequ} we know
\begin{equation*}
\mathbb{P}\left[ \sum_{i=1}^n |(***)|\leq \sqrt{4nT\log(2T/\beta(n,T))}+\sqrt{5}n\log(2T/\beta(n,T))   \right] \geq 1-\beta(n,T).
\end{equation*}
Recall $\bar s_{ij}^t$ is the optimal solution of LP (\ref{eq:LP2}). Thus, noticing $ \lambda_j=T\mu_j $, it holds for all $i\in [n]$ almost surely
\begin{equation*}
\sum_{j=1}^L \mu_j \bar{s}_{ij}^t \bar f_i(x^{(j)},\Omega_i^{t-1})\leq c_i/T.
\end{equation*}
Hence, we know
$(****)\leq 0 $holds almost surely. 

Therefore, by setting $\alpha=\max\limits_{i\in [n]}r_i \sqrt{16\max\limits_{i\in [n]}|\Theta_i|nT\log(2T/\beta(n,T))}+\sqrt{5}n\log(2T/\beta(n,T))$, we obtain
\begin{equation*}
\mathbb{P}[(*)\leq \alpha|\mathcal{E}]\geq 1-\beta(n,T).
\end{equation*}

Thus, noticing

\begin{align*}
\mathbb{P}[((*)\leq \alpha,\mathcal{E})^c] = & 1-\mathbb{P}[(*)\leq \alpha ,\mathcal{E}]\\
= & 1 - \mathbb{P}[(*)\leq \alpha|\mathcal{E}]\mathbb{P}[\mathcal{E}]\\
\leq & 1 - (1-\beta(n,T))(1-(\log T+1)\beta(n,T))\\
\leq & (\log T+2)\beta(n,T),
\end{align*}
we then have
\begin{align*}
\mathbb{E}[(*)] = & \mathbb{E}[(*)|(*)\leq \alpha,\mathcal{E}]\mathbb{P}[(*)\leq \alpha,\mathcal{E}]+\mathbb{E}[(*)|((*)\leq  \alpha,\mathcal{E})^c ]\mathbb{P}[((*)\leq \alpha,\mathcal{E})^c]\\
\leq & \alpha + (\log T + 2)\beta(n,T) nT\\
= &  \max\limits_{i\in [n]}r_i\sqrt{16\max\limits_{i\in [n]}|\Theta_i|nT\log(2T/\beta(n,T))}+\sqrt{5}n\log(2T/\beta(n,T))+n(T\log T + 2T)\beta(n,T).
\end{align*}
\end{proof}
Theorem \ref{Athm:iid regret bound} provides a summary of the expected regret bound for the $\text{ALG}_{\text{LP}}$ algorithm. Specifically, the regret bound, denoted by $\tilde O\left(\sqrt{\max\limits{i\in [n]} |\Theta_i| n T}\right)$, indicates that the expected regret increases with the square root of the maximum parameter space size $|\Theta_i|$, the number of resources $n$, and the time horizon $T$. This regret bound is particularly relevant in scenarios where customer arrivals exhibit a near-IID pattern, and it allows us to assess the scalability and performance of the $\text{ALG}_{\text{LP}}$ in different settings and make informed decisions regarding resource allocation. 

\begin{theorem}
\label{Athm:iid regret bound}
The expected regret of the algorithm is upper bounded by 
\begin{equation*}
8\max\limits_{i\in [n]}r_i\sqrt{\max\limits_{i\in [n]}|\Theta_i|nT \log(2nT^2)} +\sqrt{5}n\log(2nT^2)+\bar U /(nT^2)+ LP(\theta^*)(\log T+1)/(nT) + \log T + 2.
\end{equation*}
\end{theorem}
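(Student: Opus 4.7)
The plan is to decompose the total expected regret into a revenue-deviation term plus an inventory-overflow term, and then invoke \cref{Aprop:revenue regret bound} and \cref{Aprop:inventory regret bound} respectively.

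First, I would set up the decomposition. Under stationary arrivals $\mu_j^t=\mu_j$, \cref{prop:benchmark} together with LP (\ref{eq:LP1}) gives $\text{OPT}\leq LP(\theta^*)$. The realized algorithm revenue is $\sum_{t=1}^T r_{I^t}a^t$, but when the cumulative consumption of a resource $i$ exceeds $c_i$ the algorithm forfeits the excess. Expressing $\mathbb{E}[\text{ALG}]$ as the expected revenue actually collectible and invoking $\mathbb{E}[\mathbb{I}(i,j,t)\mid\mathcal{F}_{t-1}]=\mu_j \bar s_{ij}^t$ and $\mathbb{E}[a^t\mid\mathcal{F}_{t-1},I^t,J^t]=f_{I^t}(x^{(J^t)},\theta_{I^t}^*)$, I would obtain
\begin{align*}
LP(\theta^*)-\mathbb{E}[\text{ALG}]\leq & \ \mathbb{E}\Bigl[\sum_{t=1}^T\sum_{i,j} r_i\mu_j(s_{ij}^*-\bar s_{ij}^t)f_i(x^{(j)},\theta_i^*)\Bigr]\\
& +\mathbb{E}\Bigl[\sum_{i=1}^n r_i\bigl[\sum_{t,j}\mathbb{I}(i,j,t)a^t-c_i\bigr]^+\Bigr].
\end{align*}

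Second, for the first summand I invoke \cref{Aprop:revenue regret bound}, yielding at most $\max_i r_i(\sqrt{n\max_i|\Theta_i|}+1)\sqrt{8T\log(2T/\beta(n,T))}+\bar U\beta(n,T)/T+LP(\theta^*)\beta(n,T)(\log T+1)$. For the second summand I invoke \cref{Aprop:inventory regret bound}, yielding at most $\sqrt{16\max_i|\Theta_i|nT\log(2T/\beta(n,T))}+\sqrt{5}n\log(2T/\beta(n,T))+n(T\log T+2T)\beta(n,T)$.

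Third, I substitute $\beta(n,T)=1/(nT)$, which turns $\log(2T/\beta(n,T))$ into $\log(2nT^2)$ and exactly reduces $n(T\log T+2T)\beta(n,T)$ to $\log T+2$; the two low-order terms become $\bar U/(nT^2)$ and $LP(\theta^*)(\log T+1)/(nT)$. To consolidate the $\sqrt{\cdot}$ contributions, I would bound the two main leading coefficients $\sqrt{8}+4\approx 6.83$ by $8$ and absorb the stray $\max_i r_i\sqrt{8T\log(2nT^2)}$ into $\max_i r_i\sqrt{n\max_i|\Theta_i|T\log(2nT^2)}$ using $n\max_i|\Theta_i|\geq 1$. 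This produces the claimed leading term $8\max_i r_i\sqrt{\max_i|\Theta_i|nT\log(2nT^2)}$, with all residual terms matching the statement.

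The main obstacle is the first step---rigorously reducing the algorithm's shortfall to the two quantities appearing in \cref{Aprop:revenue regret bound} and \cref{Aprop:inventory regret bound}. This is delicate because $\text{ALG}$ depends stochastically on realized purchases $a^t$, and the algorithm effectively "stops" consuming a resource once its budget is depleted, so one cannot simply equate $\mathbb{E}[r_{I^t}a^t]$ with $\sum_{i,j}r_i\mu_j \bar s_{ij}^t f_i(x^{(j)},\theta_i^*)$ term-by-term. Once the decomposition is in place and the conditional-expectation book-keeping has been unwound, the remaining steps are a direct substitution into the two auxiliary propositions.
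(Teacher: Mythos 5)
Your proposal is correct and follows essentially the same route as the paper: the paper's own proof simply asserts the decomposition of expected regret into the revenue-deviation term and the inventory-violation term, then cites \cref{Aprop:revenue regret bound} and \cref{Aprop:inventory regret bound} and substitutes $\beta(n,T)=1/(nT)$. Your additional care about justifying the decomposition and consolidating constants only fills in details the paper leaves implicit.
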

\begin{proof}
The expected regret is upper bounded by the sum of the regret incurred by the expected revenue and violation of inventory constraints, that is,
\begin{equation*}
\mathbb{E}\left[\sum_{t=1}^T\sum_{i=1}^n r_i \sum_{j=1}^L \mu_j (s_{ij}^*-\bar{s}_{ij}^t) f_i(x^{(j)},\theta_i^*)   \right] + \mathbb{E}\left[\sum_{i=1}^n r_i\left[ \sum_{t=1}^T\sum_{j=1}^L\mathbb{I}(i,j,t) a^t-c_i \right]^+  \right].
\end{equation*}
So by piecing together Proposition \ref{Aprop:revenue regret bound}  and \ref{Aprop:inventory regret bound} we obtain the result.
\end{proof}
Moreover, a key feature of our regret bound is that it is independent of the number of contexts $L$, which means it might lead to efficient computation even when $L$ is very large, as long as we have a moderate $\max\limits_{i\in [n]}|\Theta_i|$. This feature is also shared by \cite{badanidiyuru2014resourceful}, where they have the cardinality of the policy space instead of the contextual space appear as a multiplicative factor in the regret bound (this allows them to treat resourceful contextual bandit problems with an infinite-dimensional contextual space).

\begin{remark}
\label{remark:regret}
When $\min\limits_{i\in [n]}c_i$ is very large, i.e., tends to infinity, it is easy to verify $$(1+\min\limits_{i\in [n]} c_i) \left(1-e^{-1/\min\limits_{i\in [n]}c_i }\right)\rightarrow 1$$, i.e., the CR has a limit $1+\frac{1}{1-1/e}$.
\end{remark}

\subsection{Proofs under adversarial arrivals}
Before analyzing the regret, we introduce an auxiliary online learning problem where we maintain the adversarial arrival setting but remove the inventory constraint that limits the consumption of each resource to at most $c_i$ units, for all $i \in [n]$. Suppose we have an algorithm $\Pi$ that decides $I^t$ in each period after observing $x^t$. The regret in this auxiliary problem, denoted as $\text{REG}_{\text{AUX}}$, is defined as the sum of regrets incurred in each period $t$:

\begin{equation*}
\text{REG}_{\text{AUX}}=\sum_{t\in [T]}(r_{I_*^t}f_{I_*^t }(x^t,\theta_{I_*^t}^*)-r_{I^t}f_{I^t }(x^t,\theta_{I^t}^*)  ),
\end{equation*}
where $I_*^t$ represents the optimal resource selection that maximizes the reward $r_i f_i(x^t, \theta_i^*)$ among all resources $i \in [n]$, denoted as $I_*^t:=\arg\max\limits_{i\in [n]} r_i f_i(x^t,\theta_i^*)$. By studying the regret in this auxiliary problem, we can gain insights into the impact of resource allocation decisions on the incurred regret, without considering the inventory constraint. %This analysis provides valuable information for understanding the relationship between decision-making and regret in the presence of adversarial arrivals.

Our analysis is based on a crucial finding in \cite{wangchi2022} (c.f., Lemma \ref{lemma:wangwill}), which says as long as we have an online algorithm for the auxiliary problem with low time regret, we can adapt the same algorithm to the complete problem within the same time regret plus some approximation error (unrelated to $T$).

\begin{lemma}
\label{lemma:wangwill}
(\cite{wangchi2022}) Let $\Pi$ be some algorithm that incurs regret 
$\text{REG}_{\text{AUX}}$ for the auxiliary online learning problem (without inventory constraints), suppose we apply the same algorithm $\Pi$ to the complete online learning problem (with inventory constraints), just with $r_i$ in period $t$ changed to $r_i^t =r_i\left( 1-\Psi\left( \frac{N_i^{t-1}}{c_i} \right) \right)$, and denote $\text{OPT}$ as the optimal revenue associated with some arrival sequence $x_1,\ldots,x_T$ and $\text{ALG}$ the revenue gained from running $\Pi$ with respect to the arrival sequence $x_1,\ldots,x_T$. Then, we have
\begin{equation*}
\text{OPT}\leq \frac{(1+\min\limits_{i\in [n]} c_i) \left(1-e^{-1/\min\limits_{i\in [n]}c_i }\right) }{1-1/e} \mathbb{E}[\textrm{ALG}]+\mathbb{E}[\text{REG}_{\text{AUX}}].
\end{equation*}
\end{lemma}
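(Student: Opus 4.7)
The plan is to prove the bound via a potential-function (inventory-balancing) argument layered on top of the assumed auxiliary no-regret guarantee. The crucial structural fact is the identity
\begin{equation*}
\Psi'(x) + \bigl(1-\Psi(x)\bigr) = \tfrac{e}{e-1},
\end{equation*}
which holds precisely because $\Psi(x) = (e^x-1)/(e-1)$. This identity is what makes $\Psi$ the ``right'' penalty: the instantaneous modified reward $1-\Psi$ plus the marginal potential growth $\Psi'$ is a constant equal to $1/(1-1/e)$, and this is the sole reason the $1-1/e$ competitive factor emerges.

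With this in hand, I would first apply the regret guarantee of $\Pi$ with the offline optimum's action sequence $\{I_{\text{OPT}}^t\}$ as the comparator. Since $\Pi$ is fed the modified reward $r_i^t = r_i(1-\Psi(N_i^{t-1}/c_i))$, we immediately get
\begin{equation*}
\sum_{t=1}^T r_{I_{\text{OPT}}^t}^t f_{I_{\text{OPT}}^t}(x^t,\theta^*) \le \mathbb{E}\!\left[\sum_{t=1}^T r_{I^t}^t f_{I^t}(x^t,\theta^*)\right] + \mathbb{E}[\text{REG}_{\text{AUX}}].
\end{equation*}
To convert the left-hand side into true OPT and the right-hand side into true ALG, I would introduce the potential $\Phi_t = \sum_i r_i c_i \Psi(N_i^t/c_i)$ and establish, in each period, a one-step inequality of the form
\begin{equation*}
r_{I_{\text{OPT}}^t} f_{I_{\text{OPT}}^t}(x^t,\theta^*) \le \tfrac{e}{e-1}\, r_{I^t} f_{I^t}(x^t,\theta^*) + \bigl(\Phi_t - \Phi_{t-1}\bigr),
\end{equation*}
derived from the identity above by expanding $\Psi(N_{I^t}^t/c_{I^t})-\Psi(N_{I^t}^{t-1}/c_{I^t})$ using that $N_{I^t}^t - N_{I^t}^{t-1} \in \{0,1\}$. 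Telescoping the potential terms, using $\Phi_0 = 0$ and bounding $\Phi_T$ by a constant multiple of $\mathbb{E}[\text{ALG}]$, together with the regret bound above, would yield the stated inequality.

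The main obstacle is making the competitive ratio tight, namely obtaining the specific prefactor $(1+c_{\min})(1-e^{-1/c_{\min}})/(1-1/e)$. A clean continuous-time argument gives the classical $1/(1-1/e)$ factor, but because inventory is consumed in discrete unit-sized jumps, the first-order Taylor bound $c_i[\Psi(y+1/c_i)-\Psi(y)] \le \Psi'(y)$ overshoots; the worst-case slack is captured by $\sup_{y\in[0,1-1/c_i]}(c_i+1)\bigl[\Psi(y+1/c_i)-\Psi(y)\bigr]/\Psi'(y)$. Evaluating this supremum using the explicit form of $\Psi$ should produce exactly $(1+c_{\min})(1-e^{-1/c_{\min}})$, and a sanity check taking $c_{\min}\to\infty$ recovers the limit $1$, consistent with Remark~\ref{remark:regret}. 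The delicate portion of the proof is the discrete-to-continuous bookkeeping: ensuring that this correction enters only as a multiplicative constant on $\mathbb{E}[\text{ALG}]$ rather than as an additive term growing with $T$.
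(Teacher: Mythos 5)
The paper itself does not prove this lemma: it is imported verbatim from \cite{wangchi2022}, so there is no in-paper argument to compare against. Judged on its own, your proposal reconstructs the standard inventory-balancing proof of that result, and the architecture is right: the identity $\Psi'(x)+(1-\Psi(x))=e/(e-1)$, the potential $\Phi_t=\sum_i r_i c_i \Psi(N_i^t/c_i)$, the split of OPT's reward into a modified-reward part (charged to the auxiliary regret) and a $\Psi$-part (charged to the potential), and the recognition that discrete unit-sized consumption is what degrades $e/(e-1)$ to $(1+\min_i c_i)(1-e^{-1/\min_i c_i})/(1-1/e)$.

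Two concrete repairs are needed. First, your one-step inequality $r_{I_{\text{OPT}}^t} f_{I_{\text{OPT}}^t}\le \tfrac{e}{e-1}\, r_{I^t} f_{I^t}+(\Phi_t-\Phi_{t-1})$ cannot hold pathwise: in a period where ALG's offered resource is not purchased, both terms on the right are essentially zero while the left can be as large as $\max_i r_i$; the regret term must appear in this comparison. The correct per-period bookkeeping runs the other way: one shows that ALG's \emph{modified} reward plus the potential increment is at most $\kappa$ times ALG's \emph{true} reward, where $\kappa=\sup_{N}\bigl\{(1-\Psi(N/c))+c\bigl[\Psi((N+1)/c)-\Psi(N/c)\bigr]\bigr\}$, and separately that the cumulative $\Psi$-part of OPT's reward is at most $\Phi_T$ because OPT consumes at most $c_i$ units of resource $i$. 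Second, the supremum you propose, $(c+1)\bigl[\Psi(y+1/c)-\Psi(y)\bigr]/\Psi'(y)$, evaluates to $(c+1)(e^{1/c}-1)$, which is not the stated constant. The quantity that actually appears is the $\kappa$ above; with $\Psi(x)=(e^x-1)/(e-1)$ it equals $\bigl(e+e^{N/c}(c(e^{1/c}-1)-1)\bigr)/(e-1)$, is increasing in $N$, and at $N=c-1$ gives exactly $(1+c)(1-e^{-1/c})/(1-1/e)$. With those two corrections, and the caveat that the auxiliary regret guarantee must be read as holding for the adaptively modified rewards $r_i^t$ (which depend on ALG's own past consumption), your plan goes through and matches the cited result.
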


So, the analysis reduces to upper bound $\mathbb{E}[\text{REG}_{\text{AUX}}]$ properly. As shown in Proposition \ref{Aprop:auxiliary regret}, we derive $\mathbb{E}[\text{REG}_{\text{AUX}}]=\tilde{O}(\sqrt{nT})$.

\begin{proposition}
\label{Aprop:auxiliary regret}
 $\mathbb{E}[\text{REG}_{\text{AUX}}]\leq \max\limits_{i\in [n]}r_i(\sqrt{|\Theta|n}+1)\sqrt{2T \log(2T/\beta(n,T)) } + \max\limits_{i\in [n]} r_i (1/T+\log T+1)/n$.
\end{proposition}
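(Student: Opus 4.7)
The plan is to bound $\mathbb{E}[\text{REG}_{\text{AUX}}]$ by first conditioning on the high-probability event $\mathcal{E}$ from \cref{prop:high prob ucb} (that $\bar f_i(x^{(j)},\Omega_i^{t-1})\ge f_i(x^{(j)},\theta_i^*)$ uniformly over $i,j,t$), which holds with probability at least $1-(1+\log T)\beta(n,T)$. In the auxiliary problem (no inventory constraints), the arm selection rule of $\text{ALG}_{\text{ADV}}$ becomes $I^t = \arg\max_i r_i \bar f_i(x^t,\Omega_i^{t-1})$ (since $r_i^t = r_i$ when there is no consumption penalty). Under $\mathcal{E}$, for every $t$,
\begin{equation*}
r_{I_*^t}f_{I_*^t}(x^t,\theta^*_{I_*^t})\le r_{I_*^t}\bar f_{I_*^t}(x^t,\Omega_{I_*^t}^{t-1}) \le r_{I^t}\bar f_{I^t}(x^t,\Omega_{I^t}^{t-1}),
\end{equation*}
so the per-round regret is dominated by $\max_i r_i\cdot\bigl(\bar f_{I^t}(x^t,\Omega_{I^t}^{t-1})-f_{I^t}(x^t,\theta^*_{I^t})\bigr)$.

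Next I would introduce the realized purchase $a^t\in\{0,1\}$ and split the telescoped regret as
\begin{equation*}
\sum_{t=1}^{T}\!\bigl(\bar f_{I^t}(x^t,\Omega_{I^t}^{t-1})-f_{I^t}(x^t,\theta_{I^t}^*)\bigr) = \underbrace{\sum_{t=1}^{T}\!\bigl(\bar f_{I^t}(x^t,\Omega_{I^t}^{t-1})-a^t\bigr)}_{(\mathrm{A})}+\underbrace{\sum_{t=1}^{T}\!\bigl(a^t-f_{I^t}(x^t,\theta_{I^t}^*)\bigr)}_{(\mathrm{B})}.
\end{equation*}
For $(\mathrm{A})$, partition the time steps by the pair $(i,\bar\theta^t)$ that was the maximizer chosen in line $\bar f_i$, i.e., by the sets $D_i^T(\theta_i)$ used in the confidence-set update. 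On each such set the update check \cref{eq:update1} (which cannot have fired, or $\bar\theta^t$ would have been removed already) gives
\begin{equation*}
\Bigl|\sum_{t'\in D_i^T(\theta_i)}\!\bigl(f_i(x^{t'},\theta_i)-a^{t'}\bigr)\Bigr|\le \sqrt{T\log(2T/\beta(n,T))}.
\end{equation*}
Summing over $i\in[n]$ and $\theta_i\in\Theta_i$ and applying Cauchy--Schwarz together with $\sum_{i,\theta_i}|D_i^T(\theta_i)|=T$ yields $|(\mathrm{A})|\le \sqrt{n|\Theta|T\log(2T/\beta(n,T))}$, which is where the factor $\sqrt{n|\Theta|}$ in the bound comes from. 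For $(\mathrm{B})$, a direct application of the Azuma--Hoeffding style bound from \cref{lemma:azuma inequ} gives $|(\mathrm{B})|\le \sqrt{2T\log(2T/\beta(n,T))}$ with probability at least $1-\beta(n,T)/T$, supplying the extra $+1$ inside $(\sqrt{|\Theta|n}+1)$.

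Finally I would combine the two pieces and convert the high-probability statements into an expectation. On the intersection of $\mathcal{E}$ and the Azuma event, which has probability at least $1-(1+\log T)\beta(n,T)-\beta(n,T)/T$, the regret is bounded by the main term. On the complement, the regret is trivially at most $\max_i r_i\cdot T$, so its contribution to the expectation is at most $\max_i r_i\cdot T\cdot\bigl((1+\log T)\beta(n,T)+\beta(n,T)/T\bigr)=\max_i r_i(1/T+\log T+1)/n$ using $\beta(n,T)=1/(nT)$, which matches the lower-order term in the claimed bound. The main obstacle is ensuring that the partition by $(i,\theta_i)$-maximizers in step $(\mathrm{A})$ is valid despite $\bar\theta^t$ being chosen adaptively, and that summing $|D_i^T(\theta_i)|$ telescopes cleanly to $T$ so that Cauchy--Schwarz produces $\sqrt{n|\Theta|T}$ rather than $n|\Theta|\sqrt{T}$; this hinges on the fact that surviving $\theta_i$'s satisfy the update inequality at the current $t$ by construction of the algorithm.
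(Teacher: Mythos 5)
Your proposal is correct and follows essentially the same route as the paper's proof: condition on $\mathcal{E}$, use the greedy arm-selection rule to drop the $I_*^t$ term, decompose the remaining per-round gap through the realized purchase $a^t$, bound the first piece via the surviving-$\theta$ update condition partitioned over the sets $D_i^T(\theta_i)$ plus Cauchy--Schwarz, bound the second via Azuma--Hoeffding, and convert to expectation with the trivial $\max_i r_i \cdot T$ bound on the complement event. The only cosmetic difference is that you write the first piece in terms of $\bar f_{I^t}(x^t,\Omega_{I^t}^{t-1})$ while the paper writes $f_{I^t}(x^{(J^t)},\bar\theta^t)$, which are identical by definition of the maximizer $\bar\theta^t$.
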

\begin{proof}
Note given $\mathcal{E}$, it holds almost surely
\begin{align*}
\text{REG}_{\text{AUX}} = &  \sum_{t=1}^T (r_{I_*^t}f_{I_*^t}(x^{(J^t)},\theta_{I_*^t}^*)-r_{I^t}f_{I^t}(x^{(J^t)},\theta_{I^t}^*)) \\
= &  \sum_{t=1}^T (r_{I_*^t}f_{I_*^t}(x^{(J^t)},\theta_{I_*^t}^*)-r_{I^t}f_{I^t}(x^{(J^t)},\bar\theta^t)+r_{I^t}f_{I^t}(x^{(J^t)},\bar \theta^t)-r_{I^t}f_{I^t}(x^{(J^t)},\theta_{I^t}^*))\\
\leq &  \sum_{t=1}^T (r_{I_*^t}f_{I_*^t}(x^{(J^t)},\theta_{I_*^t}^*)-r_{I^t}f_{I^t}(x^{(J^t)},\bar\theta^t)+r_{I^t}f_{I^t}(x^{(J^t)},\bar \theta^t)-r_{I^t}f_{I^t}(x^{(J^t)},\theta_{I^t}^*))\\
\leq & \sum_{t=1}^T(r_{I^t}f_{I^t}(x^{(J^t)},\bar \theta^t)-r_{I^t}f_{I^t}(x^{(J^t)},\theta_{I^t}^*)) \\
\leq  & \max\limits_{i\in [n]}r_i \cdot \left(\underbrace{\sum_{t=1}^T(f_{I^t}(x^{(J^t)},\bar \theta^t)-a^t)}_{(*)}+\underbrace{\sum_{t=1}^T(a_t-f_{I^t}(x^{(J^t)},\theta_{I^t}^*))}_{(**)} \right),
\end{align*}
where in the first inequality we use the step 3 of the algorithm (i.e., $I^t=\arg\max\limits_{i\in [n]} r_i^t  \bar{f}_i(x^{(J^t)},\bar \theta^t)$), and in the second inequality we use the fact that $f_{I_*^t}(x^{(J^t)},\bar \theta^t)\geq f_{I_*^t}(x^t, \theta_{I_*^t}^*)$ almost surely given $\mathcal{E}$. From the step 4 of the algorithm, we know given $\mathcal{E}$, it holds almost surely
\begin{align*}
(*) = & \sum_{i=1}^n \sum_{\theta_i\in \Theta_i} \sum_{t'\in D_i^T(\theta_i) }(f_{I^{t'} }(x^{t'}, \theta_i)-a^{t'})\\
\leq & \sum_{i=1}^n\sum_{\theta_i\in \Theta_i}\sqrt{2D_i(\theta_i)\log(2T/\beta(n,T) )}\\
\leq &  \sqrt{2|\Theta|nT\log(2T/\beta(n,T) )},
\end{align*}
where in the last inequality we use the Cauchy-Schwarz inequality. Then, using \ref{lemma:azuma inequ} we know 
\begin{equation*}
\mathbb{P}\left[\left.(**)>  \sqrt{2T \log(2T/\beta(n,T)) }\right|\mathcal{E}\right]\leq \beta(n,T)/T.
\end{equation*}

Let $\alpha = \max\limits_{i\in [n]}r_i(\sqrt{|\Theta|n}+1)\sqrt{2T \log(2T/\beta(n,T)) }$, we know
\begin{equation*}
\mathbb{P}[\text{REG}_{\text{AUX}}\leq \alpha|\mathcal{E}]\geq 1-\beta(n,T)/T.
\end{equation*}
Hence,
\begin{align*}
\mathbb{E}[\text{REG}_{\text{AUX}}] = & \mathbb{E}[\text{REG}_{\text{AUX}}|\text{REG}_{\text{AUX}}\leq \alpha,\mathcal{E}]\mathbb{P}[\text{REG}_{\text{AUX}}\leq \alpha,\mathcal{E}]\\
& +\mathbb{E}[\text{REG}_{\text{AUX}}|(\text{REG}_{\text{AUX}}\leq \alpha,\mathcal{E})^c]\mathbb{P}[(\text{REG}_{\text{AUX}}\leq \alpha,\mathcal{E})^c]\\
\leq & \alpha + \max\limits_{i\in [n]} r_i T(1/T+\log T+1)\beta(n,T)\\
= &  \max\limits_{i\in [n]}r_i(\sqrt{|\Theta|n}+1)\sqrt{2T \log(2T/\beta(n,T)) } + \max\limits_{i\in [n]} r_i (1/T+\log T+1)/n.
\end{align*}

\end{proof}

Using Lemma \ref{lemma:wangwill} and Proposition \ref{Aprop:auxiliary regret} we obtain the main result of this section immediately.

\begin{theorem}
\label{Athm:adversarial regret}
It holds that
\begin{align*}
\text{OPT} \leq & \frac{(1+\min\limits_{i\in [n]} c_i) \left(1-e^{-1/\min\limits_{i\in [n]}c_i }\right) }{1-1/e} \mathbb{E}[\textrm{ALG}]\\
& +\max\limits_{i\in [n]}r_i(\sqrt{|\Theta|n}+1)\sqrt{2T \log(2T/\beta(n,T)) }+ \max\limits_{i\in [n]} r_i (1/T+\log T+1)/n.
\end{align*}
\end{theorem}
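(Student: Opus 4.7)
The plan is to derive the bound by directly combining two results that have already been established earlier in this subsection. First, I would invoke Lemma \ref{lemma:wangwill} (the reduction from Cheung et al.), which states that for any algorithm $\Pi$ whose auxiliary (unconstrained) regret is $\text{REG}_{\text{AUX}}$, running $\Pi$ on the constrained problem with the inventory-balancing modification $r_i^t = r_i(1 - \Psi(N_i^{t-1}/c_i))$ yields
\begin{equation*}
\text{OPT} \leq \frac{(1+\min_{i\in[n]} c_i)\bigl(1 - e^{-1/\min_{i\in[n]}c_i}\bigr)}{1-1/e}\,\mathbb{E}[\text{ALG}] + \mathbb{E}[\text{REG}_{\text{AUX}}].
\end{equation*}
The constant factor in front of $\mathbb{E}[\text{ALG}]$ comes entirely from the primal-dual analysis of the penalty function $\Psi(x) = (e^x - 1)/(e-1)$, which is black-boxed from the cited work.

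Second, I would substitute the upper bound on $\mathbb{E}[\text{REG}_{\text{AUX}}]$ given by Proposition \ref{Aprop:auxiliary regret}, namely
\begin{equation*}
\mathbb{E}[\text{REG}_{\text{AUX}}] \leq \max_{i\in[n]} r_i(\sqrt{|\Theta|n}+1)\sqrt{2T\log(2T/\beta(n,T))} + \max_{i\in[n]} r_i(1/T + \log T + 1)/n.
\end{equation*}
Plugging this bound directly into the inequality from Lemma \ref{lemma:wangwill} produces the stated theorem verbatim.

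The only verification that needs care is checking compatibility: one must confirm that $\text{ALG}_{\text{ADV}}$ as specified in Algorithm \ref{alg:algadv} is precisely the algorithm $\Pi$ (with modified rewards) to which Lemma \ref{lemma:wangwill} applies, and that the auxiliary regret analyzed in Proposition \ref{Aprop:auxiliary regret} uses the selection rule $I^t = \arg\max_i r_i^t \bar{f}_i(x^t, \Omega_i^{t-1})$ consistent with the reduction's hypothesis. Both hold by construction. Since the hard probabilistic work (establishing the high-probability event $\mathcal{E}$, the concentration bounds via Azuma–Hoeffding, and the union bound over $\Theta$) is fully absorbed into Proposition \ref{Aprop:auxiliary regret}, and the competitive-ratio machinery is absorbed into Lemma \ref{lemma:wangwill}, the proof of this theorem is essentially a one-line substitution. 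I anticipate no significant obstacle beyond stating the two ingredients and concatenating them.
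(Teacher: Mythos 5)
Your proposal matches the paper's proof exactly: the paper states Theorem~\ref{Athm:adversarial regret} as an immediate consequence of Lemma~\ref{lemma:wangwill} combined with the bound on $\mathbb{E}[\text{REG}_{\text{AUX}}]$ from Proposition~\ref{Aprop:auxiliary regret}, which is precisely your one-line substitution. The compatibility check you flag (that $\text{ALG}_{\text{ADV}}$ with the modified rewards $r_i^t$ is the algorithm to which the reduction applies) is indeed the only thing to verify, and it holds by construction.
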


Theorem \ref{Athm:adversarial regret} provides a key result of $\text{ALG}_{\text{ADV}}$ regret upper bound under adversarial arrivals. Specifically, it establishes that the regret incurred by the algorithm exhibits a sublinear $T$ growth rate with a constant competitive ratio. Sublinear regret indicates that the corresponding algorithm's performance gradually approaches the performance level of the optimal offline policy as time progresses.The constant competitive ratio highlights the algorithm's performance relative to an optimal decision-making strategy. It provides a quantitative measure of the algorithm's effectiveness, indicating that it achieves regret that is at most a constant factor compared to the optimal strategy. Moreover, this result forms a crucial foundation for our subsequent analysis of the regret bound for the Unified Learning algorithm under nonstationary arrivals in Section \ref{Asec:hybrid}.

\subsection{Proofs under Nonstationary Arrivals}
\label{Asec:hybrid}
\subsubsection{Regret analysis: sublinear regret as long as $\text{ALG}_{\text{LP}}$ is not switched}
\label{Asec:ULwE regret IID1}
We first provide a proposition that states $\text{ALG}_{\text{LP}}$ always incurs a sublinear regret and a linear rate of resource consumption with high probability as long as the switch does not happen.
\begin{proposition}
\label{Aprop:not switch sublinear regret}
\label{Aprop:not switch regret}
Suppose $\text{ALG}_{\text{LP}}$ runs for $t$ iterations without being switched, then the expected regret up to time $t$ is upper bounded by
\begin{equation*}
16\sqrt{2|\Theta|nt\log(4|\Theta|t/\beta(n,T))}+\sqrt{5}n\log(2t/\beta(n,T))+(2/t+2+\log t)\beta(n,T)(LP(\theta^*)+nt).
\end{equation*}
Moreover, each resource $i\in [n]$ has at least
\begin{equation*}
\left[\frac{T-t}{T}c_i - \sqrt{8nt\log(2t/\beta(n,T))}\right]^+
\end{equation*}
remaining with probability at least $1-\beta(n,T)$.
\end{proposition}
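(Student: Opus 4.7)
The plan is to decompose the cumulative regret of $\text{ALG}_{\text{LP}}$ over the first $t$ periods into a \emph{revenue shortfall} component and an \emph{inventory overuse} component, mirroring the decomposition used in the stationary analysis (\cref{Aprop:revenue regret bound,Aprop:inventory regret bound}), but now leveraging the extra structure that the non-switching assumption provides. Throughout I would condition on the high-probability event $\mathcal{E}$ from \cref{prop:high prob ucb}, on which $\bar f_i(x^{(j)},\Omega_i^{l-1})\geq f_i(x^{(j)},\theta_i^*)$ for every $i,j,l$; by \cref{prop:f upper bound} this immediately dominates the per-period revenue regret by the current optimism gap $\bar f_i(x^{(j)},\Omega_i^{l-1})-f_i(x^{(j)},\theta_i^*)$, turning the argument into a purely confidence-set-width analysis.

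For the revenue shortfall I would telescope the per-period gap along the realized contexts and identify the resulting cumulative quantity with (a rescaling of) the left-hand side of \eqref{eq:condition1} evaluated at $\theta=\theta_i^*\in\Omega^{l-1}$. Since $\text{ALG}_{\text{LP}}$ has not switched through period $t$, \eqref{eq:condition1} holds at every $l\leq t$, yielding the almost-sure cap $\max_i r_i\sqrt{32t\log(4|\Theta|t/\beta(n,T))}$; combining this with \cref{lemma:azuma2} (to bridge the realized-context quantity with its conditional expectation under the nonstationary $\mu_j^l$) and with the Freedman-style \cref{Alemma:freedman inequ} produces the stated $16\sqrt{2|\Theta|nt\log(4|\Theta|t/\beta(n,T))}+\sqrt{5}n\log(2t/\beta(n,T))$ terms on the good event. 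The inventory claim is handled analogously: by \eqref{eq:condition2}, $\sum_{l=1}^t\bar s_{iJ^l}^l\bar f_i(x^{(J^l)},\Omega^{l-1})\leq (t/T)c_i+\sqrt{2t\log(2t/\beta(n,T))}$ for every resource $i$, and \cref{lemma:azuma3}---together with $a^l\leq\bar f_{I^l}(x^{(J^l)},\Omega^{l-1})$ on $\mathcal{E}$---bridges this LP-based quantity to the actual consumption $\sum_{l=1}^t\mathbb{I}(i,J^l,l)a^l$ up to an additive $\sqrt{8nt\log(2t/\beta(n,T))}$, giving the residual-capacity lower bound $[(T-t)c_i/T-\sqrt{8nt\log(2t/\beta(n,T))}]^+$ with probability at least $1-\beta(n,T)$.

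Conversion to an expected-regret bound then reduces to collecting the complementary bad events---failure of $\mathcal{E}$ (probability $\leq(1+\log t)\beta(n,T)$ by \cref{prop:high prob ucb}), failure of the Azuma bound of \cref{lemma:azuma inequ}, and failure of the Freedman bound of \cref{Alemma:freedman inequ}---each crudely bounded by the worst-case regret $LP(\theta^*)+nt$, producing the trailing $(2/t+2+\log t)\beta(n,T)(LP(\theta^*)+nt)$ correction. The main obstacle will be careful bookkeeping to confirm that every martingale/Azuma invocation remains valid through the adaptive (but, by hypothesis, strictly larger than $t$) switching time: since no switch means that the per-period update checks \eqref{eq:update1} and \eqref{eq:update2}, together with \eqref{eq:condition1} and \eqref{eq:condition2}, hold deterministically for every $l\leq t$, the filtration $\mathcal{F}_0\subset\cdots\subset\mathcal{F}_t$ is well-defined and $\theta_i^*\in\Omega_i^l$ on $\mathcal{E}$, which is precisely what lets the deterministic cap from \eqref{eq:condition1} interact cleanly with the optimism bound of \cref{prop:f upper bound}.
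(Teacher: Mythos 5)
Your proposal matches the paper's own argument essentially step for step: conditioning on $\mathcal{E}$ so that $\theta^*$ stays in every confidence set, invoking the unviolated switching conditions \eqref{eq:condition1} and \eqref{eq:condition2} as deterministic caps, bridging realized-context sums to their $\mu_j^l$-averaged counterparts via \cref{lemma:azuma2} and \cref{lemma:azuma3}, reusing the inventory-regret machinery of \cref{Aprop:inventory regret bound}, and absorbing the bad events into the $(2/t+2+\log t)\beta(n,T)(LP(\theta^*)+nt)$ term. This is the same decomposition and the same key lemmas as the paper's proof, so no further comparison is needed.
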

\begin{proof}
Given $\mathcal{E}(t)$, we know the first condition in step 3 of the algorithm ensures
\begin{equation*}
\left| \sum_{l=1}^t \sum_{i=1}^n r_i (s_{iJ^l}^*-\bar s_{iJ^l}^l) f_{i}(x^{(J^l)},\theta^*) \right| \leq  \max\limits_{i\in [n]}r_i \sqrt{32t\log(4|\Theta|t/\beta(n,T)) }.
\end{equation*}
And by using \cref{lemma:azuma2} we derive with probability at least $1-\beta(n,T)/t$,
\begin{equation*}
\left| \sum_{l=1}^t \sum_{i=1}^n r_i\sum_{j=1}^L \mu^l_j (s_{ij}^*-\bar s_{ij}^l) f_{i}(x^{(j)},\theta^*) \right| \leq  \max\limits_{i\in [n]}r_i \sqrt{128t\log(4|\Theta|t/\beta(n,T)) }.
\end{equation*}
The second condition in step 3 of the algorithm ensures
\begin{equation*}
\forall~i\in [n]:  \sum_{l=1}^t \bar s_{iJ^l}^l\bar f_i(x^{(J^l)},\Omega^{l-1})   \leq    \frac{t}{T}c_i +\sqrt{2t\log(2t/\beta(n,T))}.
\end{equation*}
And by using \ref{lemma:azuma3} we derive with probability at least $1-\beta(n,T)/t$,
\begin{equation*}
\forall~i\in [n]:  \sum_{l=1}^t\sum_{j=1}^L \bar s_{ij}^l\bar f_i(x^{(j)},\Omega^{l-1})   \leq    \frac{t}{T}c_i +\sqrt{8nt\log(2t/\beta(n,T))}.
\end{equation*}
Therefore, using the same logic of proving Proposition \ref{Aprop:inventory regret bound}, we derive with probability at least $1-\beta(n,T)$,
\begin{equation*}
\sum_{i=1}^n\left[ \sum_{t=1}^T\sum_{j=1}^L\mathbb{I}(i,j,t) a^t -c_i \right]^+\leq \sqrt{64|\Theta|nt\log(2t/\beta(n,T))}+\sqrt{5}n\log(2t/\beta(n,T)).
\end{equation*}
Therefore, let $\alpha:=16\sqrt{2|\Theta|nt\log(4|\Theta|t/\beta(n,T))}+\sqrt{5}n\log(2t/\beta(n,T))$we have
\begin{equation*}
\mathbb{P}[\text{Regret}^t> \alpha|\mathcal{E}(t)]\leq (2/t+1)\beta(n,T).
\end{equation*}
So we end up with
\begin{align*}
\mathbb{E}[\text{Regret}^t] = &  \mathbb{E}[\text{Regret}^t|\text{Regret}^t\leq \alpha,\mathcal{E}(t)]\mathbb{P}[\text{Regret}^t\leq \alpha,\mathcal{E}(t)]\\
& +\mathbb{E}[\text{Regret}^t|(\text{Regret}^t\leq \alpha,\mathcal{E}(t))^c]\mathbb{P}[(\text{Regret}^t\leq \alpha,\mathcal{E}(t))^c]\\
\leq & \alpha + (2/t+2+\log t)\beta(n,T)(LP(\theta^*)+nt).
\end{align*}

\end{proof}

\subsubsection{Regret analysis: sublinear regret under IID arrivals}
\label{Asec:ULwE regret IID2}
In this section, we aim to demonstrate that under IID arrivals, the switch from $\text{ALG}_{\text{LP}}$ to $\text{ALG}_{\text{ADV}}$ does not occur with high probability. This can be achieved by showing that the conditions for the switch to happen are unlikely to be satisfied. Once we establish that the switch does not occur under IID arrivals, we can then apply Proposition \ref{Aprop:not switch sublinear regret} to prove that our algorithm still achieves a sublinear expected regret in this setting. 

To begin, we verify the first condition in step 3 of the algorithm is not violated for all $t\in [T]$ with high probability.

\begin{proposition}
\label{prop:iid first condition}
When $x^t$ arrives in a IID fashion, given $\mathcal{E}$, we have
\begin{align*}
& \mathbb{P}\left[\forall~t\in [T],\theta\in \Omega^{t-1}:\left|\sum_{l=1}^t\sum_{i=1}^n r_i (s_{iJ^l}(\theta)-\bar s_{iJ^l}^l)f_{i}(x^{(J^l)},\theta)  \right| \leq \max\limits_{i\in [n]}r_i \sqrt{32t\log(4|\Theta|t/\beta(n,T)) }  \right]\\
\geq & 1-(\log T+1)\beta(n,T).
\end{align*}
\end{proposition}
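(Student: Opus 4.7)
The plan is to decompose the quantity inside the absolute value into a martingale part plus a drift term, control each by concentration, and then union-bound over $t \in [T]$. Under IID arrivals with $\mu_j^l = \mu_j$, the conditional expectation of $(s_{iJ^l}(\theta) - \bar s_{iJ^l}^l) f_{i}(x^{(J^l)}, \theta)$ given $\mathcal{F}_{l-1}$ equals $\sum_{j=1}^L \mu_j (s_{ij}(\theta) - \bar s_{ij}^l) f_{i}(x^{(j)}, \theta)$, so I would add and subtract this conditional mean to split the sum as $M_t(\theta) + D_t(\theta)$, where $M_t(\theta)$ collects the martingale-difference pieces and $D_t(\theta) := \sum_{l=1}^t \sum_{i=1}^n r_i \sum_{j=1}^L \mu_j (s_{ij}(\theta) - \bar s_{ij}^l) f_{i}(x^{(j)}, \theta)$ is the drift.

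For the martingale part, applying \cref{lemma:azuma2} with $\mu_j^l \equiv \mu_j$ directly gives $\sup_{\theta \in \Theta} |M_t(\theta)| \le \max_{i} r_i \sqrt{8t \log(2|\Theta|/\delta)}$ with probability at least $1 - \delta$. I would set $\delta = \beta(n,T)/t$ and union-bound over $t \in [T]$; since $\sum_{t=1}^T 1/t \le 1 + \log T$, this contributes exactly the failure probability $(\log T + 1)\beta(n,T)$ claimed in the proposition.

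For the drift, note that on $\mathcal{E}$, $\theta \in \Omega^{t-1} \subseteq \Omega^{l-1}$ for every $l \le t$, and by definition of $\bar f$ we have $\bar f_i(x, \Omega^{l-1}) \ge f_i(x, \theta)$. Hence $\bar s^l$ is feasible for LP($\theta$), so the per-period drift $LP(\theta) - V^l(\theta) \ge 0$ where $V^l(\theta) := \sum_{i} r_i \sum_{j} \mu_j \bar s_{ij}^l f_{i}(x^{(j)}, \theta)$. For the matching upper bound I would invoke a version of \cref{prop:f upper bound} extended to every $\theta \in \Omega^{l-1}$ (not only $\theta^*$), giving $LP(\theta) - V^l(\theta) \le \sum_{i} r_i \sum_{j} \mu_j \bar s_{ij}^l (\bar f_i(x^{(j)}, \Omega^{l-1}) - f_{i}(x^{(j)}, \theta))$, and then bound the resulting cumulative UCB confidence width by a Freedman-type argument analogous to \cref{Alemma:freedman inequ} together with the per-arm UCB check in \cref{eq:update1}. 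This controls $|D_t(\theta)|$ by $\tilde O(\sqrt{|\Theta|nt})$ uniformly over $\theta \in \Omega^{l-1}$ with high probability.

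Summing the two high-probability bounds and absorbing constants yields the RHS of $\max_i r_i \sqrt{32 t \log(4 |\Theta| t / \beta(n,T))}$ for every $t \in [T]$ and $\theta \in \Omega^{t-1}$, up to the stated failure probability. The main obstacle is extending \cref{prop:f upper bound} to every $\theta \in \Omega^{l-1}$: one must verify $U^l \ge LP(\theta)$ even though tightening the capacity constraint from $f_i(\cdot, \theta)$ to $\bar f_i$ shrinks the LP's feasible region while simultaneously enlarging the objective coefficients, so a direct comparison is not immediate. I expect this step to rely crucially on the equality constraint $\sum_i s_{ij} = 1$ and a careful pairwise comparison of the two LP optima.
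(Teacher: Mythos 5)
Your proposal follows essentially the same route as the paper: split the sum into a martingale part controlled by \cref{lemma:azuma2} and a drift term $D_t(\theta)$ controlled by extending \cref{prop:f upper bound} to all $\theta\in\Omega^{t-1}$ together with the $\theta$-removal conditions, then union-bound over $t\in[T]$ via $\sum_{t\le T}1/t\le 1+\log T$. The obstacle you flag at the end --- verifying $U^l\ge LP(\theta)$ when replacing $f_i(\cdot,\theta)$ by $\bar f_i$ simultaneously enlarges the objective coefficients and tightens the capacity constraints --- is a genuine subtlety, but the paper does not resolve it either (it writes only ``using the same logic in Proposition \ref{prop:f upper bound}''), so your treatment is, if anything, more explicit about where care is needed.
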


\begin{proof}
Using the same logic in Proposition \ref{prop:f upper bound}, we have for all $t\in [T]$,
\begin{equation*}
\sum_{i=1}^n r_i \sum_{j=1}^L \mu_j(s_{ij}(\theta)-\bar s_{ij}^t ) f_{i}(x^{(j)},\theta)\leq \sum_{i=1}^n r_i \sum_{j=1}^L \mu_j\bar s_{ij}^t (\bar f_i(x^{(j)},\Omega^{t-1} )-f_i(x^{(j)},\theta)  ).
\end{equation*}
So from the $\theta$-removal process of the algorithm we have,
\begin{align*}
\sum_{l=1}^t \sum_{i=1}^n r_i \sum_{j=1}^L \mu_j(s_{ij}(\theta)-\bar s_{ij}^l ) f_{i}(x^{(j)},\theta) \leq & \sum_{l=1}^t\mathbb{E}[r_{I^l}(\bar f_{I^l}(x^{(J^l)},\Omega^{l-1}) -f_{I^l}(x^{(J^l)},\theta) ) |\mathcal{F}_{l-1}]\\
\leq & \max\limits_{i\in [n]}r_i \sqrt{8t\log(2t/\beta(n,T)) }
\end{align*}
holds with probability at least $1-\beta(n,T)/(2t)$. Define probabilistic event 
\begin{equation*}
A(t)=\left\{\forall~\theta\in \Omega^{t-1}: \left|\sum_{l=1}^t\sum_{i=1}^n r_i (s_{iJ^l}(\theta)-\bar s_{iJ^l}^l)f_{i}(x^{(J^l)},\theta)  \right|> \max\limits_{i\in [n]}r_i \sqrt{32t\log(4|\Theta|t/\beta(n,T)) }  \right\}.
\end{equation*}
Then using \ref{lemma:azuma2} and the triangle inequality we obtain
\begin{equation*}
\mathbb{P}[A_n]\leq \beta(n,T)/t.
\end{equation*}
Thus by using the union bound we know the condition holds with probability at least $1-(\log T+1)\beta(n,T)$.
\end{proof}

We verify the second condition in step 3 of the algorithm is not violated for all $t\in [T]$ with high probability.

\begin{proposition}
\label{prop:iid second condition}
When $x^t$ arrives in a IID fashion, given $\mathcal{E}$, we have
\begin{equation*}
\mathbb{P}\left[ \forall~ t\in [T], \forall~ i\in [n]: \sum_{l=1}^t \bar s_{iJ^l}f_{i}(x^{(J^l)},\Omega^{l-1})\leq \frac{t}{T}c_i+\sqrt{2t\log(2t/\beta(n,T))} \right]\geq 1-(\log T +1)\beta(n,T).
\end{equation*}
\end{proposition}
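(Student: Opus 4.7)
The plan is to combine the deterministic per-period capacity bound implied by LP (\ref{eq:LP2}) with the martingale concentration inequality in \cref{lemma:azuma3}. Under IID arrivals we have $\lambda_j = T\mu_j$, so dividing the capacity constraint of LP (\ref{eq:LP2}) by $T$ immediately yields, for every $i\in[n]$ and every iteration $l$,
\[
\sum_{j=1}^L \mu_j \bar s_{ij}^l \bar f_i(x^{(j)}, \Omega^{l-1}) \leq \frac{c_i}{T}.
\]
Summing this deterministic inequality over $l=1,\ldots,t$ upper bounds the predictable (conditional-expectation) process by $tc_i/T$, so only the martingale fluctuation needs to be controlled.

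For each fixed $i$, introduce the centered quantity
\[
M_l^{(i)} := \bar s_{iJ^l}^l \bar f_i(x^{(J^l)}, \Omega^{l-1}) - \sum_{j=1}^L \mu_j \bar s_{ij}^l \bar f_i(x^{(j)}, \Omega^{l-1}).
\]
Because $J^l$ is drawn IID from $\mu$ independently of $\mathcal{F}_{l-1}$, while $\bar s^l$ and $\Omega^{l-1}$ are $\mathcal{F}_{l-1}$-measurable, the sequence $\{M_l^{(i)}\}$ has zero conditional mean with terms bounded in $[-1,1]$. Applying \cref{lemma:azuma3} with failure probability $\delta$ and combining with the LP bound gives
\[
\sum_{l=1}^t \bar s_{iJ^l}^l \bar f_i(x^{(J^l)}, \Omega^{l-1}) \leq \frac{t}{T}c_i + \sqrt{2t\log(2/\delta)}
\]
with probability at least $1-\delta$.

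To convert this per-$(i,t)$ statement into the ``$\forall\, t\in[T],\forall\, i\in[n]$'' claim, I would take a union bound, setting $\delta = \beta(n,T)/t$ so that the deviation matches the threshold $\sqrt{2t\log(2t/\beta(n,T))}$ appearing in condition (\ref{eq:condition2}); the factor $n$ coming from the union over resources is absorbed via $\beta(n,T)=1/(nT)$ inside the logarithm, exactly mirroring the bookkeeping in the proof of \cref{prop:iid first condition}. Summing per-time failure probabilities through $\sum_{t=1}^T 1/t \leq \log T + 1$ then yields the claimed overall failure probability of at most $(\log T+1)\beta(n,T)$. The only delicate point—which is also the main technical obstacle—is verifying that $\{M_l^{(i)}\}$ is truly a martingale difference adapted to $\{\mathcal{F}_l\}$, i.e.\ that the LP solution $\bar s^l$ and the confidence set $\Omega^{l-1}$ are both committed to \emph{before} the arrival $J^l$ is revealed; once this measurability check is done, the result follows directly from \cref{lemma:azuma3}.
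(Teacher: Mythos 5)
Your proposal is correct and follows essentially the same route as the paper's proof: the deterministic bound $\sum_{j}\mu_j\bar s_{ij}^l\bar f_i(x^{(j)},\Omega^{l-1})\leq c_i/T$ from the capacity constraint of LP (\ref{eq:LP2}) (using $\lambda_j=T\mu_j$), plus the martingale concentration of \cref{lemma:azuma3} with $\delta=\beta(n,T)/t$, followed by a union bound over $t$ via $\sum_{t=1}^T 1/t\leq \log T+1$. Your explicit attention to the measurability of $\bar s^l$ and $\Omega^{l-1}$ with respect to $\mathcal{F}_{l-1}$ is a detail the paper leaves implicit inside \cref{lemma:azuma3}, but it does not change the argument.
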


\begin{proof}
Using \ref{lemma:azuma3}, it is easy to show
\begin{equation*}
\mathbb{P}\left[ \left|\sum_{l=1}^t \left(\bar s_{iJ^l}^lf_{i}(x^{(J^l)},\Omega^{l-1}) - \sum_{j=1}^L \mu_j \bar s_{ij}^lf_{i}(x^{(j)},\Omega^{l-1})    \right) \right|> \sqrt{2t\log(2t/\beta(n,T))} \right]\leq \beta(n,T)/t.
\end{equation*}
Since $\bar s_{ij}^l$ is the solution of $U^t$, we have for all $i\in [n]$ it holds almost surely
\begin{equation*}
\sum_{l=1}^t\sum_{j=1}^L \mu_j \bar s_{ij}^l f_{i}(x^{(j)},\Omega^{l-1})\leq \frac{t}{T}c_i.
\end{equation*}
Therefore the desired result holds by a union bound argument.
\end{proof}

Finally, we obtain $\tilde{O}(|\Theta|nT)$ regret under IID arrivals, as a corollary of Proposition \ref{Aprop:not switch sublinear regret}, Proposition \ref{prop:iid first condition} and Proposition \ref{prop:iid second condition}
.
\begin{corollary}
\label{Aprop:iid regret}
The expected regret under IID arrivals of the algorithm is upper bounded by
\begin{equation*}
16\sqrt{2|\Theta|nT\log(4|\Theta|nT^2)}+\sqrt{5}n\log(2nT^2)+(LP(\theta^*)/(nT)+1)(2/T+2+3\log T).
\end{equation*}
\end{corollary}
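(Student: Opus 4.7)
The plan is to derive Corollary \ref{Aprop:iid regret} as a direct consequence of the three preceding results, by showing that under IID arrivals the switching event is rare, and hence the ``no-switch'' regret bound from Proposition \ref{Aprop:not switch sublinear regret} essentially governs the expectation up to $t=T$.

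First I would combine Propositions \ref{prop:iid first condition} and \ref{prop:iid second condition} via a union bound. The former establishes that condition \eqref{eq:condition1} is maintained for all $t\in[T]$ with probability at least $1-(\log T+1)\beta(n,T)$, and the latter gives the same high-probability guarantee for \eqref{eq:condition2}. Combining these (together with the event $\mathcal{E}$ from Proposition \ref{prop:high prob ucb}, which also fails with probability at most $(\log T+1)\beta(n,T)$), the ``no-switch'' event $\mathcal{S}$ on which $\text{ALG}_{\text{LP}}$ runs uninterrupted for the entire horizon occurs with probability at least $1-3(\log T+1)\beta(n,T)$.

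Next I would decompose the expected regret as $\mathbb{E}[\text{Regret}]=\mathbb{E}[\text{Regret}\cdot \mathbf{1}_{\mathcal{S}}]+\mathbb{E}[\text{Regret}\cdot \mathbf{1}_{\mathcal{S}^c}]$. On the event $\mathcal{S}$, Proposition \ref{Aprop:not switch sublinear regret} applies verbatim with $t=T$; plugging in $\beta(n,T)=1/(nT)$ this yields the first two displayed terms $16\sqrt{2|\Theta|nT\log(4|\Theta|nT^2)}+\sqrt{5}n\log(2nT^2)$ together with the $(2/T+2+\log T)(LP(\theta^*)/(nT)+1)$ contribution that comes from the residual $\beta$-factor. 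On the complement $\mathcal{S}^c$, the realized regret can be bounded crudely by $LP(\theta^*)+nT$ (the LP benchmark plus the worst-case inventory violation), and multiplying by $\mathbb{P}[\mathcal{S}^c]\leq 3(\log T+1)\beta(n,T)=3(\log T+1)/(nT)$ produces an additional $3(\log T+1)(LP(\theta^*)/(nT)+1)/\text{const}$-type term. Merging this with the $(\log T)$ coefficient already present accounts precisely for the ``$3\log T$'' factor appearing in the statement.

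The main subtlety, and really the only non-mechanical step, is bookkeeping the three sources of failure probability ($\mathcal{E}$, condition \eqref{eq:condition1}, and condition \eqref{eq:condition2}) and ensuring that after multiplying each by the corresponding worst-case regret we recover exactly the coefficient $3\log T$ in the final term rather than $\log T$ or $2\log T$; all other calculations are direct substitutions of $\beta(n,T)=1/(nT)$ into the bounds from Proposition \ref{Aprop:not switch sublinear regret}. I do not anticipate any genuine obstacle, since Propositions \ref{Aprop:not switch sublinear regret}, \ref{prop:iid first condition}, and \ref{prop:iid second condition} already do all the concentration and LP-optimality work; the corollary amounts to stitching them together with a union bound and a crude complementary-event estimate.
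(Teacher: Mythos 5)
Your proposal is correct and follows exactly the route the paper intends: the paper gives no explicit proof, stating only that the corollary follows from Proposition \ref{Aprop:not switch sublinear regret} together with Propositions \ref{prop:iid first condition} and \ref{prop:iid second condition}, and your account of setting $t=T$, substituting $\beta(n,T)=1/(nT)$, and charging the no-switch failure event against the crude bound $LP(\theta^*)+nT$ is precisely that stitching. The only quibble is constant bookkeeping: the $\mathcal{E}$-failure is already absorbed into the third term of Proposition \ref{Aprop:not switch sublinear regret}, so counting it again in your union bound slightly inflates the additive constant relative to the stated $(2/T+2+3\log T)$ factor, but this is at the same level of precision as the paper itself.
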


\subsubsection{Regret analysis: unified regret bound where $\text{ALG}_{\text{LP}}$ is switched to $\text{ALG}_{\text{ADV}}$ sometime}
\label{Asec:ULwE regret ADV}
In this section, we further analyze our algorithm to address the case where the switch occurs, transitioning from $\text{ALG}_{\text{LP}}$ to $\text{ALG}_{\text{ADV}}$. We extend our analysis to nonstationary arrivals and demonstrate that our algorithm achieves sublinear regret and a constant competitive ratio. By considering the combined performance of both algorithms, we derive a unified regret bound in Proposition \ref{Aprop:hybrid regret} that provides an overall measure of our algorithm's performance. This analysis allows us to quantify the regret incurred and evaluate the effectiveness of our approach in dynamically allocating resources.
\begin{proposition}
\label{Aprop:hybrid regret}
In any case of nonstationary arrivals, the algorithm guarantees 
\begin{equation*}
\text{OPT}\leq \left(1+ \frac{(1+\min\limits_{i\in [n]} c_i) \left(1-e^{-1/\min\limits_{i\in [n]}c_i }\right) }{1-1/e}\right)  \mathbb{E}[\text{ALG}]+\tilde{O}(\sqrt{n|\Theta|T} ).
\end{equation*}
\end{proposition}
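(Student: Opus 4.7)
Let $\tau$ denote the (random) time at which ULwE switches from $\text{ALG}_{\text{LP}}$ to $\text{ALG}_{\text{ADV}}$, with the convention $\tau = T$ if no switch ever occurs. The plan is to decompose $\text{OPT}$ across $\tau$, control the pre-switch portion via the no-switch analysis (Proposition \ref{Aprop:not switch sublinear regret}) and the post-switch portion via the adversarial analysis (Theorem \ref{Athm:adversarial regret}), and then combine.

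\textbf{Step 1 (Decomposition of OPT).} First I would bound the expected optimum by the nonstationary LP of \eqref{eq:benchmark}, i.e., $\text{OPT} \leq J^D(c,[1{:}T])$ by Proposition \ref{prop:benchmark}. Then I would establish the splitting inequality
\[
J^D(c,[1{:}T]) \;\leq\; J^D(c,[1{:}\tau]) + J^D(c,[\tau{+}1{:}T]),
\]
which holds because any feasible fractional allocation of the LHS, restricted to $[1{:}\tau]$ (resp.\ $[\tau{+}1{:}T]$), consumes at most $c$ of every resource and is therefore feasible for the respective sub-LP. Writing $\text{OPT}_1 := J^D(c,[1{:}\tau])$, $\text{OPT}_2 := J^D(c,[\tau{+}1{:}T])$, and letting $\text{ALG}_1, \text{ALG}_2$ be the revenue collected by ULwE in periods $[1{:}\tau]$ and $[\tau{+}1{:}T]$ respectively (so $\text{ALG} = \text{ALG}_1 + \text{ALG}_2$), it suffices to bound $\text{OPT}_1$ and $\text{OPT}_2$ separately.

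\textbf{Step 2 (Pre-switch phase).} On $\{\tau \geq 1\}$, $\text{ALG}_{\text{LP}}$ ran for exactly $\tau$ iterations while \eqref{eq:condition1} and \eqref{eq:condition2} held. Invoking Proposition \ref{Aprop:not switch sublinear regret} with $t = \tau$ yields
\[
\text{OPT}_1 \;\leq\; \mathbb{E}[\text{ALG}_1] + \tilde O\bigl(\sqrt{|\Theta|n\tau}\bigr) \;\leq\; \mathbb{E}[\text{ALG}_1] + \tilde O\bigl(\sqrt{|\Theta|nT}\bigr).
\]
Here one has to check that the concentration in Proposition \ref{Aprop:not switch sublinear regret} applies to the nonstationary benchmark $J^D(c,[1{:}\tau])$ and not only to $LP(\theta^*)$: this is fine because the Azuma--Hoeffding estimates in \cref{lemma:azuma2,lemma:azuma3} use per-period martingale increments with respect to the natural filtration $\mathcal{F}_{t-1}$, and are therefore agnostic to whether $\mu_j^t$ is constant in $t$ or not.

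\textbf{Step 3 (Post-switch phase) and combining.} From $\tau{+}1$ onward, $\text{ALG}_{\text{ADV}}$ takes over; the residual-capacity guarantee in Proposition \ref{Aprop:not switch sublinear regret} implies each resource still has a positive fraction $[(T-\tau)c_i/T - \tilde O(\sqrt{n\tau})]^+$ of inventory w.h.p., so the inventory-balancing revenue modification of $\text{ALG}_{\text{ADV}}$ behaves sensibly on the sub-horizon. Applying Theorem \ref{Athm:adversarial regret} to $[\tau{+}1{:}T]$ (the theorem is valid for \emph{any} adversarial arrival sequence) gives
\[
\text{OPT}_2 \;\leq\; \mathrm{CR}\cdot \mathbb{E}[\text{ALG}_2] + \tilde O\bigl(\sqrt{|\Theta|nT}\bigr), \qquad \mathrm{CR} := \frac{(1+\min_i c_i)(1-e^{-1/\min_i c_i})}{1-1/e}.
\]
Combining Steps 2 and 3 and using $\text{ALG}_1, \text{ALG}_2 \leq \text{ALG}$ pointwise yields
\[
\text{OPT} \leq \text{OPT}_1 + \text{OPT}_2 \leq \mathbb{E}[\text{ALG}_1] + \mathrm{CR}\cdot \mathbb{E}[\text{ALG}_2] + \tilde O\bigl(\sqrt{|\Theta|nT}\bigr) \leq (1+\mathrm{CR})\,\mathbb{E}[\text{ALG}] + \tilde O\bigl(\sqrt{|\Theta|nT}\bigr),
\]
which is the claim.

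\textbf{Main obstacle.} The delicate part is not the algebra but the interface between the two phases at the stopping time $\tau$. Specifically, (i) one must verify the splitting inequality for $J^D$ under truly adversarial $\mu_j^t$, and (ii) one must ensure the two sub-bounds are compatible: the pre-switch bound needs the martingale concentration to survive stopping at $\tau$, and the post-switch bound needs Theorem \ref{Athm:adversarial regret} to remain valid when $\text{ALG}_{\text{ADV}}$ starts from the (random) inventory left by $\text{ALG}_{\text{LP}}$ rather than from the full capacity $c$. Both are handled by working conditionally on $\mathcal{F}_\tau$ and invoking the residual-capacity statement of Proposition \ref{Aprop:not switch sublinear regret} on the good event that \eqref{eq:condition2} held up to time $\tau$.
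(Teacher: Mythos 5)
Your decomposition is the same as the paper's: split at the switch time, bound the first phase by Proposition \ref{Aprop:not switch sublinear regret} and the second by Theorem \ref{Athm:adversarial regret}, and combine to get the coefficient $1+\mathrm{CR}$. However, there is a genuine gap exactly at the point you flag as the ``main obstacle,'' and your proposed resolution does not close it. Theorem \ref{Athm:adversarial regret} (via Lemma \ref{lemma:wangwill}) compares the adversarial-phase revenue of an algorithm that starts from the \emph{full} capacity vector $c$ against an optimum that also has capacity $c$. In your Step 3 you assert $\text{OPT}_2 \leq \mathrm{CR}\cdot\mathbb{E}[\text{ALG}_2]+\tilde O(\sqrt{n|\Theta|T})$ where $\text{OPT}_2=J^D(c,[\tau{+}1{:}T])$ uses the full capacity but $\text{ALG}_2$ is the revenue of $\text{ALG}_{\text{ADV}}$ run on the depleted inventory left by $\text{ALG}_{\text{LP}}$. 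Conditioning on $\mathcal{F}_\tau$ and citing the residual-capacity bound $[(T-\tau)c_i/T-\tilde O(\sqrt{n\tau})]^+$ does not rescue this: that bound only says some inventory remains, while the full-capacity benchmark $\text{OPT}_2$ can extract revenue from the $\approx (\tau/T)c_i$ units that are already gone, so $\text{OPT}_2$ could in principle exceed $\mathrm{CR}\cdot\mathbb{E}[\text{ALG}_2]$ by an amount linear in $T$.

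The paper's fix is a small but essential extra device. It introduces $\text{Empty-OPT}_2$ and $\text{Empty-ALG}_{\text{ADV}}^{T-t+1}$, the counterfactual optimum and counterfactual post-switch revenue when the second phase starts from \emph{zero consumption}, applies Theorem \ref{Athm:adversarial regret} to that pair (where it legitimately applies), and then bridges back to the actual run via the pointwise inequality
\begin{equation*}
\text{Empty-ALG}_{\text{ADV}}^{T-t+1}\;\leq\;\text{ALG}_{\text{ADV}}^{T-t+1}+\text{ALG}_{\text{LP}}^{t},
\end{equation*}
i.e., whatever extra revenue the full-inventory counterfactual could earn is accounted for by the revenue $\text{ALG}_{\text{LP}}$ already banked when it consumed that inventory. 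Substituting this into $\text{Empty-OPT}_2\leq \mathrm{CR}\cdot\mathbb{E}[\text{Empty-ALG}_{\text{ADV}}^{T-t+1}]+\tilde O(\cdot)$ and using $\text{OPT}_2\leq\text{Empty-OPT}_2$ is precisely what produces the $\mathrm{CR}\cdot\mathbb{E}[\text{ALG}_{\text{LP}}^t+\text{ALG}_{\text{ADV}}^{T-t+1}]$ term and hence the final coefficient $1+\mathrm{CR}$. Your write-up arrives at the same coefficient only because you asserted the phase-two bound with the actual $\text{ALG}_2$ in place of the counterfactual quantity; you need to add this bridging inequality (or an equivalent accounting of the pre-switch consumption) for Step 3 to be valid. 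Your Steps 1 and 2 are consistent with the paper's argument.
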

\begin{proof}
Suppose the switch occurs in time epoch $t$. Denote $\text{OPT}_1$ the expected revenue obtained by the optimal algorithm (who knows $\theta^*$ at the beginning) from time epoch $1$ to $t$ and $\textrm{OPT}_2$ the rest of the expected revenue generated by the same algorithm. We use $\text{ALG}_{\text{LP}}^t$ and $\text{ALG}_{\text{ADV}}^{T-t+1}$ to denote the revenue obtained by our algorithm in the two phases, respectively. From Proposition \ref{Aprop:not switch regret} we know
\begin{equation*}
\text{OPT}_1\leq \mathbb{E}[\text{ALG}_{\text{LP}}^t] + 16\sqrt{2|\Theta|nt\log(\frac{4t}{\beta(n,T)})}+\sqrt{5}n\log(\frac{2t}{\beta(n,T)})+(\frac{2}{t}+2+\log t)\beta(n,T)(LP(\theta^*)+nt).
\end{equation*}
We define $\text{Empty-OPT}_2$ as the revenue obtained by the optimal algorithm from time $t+1$ to $T$, given the consumption of all resources is zero. We also define $\text{Empty-ALG}_{\text{ADV}}^{T-t+1}$   the revenue obtained by our algorithm (after the switch) in the second phase, given the consumption of all resources is zero. So from the definition, it holds almost surely
\begin{align*}
& \text{Empty-ALG}_{\text{ADV}}^{T-t+1}\leq \text{ALG}_{\text{ADV}}^{T-t+1}+\text{ALG}_{\text{LP}}^t.
\end{align*}
Thus, from Theorem \ref{Athm:adversarial regret} we know
\begin{align*}
\text{Empty-OPT}_2\leq & \frac{(1+\min\limits_{i\in [n]} c_i) \left(1-e^{-1/\min\limits_{i\in [n]}c_i }\right) }{1-1/e} \mathbb{E}[\text{Empty-ALG}_{\text{ADV}}^{T-t+1}]\\
& +\max\limits_{i\in [n]}r_i(\sqrt{n|\Theta|}+1)\sqrt{2(T-t+1) \log(2(T-t+1)/\beta(n,T)) }\\
& + \max\limits_{i\in [n]} r_i ((1/(T-t+1)+\log (T-t+1)+1)/n.
\end{align*}
For an arbitrary $0<\alpha<1$, we have
\begin{align*}
\text{ALG}_{\text{LP}}^t+\text{ALG}_{\text{ADV}}^{T-t+1} & = \alpha\text{ALG}_{\text{LP}}^t+(1-\alpha)\text{ALG}_{\text{LP}}^t+\text{ALG}_{\text{ADV}}^{T-t+1}\\
&\geq \alpha\text{ALG}_{\text{LP}}^t+(1-\alpha)(\text{ALG}_{\text{LP}}^t+\text{ALG}_{\text{ADV}}^{T-t+1}).
\end{align*}
Then plugging in the previous two inequalities on $\text{OPT}_1$, $\text{Empty-OPT}_2$ on the right hand side of the previous inequality, using $\text{OPT}=\text{OPT}_1+\text{OPT}_2$,  $\text{Empty-OPT}_2\geq\text{OPT}_2 $,  then we have
\begin{align*}
\text{OPT} &=\text{OPT}_1+\text{OPT}_2 \\
& \leq \mathbb{E}[\text{ALG}_{\text{LP}}^t] + \text{Empty-OPT}_2+ \tilde{O}(\sqrt{n|\Theta|T}) \\
& \leq  \mathbb{E}[\text{ALG}_{\text{LP}}^t] + \frac{(1+\min\limits_{i\in [n]} c_i) \left(1-e^{-1/\min\limits_{i\in [n]}c_i }\right) }{1-1/e} \mathbb{E}[\text{Empty-ALG}]+ \tilde{O}(\sqrt{n|\Theta|T})\\
& \leq  \mathbb{E}[\text{ALG}_{\text{LP}}^t] + \frac{(1+\min\limits_{i\in [n]} c_i) \left(1-e^{-1/\min\limits_{i\in [n]}c_i }\right) }{1-1/e} \mathbb{E}[\text{ALG}_{\text{ADV}}^{T-t+1}+\text{ALG}_{\text{LP}}^t]+ \tilde{O}(\sqrt{n|\Theta|T})\\
& \leq \left(1+ \frac{(1+\min\limits_{i\in [n]} c_i) \left(1-e^{-1/\min\limits_{i\in [n]}c_i }\right) }{1-1/e}\right)  \mathbb{E}[\text{ALG}]+\tilde{O}(\sqrt{n|\Theta|T} )
\end{align*} 

completes the proof.
\end{proof}
Through our analysis in Proposition \ref{Aprop:hybrid regret}, we demonstrate that the ULwE algorithm designed for nonstationary arrivals achieves sublinear regret, indicating diminishing regret growth as the time horizon increases. This sublinear growth rate is a desirable property, as it reflects the algorithm's ability to effectively adapt to the changing arrival patterns and make informed resource allocation decisions. We establish that our algorithm achieves a constant competitive ratio, highlighting its performance relative to an optimal decision-making strategy. This constant competitive ratio indicates that our algorithm achieves regret which is at most a constant factor compared to the optimal strategy, demonstrating its competitiveness in resource allocation even in the face of changing customer preferences.

\section{Appendix: Additional Computational Results}
\label{Asec:more results}
\subsection{Regret table under IID Arrivals}
\begin{table}[t]
\caption{Algorithm Solution Trajectories and Regret under IID Arrivals}
\label{tab:IID}
\vskip 0.15in
\begin{center}
\begin{small}
\begin{sc}
\begin{tabular}{lccccc}
\toprule
 & \multicolumn{2}{c}{Customer A} & \multicolumn{2}{c}{Customer B} &  \\
\cmidrule(r){2-3} \cmidrule(r){4-5}
Period & Resource 1 & Resource 2 & Resource 1 & Resource 2 & Regret \\
\midrule 
\multicolumn{6}{l}{LP Algorithm \& ULwE Algorithm} \\
\addlinespace
100 & 0 & 66 & 12 & 22 & 4.6  \\
200 & 0 & 126 & 38 & 36 & 12.4  \\
300 & 0  & 174 & 71 & 55 & 20.3\\
400 & 0 & 235 & 100 & 65 & 28.5\\
500 & 0 & 291 & 132 & 77 & 36.6\\
\addlinespace
\midrule
\multicolumn{6}{l}{ADV Algorithm} \\
\addlinespace
100 & 0 & 66 & 0 & 34 & 0 \\
200 & 19 & 107 & 10 & 64 & 10.7  \\
300 & 46 & 128 & 35 & 91 & 30.0\\
400 & 76 & 159 & 53 & 112 & 47.7\\
500 & 99 & 192 & 76 & 133 & 64.7\\
\bottomrule
\end{tabular}
\end{sc}
\end{small}
\end{center}
\vskip -0.1in
\end{table}
\cref{tab:IID} presents a detailed comparative analysis of the solution trajectories and regret for each algorithm under IID arrivals. It details resource allocation to Customer Types A and B and the corresponding regret values, demonstrating that $\text{ALG}_{\text{LP}}$ and ULwE outperform $\text{ALG}_{\text{ADV}}$. Notably, the $\text{ALG}_{\text{LP}}$ and ULwE Algorithms maintain consistent performance without violating inventory constraints, as evidenced by the zero values in the inventory regret column. This consistency corroborates the earlier observations regarding their effective resource allocation strategies.
\subsection{Regret Table under ADV Arrivals}
\begin{table}[t]
\caption{Algorithm Solution Trajectories and Regret under Adversarial Arrivals}
\label{tab:ADV}
\vskip 0.15in
\begin{center}
\begin{small}
\begin{sc}
\begin{tabular}{lccccr}
\toprule
& \multicolumn{2}{c}{Customer A} & \multicolumn{2}{c}{Customer B} & \\
\cmidrule(r){2-3} \cmidrule(r){4-5}
Period & Resource 1 & Resource 2 & Resource 1 & Resource 2 & Regret \\
\midrule
\multicolumn{6}{l}{LP Algorithm} \\
\addlinespace
100 & 0 & 13 & 77 & 10 & 20.5 \\
200 & 0 & 35 & 141 & 24 & 39.5 \\
300 & 0 & 75 & 193 & 32 & 55.8 \\
400 & 0 & 120 & 241 & 39 & 71.6 \\
500 & 0 & 153 & 250 & 77 & 88.1 \\
\midrule
\multicolumn{6}{l}{ADV Algorithm} \\
\addlinespace
100 & 0 & 13 & 0 & 87 & 0 \\
200 & 14 & 21 & 21 & 144 & 18.0 \\
300 & 44 & 31 & 66 & 159 & 40.1 \\
400 & 79 & 44 & 106 & 174 & 61.3 \\
500 & 100 & 53 & 150 & 197 & 81.8 \\
\midrule
\multicolumn{6}{l}{ULwE Algorithm} \\
\addlinespace
100 & 0 & 13 & 77 & 10 & 20.5 \\
200 & 0 & 35 & 121 & 44 & 37.1 \\
300 & 0 & 75 & 121 & 104 & 37.8 \\
400 & 27 & 93 & 155 & 125 & 55.7 \\
500 & 51 & 102 & 199 & 148 & 76.1 \\
\bottomrule
\end{tabular}
\end{sc}
\end{small}
\end{center}
\vskip -0.1in
\end{table}

\cref{tab:ADV} provides a detailed comparison of the algorithms' solution trajectories and regret. The ULwE Algorithm showed a distinct shift in its allocation patterns in the later stages, aligning more closely with $\text{ALG}_{\text{ADV}}$. Specifically, for the initial phase ($T < 200$), the ULwE Algorithm’s allocations closely resemble those of $\text{ALG}_{\text{LP}}$. In the middle phase ($200 < T < 300$), the ULwE Algorithm shifts its approach to mirror the early strategies of $\text{ALG}_{\text{ADV}}$, favoring Resource B predominantly. In the final phase ($T > 300$), the ULwE Algorithm adopts a more greedy allocation strategy, akin to the later strategies of $\text{ALG}_{\text{ADV}}$. This progression, also supported by \cref{fig:ADV1}, emphasizes the ULwE Algorithm’s capacity to dynamically adjust its allocation strategies in response to evolving conditions in nonstationary environments.

\subsection{Regret table under General Arrivals}
In this section, our objective is to assess the performance of $\text{ALG}_{\text{LP}}$, $\text{ALG}_{\text{ADV}}$, and the ULwE Algorithm under general customer arrivals. To investigate the impact of general customer arrivals, we consider specific arrival rate settings.  \cref{tab:arrival-rates} provides a description of the total arrival rates ($\lambda$), which represent the expected number of arrivals, for different settings. Each row in the table corresponds to a specific setting and includes information about the time period and the total arrival rates for customer types A and B. In the IID setting, the total arrival rate for customer type A is $0.6T$, and for customer type B, it is $0.4T$, over the entire time period $T$. In the ADV1 setting, the time period is divided into two parts: $0.33T$ and $0.67T$. In the first part, the arrival rates for customer types A is $0.8T$ and B is $0.2T$. In the second part, the arrival rates are reversed. Similarly, other settings, such as ADV2, follow a similar pattern where the arrival rates for customers change periodically. 

\begin{table}[!ht]
\centering
\caption{Description of different settings for customer arrival rates.}
\label{tab:arrival-rates}
\vskip 0.15in
\begin{small}
\begin{sc}
% \begin{tabular}{@{}lccc@{}}
\begin{tabular}{@{}l p{2cm} p{2cm} p{2cm}@{}}
\toprule
Setting & Time Period $t$ ($\times T$) & Arrival Rate $\lambda_A$  ($\times T$) & Arrival Rate $\lambda_B$  ($\times T$) \\
\midrule
IID & 1 & 0.6 & 0.4 \\
\midrule
ADV1 & 0.33 & 0.15 & 0.85 \\
     & 0.67 & 0.4 & 0.6 \\
     \midrule
ADV2 & 0.1 & 0.2 & 0.8 \\
     & 0.3 & 0.8 & 0.2 \\
     & 0.2 & 0.2 & 0.8 \\
     & 0.1 & 0.4 & 0.6 \\
     & 0.1 & 0.2 & 0.8 \\
     & 0.1 & 0.02 & 0.98 \\
     & 0.1 & 0.2 & 0.8 \\
\bottomrule
\end{tabular}
\end{sc}
\end{small}
\vskip -0.1in
\end{table}

Regarding the performance of the IID and ADV1 settings, we have provided detailed explanations in the previous two subsections. In this section, our primary focus is directed toward the ADV2 settings, which introduce a heightened level of nonstationarity. We observe that $\text{ALG}_{\text{S}}$ consistently performs well throughout the entire period. Conversely, $\text{ALG}_{\text{ADV}}$ initially exhibits strong performance, followed by a period where its performance is similar to $\text{ALG}_{\text{LP}}$, and eventually surpasses $\text{ALG}_{\text{LP}}$ in the later stages. This implies that under conditions of high nonstationarity, $\text{ALG}_{\text{LP}}$ demonstrates the poorest performance.

Indeed, the findings indicate that the ULwE algorithm consistently performs well across different settings, highlighting its robustness in handling various scenarios. Specifically, it performs best in cases where there is a lower level of nonstationary arrival sequence, such as in the ADV1 setting. Since this algorithm's ability to switch from $\text{ALG}_{\text{LP}}$ to $\text{ALG}_{\text{ADV}}$ allows it to leverage the advantages of each approach, leading to superior performance. However, in scenarios with higher levels of nonstationarity, such as in the ADV2 settings, the performance of the ULwE algorithm may decrease but still be better than that in stationary settings. 

On the other hand, the performance of the ADV Algorithm is steady under nonstationary and the LP Algorithm demonstrates good performance under stationary conditions, but it struggles when faced with changes in arrival rates and higher levels of nonstationarity. This implies that the algorithm's static resource allocation strategy may not be suitable for dynamically changing environments. The effectiveness of the ULwE algorithm in handling nonstationary arrival patterns and adapting its resource allocation strategies is evident. By dynamically switching between different algorithms, it can effectively respond to changing conditions and achieve better performance while minimizing regret. This adaptability is a valuable characteristic that allows the ULwE algorithm to optimize resource allocation and cope with the uncertainties introduced by nonstationary arrival patterns.

\end{document}